\newcommand{\BlackBox}{\rule{1.5ex}{1.5ex}}  %
\newenvironment{proof}{\par\noindent{\bf Proof\ }}{\hfill\BlackBox\\[2mm]}
\newtheorem{theorem}{Theorem}
\newtheorem{proposition}{Proposition}
\newtheorem{remark}{Remark}
\newtheorem{definition}{Definition}
\newcommand{\ie}{\emph{i.e.}\xspace}
\newcommand{\eg}{\emph{e.g.}\xspace}
\def\1{\bm{1}}
\def\gA{{\mathcal{A}}}
\def\gB{{\mathcal{B}}}
\def\gC{{\mathcal{C}}}
\def\gD{{\mathcal{D}}}
\def\gE{{\mathcal{E}}}
\def\gF{{\mathcal{F}}}
\def\gK{{\mathcal{K}}}
\def\gL{{\mathcal{L}}}
\def\gM{{\mathcal{M}}}
\def\gN{{\mathcal{N}}}
\def\gP{{\mathcal{P}}}
\def\gQ{{\mathcal{Q}}}
\def\gR{{\mathcal{R}}}
\def\gS{{\mathcal{S}}}
\def\gT{{\mathcal{T}}}
\def\gU{{\mathcal{U}}}
\def\gX{{\mathcal{X}}}
\def\gY{{\mathcal{Y}}}
\def\gZ{{\mathcal{Z}}}
\DeclareMathOperator*{\E}{\mathbb{E}}
\DeclareMathOperator{\HH}{\mathbb{H}}
\DeclareMathOperator{\Var}{\rm{Var}}
\newcommand{\R}{\mathbb{R}}
\newcommand{\D}{\mathrm{D}}
\DeclareMathOperator*{\argmax}{arg\,max}
\DeclareMathOperator*{\argmin}{arg\,min}
\DeclareMathOperator*{\arginf}{arg\,inf}
\DeclareMathOperator*{\argsup}{arg\,sup}
\DeclareMathOperator*{\minimize}{minimize}
\DeclareMathOperator*{\maximize}{maximize}
\DeclareMathOperator*{\subjectto}{subject\;to}
\DeclareMathOperator*{\st}{s.t.}
\DeclarePairedDelimiterX{\infdivx}[2]{(}{)}{%
  #1\;\delimsize|\delimsize|\;#2%
}
\newcommand{\kl}[2]{\ensuremath{{\rm D}_{\rm KL}\infdivx{#1}{#2}}\xspace}
\DeclareMathOperator{\sign}{sign}
\DeclareMathOperator{\ELBO}{ELBO}
\newcommand{\defeq}{\vcentcolon=}
\newcommand{\eqdef}{=\vcentcolon}
\definecolor{linkcolor}{RGB}{74, 102, 146}
\definecolor{lightpurple}{RGB}{168, 141, 201}
\renewcommand*{\backref}[1]{}
\renewcommand*{\backrefalt}[4]{%
  \ifcase #1 \or (Cited on page~#2.)
  \else (Cited on pages~#2.)
  \fi%
}
\Crefname{equation}{Eq.}{Eqs.}
\newcommand\pro{\item[$+$]}
\newcommand\con{\item[$-$]}
\definecolor{code_green}{rgb}{0,0.6,0}
\definecolor{code_gray}{rgb}{0.5,0.5,0.5}
\definecolor{code_purple}{rgb}{.5, .21, .68}
\newcommand{\cblock}[3]{
  \hspace{-1.5mm}
  \begin{tikzpicture}
    [
    node/.style={square, minimum size=10mm, thick, line width=0pt},
    ]
    \node[fill={rgb,255:red,#1;green,#2;blue,#3}] () [] {};
  \end{tikzpicture}%
}
\begin{document}

\begin{titlepage}
\thispagestyle{empty}
\begin{center}
\textbf{\Large Tutorial on amortized optimization} \\
{\large Learning to optimize over continuous spaces} \\~\\
Brandon Amos, \emph{Meta AI}
\end{center}

\vspace{0.9cm}
\noindent\textbf{Abstract.} \\
Optimization is a ubiquitous modeling tool and is often
deployed in settings which repeatedly solve similar
instances of the same problem.
Amortized optimization methods use learning to predict the solutions to
problems in these settings, exploiting the shared structure
between similar problem instances.
These methods have been crucial in variational inference
and reinforcement learning and are capable of solving
optimization problems many orders of magnitude faster
than traditional optimization methods that do not use amortization.
This tutorial presents an introduction to the amortized optimization
foundations behind these advancements and overviews
their applications in variational inference, sparse coding,
gradient-based meta-learning, control, reinforcement learning,
convex optimization, optimal transport, and deep equilibrium networks.
The source code for this tutorial is available at
{\footnotesize\url{https://github.com/facebookresearch/amortized-optimization-tutorial}}.
\end{titlepage}

\setcounter{tocdepth}{1}
\tableofcontents

\chapter{Introduction}
\begin{figure}[t]
\centering
\includegraphics[width=2in]{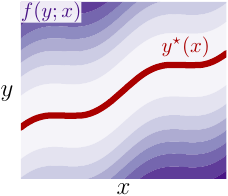}
\caption{Illustration of the parametric optimization problem
  in \cref{eq:opt}.
  Each context $x$ parameterizes an
  optimization problem that the objective $f(y; x)$ depends on.
  The contours show the values of the objectives where
  darker colors indicate higher values.
  The objective is then minimized over $y$ and the resulting
  solution $y^\star(x)$ is shown in red.
  In other words, each vertical slice is an optimization problem
  and this visualization shows a continuum of optimization problems.
}
\label{fig:opt}
\end{figure}
This tutorial studies the use of machine learning
to improve repeated solves of parametric optimization
problems of the form
\begin{equation}
  y^\star(x) \in \argmin_y f(y; x),
  \label{eq:opt}
\end{equation}
where the \emph{non-convex} objective
$f: \gY\times \gX\rightarrow \R$
takes a \emph{context} or \emph{parameterization}
$x\in\gX$ which can be continuous or discrete,
and the \emph{continuous, unconstrained domain} of
the problem is $y\in\gY=\R^n$.
\Cref{eq:opt} implicitly defines a \emph{solution}
$y^\star(x)\in\gY$.
In most of the applications considered later in
\cref{sec:apps}, $y^\star(x)$ is unique and smooth,
\ie, the solution continuously changes in a
connected way as the context changes, as illustrated
in \cref{fig:opt}.

Parametric optimization problems such as \cref{eq:opt}
have been studied for decades
\citep{bank1982non,fiacco1990sensitivity,shapiro2003sensitivity,klatte2006nonsmooth,bonnans2013perturbation,still2018lectures,fiacco2020mathematical}
with a focus on sensitivity analysis.
The general formulation in \cref{eq:opt} captures many
tasks arising in physics, engineering, mathematics, control,
inverse modeling, and machine learning.
For example, when controlling a continuous robotic system,
$\gX$ is the space of \emph{observations} or \emph{states},
\eg, angular positions and velocities describing
the configuration of the system,
the domain $\gY\defeq \gU$ is the \emph{control space},
\eg, torques to apply to each actuated joint,
and $f(u; x)\defeq -Q(u, x)$ is the \emph{control cost}
or the negated \emph{Q-value} of the state-action tuple $(x,u)$,
\eg, to reach a goal location or to maximize the velocity.
For every encountered state $x$, the system is controlled
by solving an optimization problem in the form of \cref{eq:opt}.
While $\gY=\R^n$ is over a deterministic real-valued space
in \cref{eq:opt}, the formulation can also capture
stochastic optimization problems as discussed in
\cref{sec:extensions:sto}. For example,
\Cref{sec:apps:avi} optimizes over the (real-valued)
parameters of a variational distribution and
\cref{sec:apps:ctrl} optimizes over the (real-valued)
parameters of a stochastic policy for control and
reinforcement learning.

\begin{figure}[t]
  \centering
  \includegraphics[width=\textwidth]{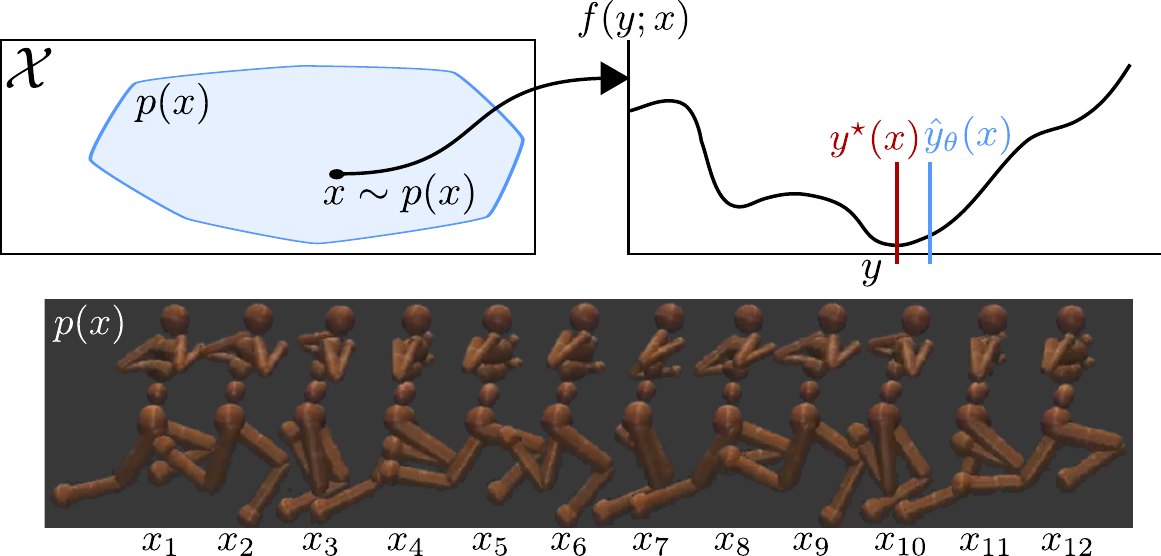}
  \caption{An amortized optimization method learns
    a model $\hat y_\theta$ to predict the minimum
    of an \emph{objective} $f(y;x)$ to a parameterized
    optimization problem, as in \cref{eq:opt},
    which depends on a \emph{context} $x$.
    For example, in control,
    the context space $\gX$ is the state space of the system,
    \eg angular positions and velocities describing
    the configuration of the system,
    the domain $\gY\defeq\gU$ is the control space,
    \eg torques to apply to each actuated joint,
    the cost (or negated value) of a state-action
    pair is $f(u; x)\defeq -Q(x,u)$, and the state distribution is $p(x)$.
    For an encountered state $x$,
    many reinforcement learning policies $\pi_\theta(x)\defeq\hat y_\theta(x)$
    amortize the solution to the underlying control problem
    with true solution $y^\star(x)$.
    This humanoid policy was obtained with the model-based
    stochastic value gradient in \citet{amos2021model}.
  }
  \label{fig:overview}
\end{figure}

Optimization problems such as \cref{eq:opt} quickly become a
computational bottleneck in systems they are a part of.
These problems often do not have a closed-form
analytic solution and are instead solved with
approximate numerical methods which iteratively
search for the solution.
This computational problem has led to many specialized
solvers that leverage domain-specific insights to
deliver fast solves.
Specialized algorithms are
especially prevalent in convex optimization methods for
linear programming, quadratic programming, cone programming,
and control and use theoretical insights of the problem
structure to bring empirical gains of computational
improvements and improved convergence
\citep{boyd2004convex,nocedal2006numerical,bertsekas2015convex,bubeck2015convex,nesterov2018lectures}.

Mostly separate from optimization research and algorithmic advancements,
the machine learning community has focused on developing
generic function approximation methods for estimating non-trivial
high-dimensional mappings from data
\citep{murphy2012machine,goodfellow2016deep,deisenroth2020mathematics}.
While machine learning models are often used to reconstruct mappings
from data, \eg for supervised classification or regression where
the targets are given by human annotations.
Many computational advancements on the software and hardware
have been developed in recent years to make the prediction time fast:
the forward pass of a neural network generating a prediction
can execute in milliseconds on a graphics processing unit.

\textbf{Overview.}
This tutorial studies the use of machine learning models to
rapidly predict the solutions to the optimization problem in
\cref{eq:opt}, which is referred to as
\emph{amortized optimization} or \emph{learning to optimize}.
Amortized optimization methods are capable of significantly
improving the computational time of
classical algorithms \emph{on a focused subset of problems}.
This is because the model is able to learn about the
solution mapping from $x$ to $y^\star(x)$ that classical
optimization methods usually do not assume access to.
My goal in writing this is to explore a unified perspective
of modeling approaches of amortized optimization in
\cref{sec:foundations} to help draw connections
between the applications in \cref{sec:apps},
\eg between amortized variational inference, meta-learning,
and policy learning for control and reinforcement learning,
sparse coding, convex optimization, optimal transport,
and deep equilibrium networks.
These topics have historically been studied in isolation
without connections between their amortization components.
\Cref{sec:implementation} presents a computational tour
through source code for variational inference, policy learning,
and a spherical optimization problem and
\cref{sec:discussion} concludes with a discussion of
challenges, limitations, open problems, and related work.

\textbf{How much does amortization help?}
Amortized optimization has been revolutionary to many fields,
especially including variational inference and reinforcement
learning.
\Cref{fig:vae-performance} shows that the amortization component
of a variational autoencoder trained on MNIST is \textbf{25000}
times faster (0.4ms vs.~8 seconds!) than solving a batch of
1024 optimization problems from scratch to obtain a
solution of the same quality.
These optimization problems are solved in every training iteration
and can become a significant bottleneck if they are
inefficiently solved.
If the model is being trained for millions of iterations,
then the difference between solving the optimization problem
in 0.4ms vs.~8 seconds makes the difference between the
entire training process finishing in a few hours or a month.

\textbf{A historic note: amortization in control and statistical inference.}
Amortized optimization has arisen in many fields as a result
to practical optimization problems being non-convex and not
having easily computed, or closed-form solutions.
Continuous control problems with linear dynamics and quadratic
cost are convex and often easily solved with the linear
quadratic regulator (LQR) and many non-convex extensions and
iterative applications of LQR have been successful over
the decades, but becomes increasingly infeasible on
non-trivial systems and in reinforcement learning settings
where the policy often needs to be rapidly executed.
For this reason, the reinforcement learning community almost
exclusively amortizes control optimization problems with
a learned policy \citep{sutton2018reinforcement}.
Related to this throughline in control and reinforcement learning,
many statistical optimization problems have closed
form solutions for known distributions such as Gaussians.
For example, the original Kalman filter is defined with Gaussians
and the updates take a closed form. The extended Kalman filter
generalizes the distributions to non-Gaussians, but the updates
are in general no longer available analytically and need to be
computationally estimated.
\citet{marino2018general} shows how amortization helps improve
this computationally challenging step.
Both of these control and statistical settings start with a
simple setting with analytic solutions to optimization problems,
generalize to more challenging optimization problems
that need to be computationally estimated, and then
add back some computational tractability with amortized optimization.

\chapter{Amortized optimization foundations}
\label{sec:foundations}

\begin{figure}[ht!]
  \centering
  \resizebox{\textwidth}{!}{
  \begin{tikzpicture}[every node/.style={align=left,anchor=west}]
    \node[align=right,text width=2.4in] at (0.,0.) (model) {Amortization model $\hat y_\theta$ (\cref{sec:model})};
    \node[right=.4in of model,yshift=2.5mm,text width=3.5in] (full) {Fully-amortized (\cref{sec:model:full}): no objective access};
    \node[below=-1mm of full,text width=3.5in] (semi) {Semi-amortized (\cref{sec:model:semi}): accesses objective};
    \node[below=.4in of model,align=right,text width=2.4in] (loss) {Amortization loss $\gL$ (\cref{sec:learning})};
    \node[right=.4in of loss,yshift=2.5mm,text width=3.5in] (regression) {Regression (\cref{sec:learning:reg}): $\E_{p(x)} \|\hat y_\theta(x)-y^\star(x)\|_2^2$};
    \node[below=-1mm of regression,text width=3.5in] (objective) {Objective (\cref{sec:learning:grad}): $\E_{p(x)} f(\hat y_\theta(x))$};
    \draw[->] (model.east) -- (full.west);
    \draw[->] (model.east) -- (semi.west);
    \draw[->] (loss.east) -- (regression.west);
    \draw[->] (loss.east) -- (objective.west);
  \end{tikzpicture}}
  \label{fig:foundations}
  \caption{Overview of amortized optimization modeling and loss choices.}
\end{figure}
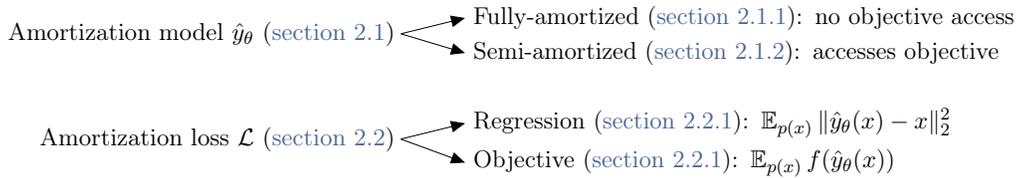

The machine learning, statistics, and optimization
communities are exploring methods of \emph{learning
to optimize} to obtain fast solvers for \cref{eq:opt}.
I will refer to these methods as \emph{amortized optimization}
as they \emph{amortize} the cost of solving the
optimization problems across many contexts to approximate
the solution mapping $y^\star$.
Amortized optimization is promising because in many applications,
there are significant correlations and structure between the
solutions which show up in $y^\star$ that a model can learn.
This tutorial follows \citet{shu2017amortized} for defining the core
foundation of amortized optimization.

\begin{definition}
  An \emph{amortized optimization method} to solve \cref{eq:opt}
  can be represented by
  $\gA\defeq (f, \gY, \gX, p(x), \hat y_\theta, \gL)$,
  where
  $f: \gY\times\gX \rightarrow \R$ is the
  unconstrained \emph{objective} to optimize,
  $\gY$ is the \emph{domain},
  $\gX$ is the \emph{context space},
  $p(x)$ is the \emph{probability distribution over contexts} to optimize,
  $\hat y_\theta: \gX\rightarrow \gY$ is the \emph{amortization model}
  parameterized by $\theta$
  which is learned by optimizing a \emph{loss}
  defined on all the components
  $\gL(f, \gY, \gX, p(x), \hat y_\theta)$.
  \label{def:amor}
\end{definition}

The objective $f$ and domain $\gY$ arise from
the problem setting along with the context space
$\gX$ and distribution over it $p(x)$, and
the remaining definitions of the model
$\hat y_\theta$ and loss $\gL$ are application-specific
design decisions that \cref{sec:model,sec:learning}
opens up.
These sections present the modeling and loss foundations
for the core problem in \cref{def:amor} agnostic
of specific downstream applications that will use them.
The key choices highlighted in \cref{fig:foundations}
are how much information
1) the model $\hat y_\theta$ has about the
objective $f$ (fully- vs.~semi-amortized), and
2) the loss has about the true solution $y^\star$
(regression- vs.~objective-based).
\Cref{fig:overview} instantiates these components
for amortizing the control of a robotic system.
The model $\hat y_\theta$ solves the solution mapping
$y^\star$ simultaneously for all contexts.
The methods here usually assume the solution mapping
$y^\star$ to be almost-everywhere smooth
and well-behaved.
The best modeling approach is an open research topic
as there are many tradeoffs,
and many specialized insights from the application domain
can significantly improve the performance.
The generalization capacity along with the model's
convergence guarantees are challenging topics
which \cref{sec:generalization} covers in more detail.

\textbf{Origins of the term ``amortization'' for optimization.}
The word ``amortization'' generally means to spread out costs and
thus ``amortized optimization'' usually means to spread out
computational costs of the optimization process.
The term originated in the variational inference community
for inference optimization
\citep{kingma2013auto,rezende2014stochastic,stuhlmuller2013learning,gershman2014amortized,webb2017faithful,ravi2018amortized,cremer2018inference,wu2020meta},
and is used more generally in
\citet{xue2020amortized,sercu2021neural,xiao2021amortized}.
\citet[p.~28]{marino2021learned} give further background on the
origins and uses of amortization.
Concurrent to these developments, other communities have
independently developed amortization methods without referring to them
by the same terminology and analysis, such as in reinforcement learning,
policy optimization, and sparse coding ---
\cref{sec:apps} connects all of these under \cref{def:amor}.

\textbf{Conventions and notation.}
The context space $\gX$ represents the
sample space of a probability space that
the distribution $p(x)$ is defined on,
assuming it is Borel if not otherwise specified.
For a function $f: \R^n\rightarrow\R$ in standard Euclidean space,
$\nabla_x f(\bar x)\in \R^n$ denotes the \emph{gradient} at a point $\bar x$
and $\nabla^2_x f(\bar x)\in\R^{n\times n}$ denotes the \emph{Hessian}.
For $f: \R^n\rightarrow\R^m$, $\D_x f(\bar x)\in\R^{m\times n}$
represents the \emph{Jacobian} at $\bar x$ with
entries $[\D_x f(\bar x)]_{ij}\defeq \frac{\partial f_i}{\partial x_j}(\bar x)$.
I abbreviate the loss to $\gL(\hat y_\theta)$
when the other components can be inferred from the
surrounding text and prefer the term ``context'' for $x$ instead of
``parameterization'' to make the distinction between the
$x$-parameterized optimization problem
and the $\theta$-parameterized model clear.
I use ``;'' as separation in $f(y; x)$ to emphasize the separation
between the domain variables $y$ that \cref{eq:opt}
optimizes over from the context ones $x$ that remain fixed.
A model's parameters $\theta$ are usually subscripts
as $h_\theta(x)$ but I will equivalently write
$h(x; \theta)$ sometimes.

\section{Defining the model $\hat y_\theta(x)$}
\label{sec:model}
The model $\hat y_\theta(x): \gX\times\Theta\rightarrow \gY$
predicts a solution to \cref{eq:opt}.
In many applications, the best model design is an active
area of research that is searching for models that are
expressive and more computationally efficient than the
algorithms classically used to solve the optimization problem.
\Cref{sec:model:full} starts simple with \emph{fully-amortized} models
that approximate the entire solution
to the optimization problem with a single black-box model.
Then \cref{sec:model:semi} shows how to open up the model
to include more information about the optimization problem
that can leverage domain knowledge with \emph{semi-amortized} models.

\subsection{Fully-amortized models}
\label{sec:model:full}
\begin{definition}
  A \emph{fully-amortized} model $\hat y_\theta: \gX\rightarrow\gY$
  maps the context to the solution of \cref{eq:opt}
  and does \emph{not} access the objective $f$.
\end{definition}

I use the prefix ``fully'' to emphasize
that the entire computation of the solution to the
optimization problem is absorbed into
a black-box model that does \emph{not} access the objective $f$.
The prefix ``fully'' can be omitted when the context is clear
because most amortization is fully amortized.
These are standard in amortized variational inference
(\cref{sec:apps:avi}) and policy
learning (\cref{sec:apps:ctrl}), that typically use
feedforward neural networks to map from
the context space $\gX$ to the solution of the
optimization problem living in $\gY$.
Fully-amortized models are remarkable because they
are often successfully able to predict the solution
to the optimization problem in \cref{eq:opt}
\emph{without} ever accessing the objective of
the optimization problem after being trained.

Fully-amortized models are the most useful for attaining approximate
solutions that are computationally efficient.
They tend to work the best when the
solution mappings $y^\star(x)$ are predictable,
the domain $\gY$ is relatively small,
usually hundreds or thousands of dimensions,
and the context distribution isn't too large.
When fully-amortized models don't work well,
semi-amortized models help open up the black box
and use information about the objective.

\subsection{Semi-amortized models}
\label{sec:model:semi}
\begin{definition}
  A \emph{semi-amortized} model $\hat y_\theta: \gX\rightarrow\gY$
  maps the context to the solution of the optimization problem
  and accesses the objective $f$ of \cref{eq:opt},
  typically iteratively.
\end{definition}

\citet{kim2018semi,marino2018iterative}
proposed \emph{semi-amortized} models for variational inference
that add back domain knowledge of the optimization problem
to the model $\hat y_\theta$ that the fully-amortized
models do not use.
These are brilliant ways of integrating the optimization-based
domain knowledge into the learning process.
The model can now internally integrate
solvers to improve the prediction.
Semi-amortized methods are typically iterative and update
iterates in the domain $\gY$ or in an \emph{auxiliary}
or \emph{latent space} $\gZ$.
I refer to the space the semi-amortization iterates over
as the \emph{amortization space} and denote iterate $t$
in these spaces, respectively, as $\hat y^{t}_\theta$ and $z^t_\theta$.
While the iterates and final prediction $\hat y_\theta$
can now query the objective $f$ and gradient $\nabla_y f$,
I notationally leave this dependence implicit for
brevity and only reference these queries in the relevant definitions.

\subsubsection{Semi-amortized models over the domain $\gY$}
\label{sec:semi-domain}
\begin{center}
\begin{tikzpicture}
  \matrix (m) [
      matrix of math nodes,row sep=2em,column sep=1em,
      minimum width=2em,nodes={anchor=center}
  ] {
    \hat y^{0}_\theta & \hat y_\theta^{1} &
    \ldots & \hat y_\theta^{K}\eqdef \hat y_\theta(x) \\
  };
  \path[-stealth]
    (m-1-1) edge node {} (m-1-2)
    (m-1-2) edge node {} (m-1-3)
    (m-1-3) edge node {} (m-1-4);
\end{tikzpicture}
\end{center}
\vspace{-3mm}

One of the most common semi-amortized model is to
parameterize and integrate an optimization procedure
used to solve \cref{eq:opt} into the model $\hat y_\theta$,
such as gradient descent \citep{andrychowicz2016learning,finn2017model,kim2018semi}.
This optimization procedure is an internal part of
the amortization model $\hat y_\theta$,
often referred to as the \emph{inner-level} optimization
problem in the bi-level setting that arises for learning.

\textbf{Examples.}
This section instantiates a canonical semi-amortized model based
gradient descent that learns the initialization as in
model-agnostic meta-learning (MAML) by \citet{finn2017model},
structured prediction energy networks (SPENs) by \citet{belanger2017end},
and semi-amortized variational auto-encoders (SAVAEs) by \citet{kim2018semi}.
The initial iterate $\hat y^{0}_\theta(x)\defeq \theta$
is parameterized by $\theta\in\gX$ for all contexts.
Iteratively updating $\hat y^{t}_\theta$ for $K$ gradient steps
with a \emph{learning rate} or \emph{step size} $\alpha\in\R_+$
on the objective $f(y;x)$ gives
\begin{equation}
  \hat y^{t}_\theta \defeq \hat y^{t-1}_\theta - \alpha \nabla_y f(\hat y^{t-1}_\theta; x) \qquad t\in\{1\ldots,K\},
  \label{eq:gd}
\end{equation}
where model's output is defined as
$\hat y_\theta\defeq \hat y^{K}$.

Semi-amortized models over the domain can go significantly beyond
gradient-based models and in theory, any algorithm to solve
the original optimization problem in \cref{eq:opt}
can be integrated into the model.
\Cref{sec:unrolled} further discusses the learning of
semi-amortized models by unrolling that are instantiated later:
\begin{itemize}
\item \Cref{sec:apps:lista} discusses how
\citet{gregor2010learning} integrate ISTA iterates
\citep{daubechies2004iterative,beck2009fast}
into a semi-amortized model.
\item \Cref{sec:apps:neural-fp} discusses models that integrate
fixed-point computations into semi-amortized models.
\citet{venkataraman2021neural} amortize convex cone programs by
differentiating through the splitting cone solver \citep{o2016conic}
and \citet{bai2022neural} amortize
deep equilibrium models \citep{bai2019deep,bai2020multiscale}.
\item \Cref{sec:apps:qprl} discusses RLQP by \citet{ichnowski2021accelerating}
  that uses the OSQP solver \citep{stellato2018osqp} inside
  of a semi-amortized model.
\end{itemize}

\subsubsection{Semi-amortized models over a latent space $\gZ$}
\label{sec:semi-latent}
\begin{center}
\begin{tikzpicture}
  \matrix (m) [
      matrix of math nodes,row sep=2em,column sep=1em,
      minimum width=2em,nodes={anchor=center}
  ] {
    \hat z^{0}_\theta & \hat z_\theta^{1} &
    \ldots & \hat z_\theta^{K} & \hat y_\theta(x) \\};
  \path[-stealth]
    (m-1-1) edge node {} (m-1-2)
    (m-1-2) edge node {} (m-1-3)
    (m-1-3) edge node {} (m-1-4)
    (m-1-4) edge node {} (m-1-5);
\end{tikzpicture}
\end{center}
\vspace{-3mm}

In addition to only updating iterates over the domain $\gY$,
a natural generalization is to introduce a latent space $\gZ$
that is iteratively optimized over \emph{inside} of
the amortization model.
This is usually done to give the semi-amortized model
more capacity to learn about the structure of the optimization
problems that are being solved.
The latent space can also be interpreted as a representation
of the optimal solution space.
This is useful for learning an optimizer that only searches
over the \emph{optimal} region of the solution space rather
than the entire solution space.

\textbf{Examples.}
The iterative gradient updates in \cref{eq:gd}
can be replaced with a learned update function as in
\citet{ravi2016optimization,li2016learning,andrychowicz2016learning,li2017learning}.
These model the past sequence of iterates and learn how
to best-predict the next iterate, pushing them towards optimality.
This can be done with a recurrent cell $g$ such
as an LSTM \citep{hochreiter1997long} or GRU \citep{cho2014learning}
and leads to updates of the form
\begin{equation}
  z^t_\theta, \hat y^{t}_\theta \defeq g_\theta(z^{t-1}_\theta, x^{t-1}_\theta, \nabla_y f(\hat y^{t-1}_\theta; x)) \qquad t\in\{1\ldots,K\}
  \label{eq:rec}
\end{equation}
where each call to the recurrent cell $g$
takes a hidden state $z$ along with an iterate and
the derivative of the objective.
This endows $g$ with the capacity to learn significant
updates leveraging the problem structure that a
traditional optimization method would not be able
to make.
In theory, traditional update rules can also be
fallen back on as the gradient step in \cref{eq:gd}
is captured by removing the hidden state $z$ and
setting
\begin{equation}
  g(x, \nabla_y f(y; x))\defeq x-\alpha\nabla_y f(y; x).
  \label{eq:rec-grad-step}
\end{equation}

Latent semi-amortized models are a budding topic and can
excitingly learn many other latent representations
that go beyond iterative gradient updates in the original
space.
\citet{luo2018neural,amos2019dcem}
learn a \emph{latent domain} connected to the
original domain where the latent domain captures
hidden structures and redundancies present in
the original high-dimensional domain $\gY$.
\citet{luo2018neural} consider gradient updates
in the latent domain and \citet{amos2019dcem}
show that the cross-entropy method \citep{de2005tutorial}
can be made differentiable and learned as an alternative
to gradient updates.
\citet{amos2017input} unrolls and differentiates through
the bundle method \citep{smola2007bundle} in a convex setting
as an alternative to gradient steps.
The latent optimization could also be done over a learned parameter
space as in POPLIN \citep{wang2019exploring}, which \emph{lifts}
the domain of the optimization problem \cref{eq:opt}
from $\gY$ to the parameter space of a fully-amortized neural network.
This leverages the insight that the parameter space of
over-parameterized neural networks can induce easier
non-convex optimization problems than in the original
space, which is also studied in \citet{hoyer2019neural}.

\subsubsection{Comparing semi-amortized models with warm-starting}
Semi-amortized models are conceptually similar to learning a fully-amortized model
to warm-start an existing optimization procedure that fine-tunes the solution.
The crucial difference is that semi-amortized learning often end-to-end learns
through the final prediction while warm-starting and fine-tuning only learns
the initial prediction and does not integrate the knowledge of the fine-tuning
procedure into the learning procedure.
Choosing between these is an active research topic and while this
tutorial will
mostly focus on semi-amortized models, learning a fully-amortized
warm-starting model brings promising results to some fields too,
such as \citet{zhang2019safe,baker2019learning,chen2022large}.
In variational inference, \citet[Table 2]{kim2018semi} compare semi-amortized
models (SA-VAE) to warm-starting and fine-tuning (VAE+SVI) and demonstrate
that the end-to-end learning signal is helpful.
In other words, amortization finds an initialization that is
helpful for gradient-based optimization.
\citet{arbel2021amortized} further study fully-amortized warm-started
solvers that arise in bi-level optimization problems for
hyper-parameter optimization and use the theoretical framework from
singularly perturbed systems \citep{habets2010stabilite}
to analyze properties of the approximate solutions.

\subsubsection{On second-order derivatives of the objective}
\label{sec:second-derivatives}

Training a semi-amortized model is usually more computationally
challenging than training a fully-amortized model.
This section looks at how second-order derivatives of the
objective may come up when unrolling and create a
computational bottleneck when learning a semi-amortized model.
The next derivation follows \citet[\S5]{nichol2018first}
and \citet{weng2018metalearning} and shows the model derivatives
that arise when composing a semi-amortized model with a loss.

\textbf{Starting with a single-step model.}
This section instantiates a single-step model
similar to \cref{eq:gd} that parameterizes the initial
iterate $\hat y^0_\theta(x)\defeq\theta$ and takes one gradient step:
\begin{equation}
  \hat y_\theta(x)\defeq \hat y^0_\theta(x) - \alpha \nabla_y f(\hat y^0_\theta(x); x)
  \label{eq:single-step}
\end{equation}
Interpreting $\hat y_\theta(x)$ as a model is non-standard in contrast
to other parametric models because it makes the
optimization step \emph{internally part of the model}.
Gradient-based optimization of losses with respect to the model's parameters,
such as \cref{eq:reg-loss,eq:grad-loss} requires the Jacobian of $\hat y_\theta(x)$
w.r.t.~the parameters, \ie $\D_\theta[\hat y_\theta(x)]$
(or Jacobian-vector products with it).
Because $\hat y_\theta(x)$ is an optimization step, the derivative
of the model requires differentiating through the optimization step,
which for \cref{eq:single-step} is
\begin{equation}
  \D_\theta[\hat y_\theta(x)] = I-\alpha \nabla_y^2 f(y_\theta^0(x); x)
  \label{eq:second-derivatives-single-step}
\end{equation}
and requires the Hessian of the objective.
In \citet{finn2017model}, $\nabla_y^2 f$ is the Hessian of the
model's parameters on the training loss (!) and is
compute- and memory-expensive to instantiate for large models.
In practice, the Hessian in \cref{eq:second-derivatives-single-step}
is often never explicitly instantiated as optimizing the
loss only requires Hessian-vector products.
The Hessian-vector product can be computed exactly or
estimated without fully instantiating the Hessian, similar to
how computing the derivative of a neural network with backprop
does not instantiate the intermediate Jacobians and only computes
the Jacobian-vector product.
More information about efficiently computing Hessian-vector products
is available in \citet{pearlmutter1994fast,domke2012generic}.
Jax's \href{https://github.com/google/jax/blob/27360b9/docs/notebooks/autodiff_cookbook.ipynb}{autodiff cookbook}
\citep{bradbury2020jax}
further describes efficient Hessian-vector products.
Before discussing alternatives,
the next portion derives similar results for a $K$-step model.

\textbf{Multi-step models.}
\Cref{eq:single-step} can be extended to the $K$-step setting with
\begin{equation}
\hat y_\theta^K(x)\defeq \hat y^{K-1}_\theta(x) - \alpha\nabla_y f(\hat y^{K-1}_\theta(x); x),
\end{equation}
where the base $\hat y_\theta^0(x)\defeq\theta$ as before.
Similar to \cref{eq:second-derivatives-single-step},
the derivative of a single step is
\begin{equation}
  \D_\theta[\hat y_\theta^K(x)] = \D_\theta[\hat y_\theta^{K-1}(x)]\left(I-\alpha\nabla_y^2 f(y_\theta^{K-1}(x); x)\right),
  \label{eq:second-derivatives-recurrent}
\end{equation}
and composing the derivatives down to $\hat y_\theta^0$ yields the product structure
\begin{equation}
  \D_\theta[\hat y_\theta^K(x)] = \prod_{k=0}^{K-1} \left( I-\alpha\nabla_y^2 f(y_\theta^k(x); x) \right),
  \label{eq:second-derivatives-k-steps}
\end{equation}
where $\D_\theta[\hat y_\theta^0(x)]=I$ at the base case.
Computing \cref{eq:second-derivatives-k-steps} is
now $K$ times more challenging as it requires the Hessian
$\nabla_y^2 f$ at \emph{every} iteration of the model.
While using Hessian-vector products can alleviate some
computational burden of this term, it often still requires
significantly more operations than most other derivatives.

\textbf{Computationally cheaper alternatives.}
The first-order MAML baseline in \citet{finn2017model} suggests to
simply not use the second-order terms $\nabla_y^2 f$ here,
approximating the model derivative as the identity,
\ie $\D_\theta[\hat y_\theta^K(x)]\approx I$,
and relying on only information from the outer loss
to update the parameters.
They use the intuition from \citet{goodfellow2014explaining}
that neural networks are locally linear and therefore these
second-order terms of $f$ are not too important.
They show that this approximation works well in some cases,
such as MiniImagenet \citep{ravi2016optimization}.
The MAML$++$ extension by \citet{antoniou2018train} proposes to
use first-order MAML during the early phases of training, but
to later add back this second-order information.
\citet{nichol2018first} further analyze first-order approximations
to MAML and propose another approximation called Reptile that
also doesn't use this second-order information.
These higher-order terms also come up when unrolling in the
different bi-level optimization setting for hyper-parameter optimization,
and \citet[Table 1]{lorraine2020optimizing} gives
a particularly good overview of approximations to these.
Furthermore, memory-efficient methods for training neural networks
and recurrent models with backpropagation and
unrolling such as \citet{gruslys2016memory,chen2016training}
can also help improve the memory utilization in amortization models.

\textbf{Parameterizing and learning the objective.}
While this section has mostly not considered the setting when
the objective $f$ is also learned,
the second-order derivatives appearing in
\cref{eq:second-derivatives-k-steps}
also cause issues in when the objective is parameterized
and learned.
In addition to learning an initial iterate, \citet{belanger2017end}
learn the objective $f$ representing an energy function.
They parameterize $f$ as a neural network and use softplus
activation functions rather than ReLUs to ensure the
objective's second-order derivatives are non-zero.

\subsection{Models based on differentiable optimization}
As discussed in \cref{sec:learning}, the model typically needs
to be (sub-)differentiable with respect to the parameters
to attain the Jacobian $\D_\theta[\hat y_\theta]$
(or compute Jacobian-vector products with it)
necessary to optimize the loss.
These derivatives are standard backprop when the model
is, for example, a full-amortized neural network, but
in the semi-amortized case, the model itself is often an
optimization process that needs to be differentiated through.
When the model updates are objective-based as in
\cref{eq:gd} and \cref{eq:rec}, the derivatives with respect
to $\theta$ through the sequence of gradient updates
in the domain can be attained by seeing the updates
as a sequence of computations that are differentiated through,
resulting in second-order derivatives.
When more general optimization methods are used for
the amortization model that may not have a closed-form
solution, the tools of differentiable optimization
\citep{domke2012generic,gould2016differentiating,amos2017optnet,amos2019differentiable,agrawal2019differentiable}
enable end-to-end learning.

\subsection{Practically choosing a model}
This section has taxonomized how to instantiate an amortization
model in an application-agnostic way. As in most machine learning
settings in practice, the modeling choice is often
application-specific and needs to take into consideration many factors.
This may include 1) the speed and expressibility of the model,
2) adapting the model to specific context space $\gX$.
An MLP may be good for fixed-dimensional
real-valued spaces but a convolutional neural network
is likely to perform better for image-based spaces.
3) taking the solution space $\gY$ into consideration.
For example, if the solution space is an image space,
then a standard vision model capable of predicting
high-dimensional images is reasonable, such as a
U-net \citep{ronneberger2015u},
dilated convolutional network \citep{yu2015multi}
or fully convolutional network \citep{long2015fully}.
4) the model also may need to adapt to a
\emph{variable-length} context or solution space.
This arises in VeLO \citep{metz2022velo} for learning
to optimize machine learning models where the model needs
to predict the parameters of different models that may
have different numbers of parameters.
Their solution is to decompose the structure of the
parameter space and to formulate the semi-amortized
model as a sequence model that predicts smaller MLPs
that operate on smaller groups of parameters.

\section{Learning the model's parameters $\theta$}
\label{sec:learning}

\begin{figure}[t]
  \centering
  \includegraphics[width=0.32\textwidth]{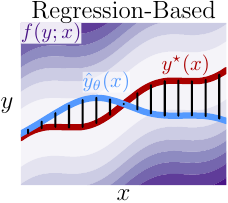}
  \hspace{10mm}
  \includegraphics[width=0.32\textwidth]{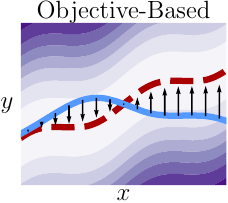}
  \caption{Overview of key losses for optimizing
    the parameters $\theta$ of the amortization model $\hat y_\theta$.
    Regression-based losses optimize a distance between
    the model's prediction $\hat y_\theta(x)$ and the
    ground-truth $y^\star(x)$. Objective-based
    methods update $\hat y_\theta$ using
    local information of the objective $f$
    and \emph{without} access to the ground-truth
    solutions $y^\star$.
  }
  \label{fig:learning}
\end{figure}

After specifying the amortization model $\hat y_\theta$, the other major
design choice is how to learn the parameters $\theta$
so that the model best-solves \cref{eq:opt}.
Learning is often a \emph{bi-level} optimization problem
where the \emph{outer level} is the parameter
learning problem for a model $\hat y_\theta(x)$ that solves
the \emph{inner-level} optimization problem in \cref{eq:opt}
over the domain $\gY$.
While defining the best loss is application-specific,
most approaches can be roughly categorized as
1) regressing a ground-truth solution (\cref{sec:learning:reg}),
or 2) minimizing the objective (\cref{sec:learning:grad,sec:learning:rl}),
which \cref{fig:learning} illustrates.
Optimizing the model parameters here can in theory be done
with most parameter learning methods that incorporate
zeroth-, first-, and higher-order information
about the loss being optimized, and this section mostly focuses
on methods where $\theta$ is learned with a
first-order gradient-based method
such as \citet{nesterov1983method,duchi2011adaptive,zeiler2012adadelta,kingma2014adam}.
The rest of this section discusses approaches for
designing the loss and optimizing the parameters
with first-order methods
(\cref{sec:learning:reg,sec:learning:grad})
when differentiation is easy
or zeroth-order methods (\cref{sec:learning:rl})
otherwise, \eg, in non-differentiable settings.

\subsection{Choosing the objective for learning}
\subsubsection{Regression-based learning}
\label{sec:learning:reg}
Learning can be done by regressing the
model's prediction $\hat y_\theta(x)$ onto a
ground-truth solution $y^\star(x)$.
These minimize some distance between the predictions
and ground-truth so that the expectation over
the context distribution $p(x)$ is minimal.
With a Euclidean distance, for example,
regression-based learning solves
\begin{equation}
  \argmin_\theta \gL_{\rm reg}(\hat y_\theta) \qquad
  \gL_{\rm reg}(\hat y_\theta) \defeq
  \E_{x \sim p(x)} \|y^\star(x) - \hat y_\theta(x) \|_2^2.
\label{eq:reg-loss}
\end{equation}
$\gL_{\rm reg}$ is typically optimized with an adaptive first-order
gradient-based method that is able to directly differentiate
the loss with respect to the model's parameters.

Regression-based learning works the best for distilling
known solutions into a faster model that can be deployed
at a much lower cost, but can otherwise start failing to work.
In RL and control, regression-based amortization
methods are referred to as \emph{behavioral cloning} and
is a widely-used way of recovering a policy using
trajectories observed from an expert policy.
Using regression is also advantageous when evaluating
the objective $f(y;x)$ incurs a computationally intensive
or otherwise complex procedure, such as an evaluation of
the environment and dynamics in RL, or for
computing the base model gradients when learning parameter optimizers.
These methods work well when the ground-truth solutions
are unique and semi-tractable, but can fail otherwise,
\ie if there are many possible ground-truth
solutions for a context $x$ or if computing them
is too intractable.
After all, solving \cref{eq:opt} from scratch may be
computationally expensive and amortization methods
should improve the computation time.

\begin{remark}
  \Cref{eq:reg-loss} can be extended to other distances
  defined on the domain, such as non-Euclidean distances
  or the likelihood of a probabilistic model that predicts
  a distribution of possible candidate solutions.
  \citet{adler2017learning} propose to use the Wasserstein
  distance for learning to predict the solutions to
  inverse imaging problems.
\end{remark}

\subsubsection{Objective-based learning}
\label{sec:learning:grad}

Instead of regressing onto the ground-truth solution,
\emph{objective-based} learning methods seek for
the model's prediction to be minimal under the objective $f$ with:
\begin{equation}
  \argmin_\theta \gL_{\rm obj}(\hat y_\theta) \qquad \gL_{\rm obj}(\hat y_\theta) \defeq \E_{x \sim p(x)} f(\hat y_\theta(x); x).
\label{eq:grad-loss}
\end{equation}
These methods use local information of the objective
to provide a descent direction for the model's
parameters $\theta$.
A first-order method optimizing \cref{eq:grad-loss} uses
updates based on the gradient
\begin{equation}
  \begin{aligned}
  \nabla_\theta \gL_{\rm obj}(\hat y_\theta) &= \nabla_\theta \left[ \E_{x\sim p(x)}f(\hat y_\theta(x); x)\right] \\
  &= \E_{x\sim p(x)} \D_\theta\left[ \hat y_\theta(x)\right]^\top
  \nabla_{y} \left[ f(\hat y_\theta(x); x) \right],
  \end{aligned}
  \label{eq:grad-loss-grad}
\end{equation}
where the last step is obtained by the chain rule.
This has the interpretation that the model's parameters $\theta$
are updated by combining the gradient information around the prediction
$\nabla_{y} \left[ f(\hat y_\theta(x); x) \right]$
shown in \cref{fig:learning} along with
how $\theta$ impacts the model's predictions with the derivative
$\D_\theta\left[ \hat y_\theta(x)\right]$.
While this tutorial mostly focuses on optimizing
\cref{eq:grad-loss-grad} with
first-order methods that explicitly differentiate
the objective, \cref{sec:learning:rl} discusses alternatives
to optimizing it with reinforcement learning and
zeroth-order methods.

Objective-based methods thrive when the gradient information
is informative and the objective and models are easily differentiable.
Amortized variational inference methods and
actor-critic methods both make extensive use of
objective-based learning.

\begin{remark}
  A standard gradient-based optimizer for \cref{eq:opt} (without amortization)
  can be recovered from $\gL_{\rm obj}$ by setting the model to the identity
  of the parameters, \ie $\hat y_\theta(x)\defeq\theta$, and $p(x)$ to be a
  Dirac delta distribution.
  \label{remark:obj-loss}
\end{remark}

This can be seen by taking $\D_\theta[\hat y_\theta(x)]=I$
in \cref{eq:grad-loss-grad}, resulting in
$\nabla_\theta \gL_{\rm obj}(\hat y_\theta)=\nabla_y f(\hat y_\theta(x); x)$.
Thus optimizing $\theta$ of this parameter-identity model with gradient descent
is identical to solving \cref{eq:opt} with gradient descent.
\Cref{remark:obj-loss} shows a connection between a model trained with
gradients of an objective-based loss and a non-amortized gradient-based
solver for \cref{eq:opt}.
The gradient update that would originally have been applied to an
iterate $y\in\gY$ of the domain is now transferred into the
model's parameters that are shared across all problem instances.
This also leads to a hypothesis that objective-based amortization
works best when a gradient-based optimizer is able to successfully
solve \cref{eq:opt} from scratch. However, there may be settings
where a gradient-based optimizer performs poorly but an amortized
optimizer excels because it is able to use information from the
other problem instances.

\begin{remark}
The objective-based loss in \cref{sec:learning:grad} provides a starting
point for amortizing with other optimality conditions or reformulations
of the optimization problem. This is done when amortizing for fixed-point
computations and convex optimization in \cref{sec:apps:convex}, as well as
in optimal transport \cref{sec:apps:ot}.
\label{rmk:optimality-loss}
\end{remark}

\subsubsection{Comparing the regression- and objective-based losses}
\begin{figure}[t]
\centering
\includegraphics[width=2in]{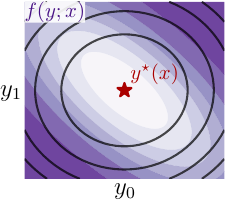}
\caption{Contours of the regression-based amortization loss
  $\gL_{\rm reg}$ (in black) alongside the contours of the objective
  (in purple where darker colors indicate higher values).
  This shows the inaccuracies of the regression-based loss, \eg
  along a level set, may impact the overall objective differently.
}
\label{fig:reg-contours}
\end{figure}

Choosing between the regression- and objective-based losses
is challenging as they measure the solution quality in
different ways and have different convergence and
locality properties.
\citet{liu2022teaching} experimentally compare these
losses for learning to optimize with fully-amortized set-based models.
\Cref{fig:reg-contours} illustrates that the $\ell_2$-regression
loss (the black contours) ignores the objective
values (the purple contours) and thus gives the same loss
to solutions that result in significantly different
objective values.
This could be potentially addressed by normalizing or
re-weighting the dimensions for regression to be more aware
of the curvature of the objective, but this is often not done.
Another idea is to combine both the objective and regression losses.
Combining the losses could work especially well when only a
few contexts are labeled, such as the regression and residual terms
in the physics-informed neural operator paper \citep{li2021physics}.
The following summarizes some other advantages ($+$) and
disadvantages ($-$):
\vspace{2mm}

\noindent\begin{minipage}[t]{2.7in}
\noindent\textbf{Regression-based losses $\gL_{\rm reg}$}
\begin{itemize}[leftmargin=*,noitemsep]
\con Often does not have access to $f(y; x)$
\pro If $f(y; x)$ is computationally expensive, does not need to compute it
\pro Uses global information with $y^\star(x)$
\con It may be expensive to compute $y^\star(x)$
\pro Does not need to compute $\nabla_y f(y; x)$
\con May be hard when $y^\star(x)$ is not unique
\end{itemize}
\end{minipage}\hspace{5mm}\begin{minipage}[t]{3in}
\noindent\textbf{Objective-based losses $\gL_{\rm obj}$}
\begin{itemize}[leftmargin=*,noitemsep]
\pro Uses objective information of $f(y; x)$
\con Can get stuck in local optima of $f(y; x)$
\pro Faster, does not require $y^\star(x)$
\con Often requires computing $\nabla_y f(y; x)$
\pro Easily learns non-unique $y^\star(x)$
\end{itemize}
\end{minipage}
\hspace{-1in}

\subsection{Learning iterative semi-amortized models}
\label{sec:learning:iter}

Fully-amortized or semi-amortized models can be learned
with the regression- and objective-based losses.
This section discusses how the loss can be further opened up
and crafted to learn iterative semi-amortized methods.
For example, if the model produces intermediate predictions
$\hat y_\theta^i$ in every iteration $i$, then instead of
optimizing the loss of just the final prediction,
\ie $\gL(\hat y_\theta^K)$, a more general loss
$\gL^\Sigma$ may consider the impact of every iteration
of the model's prediction
\begin{equation}
  \argmin_\theta \gL^\Sigma(\hat y_\theta) \qquad \gL^\Sigma(\hat y_\theta) \defeq \sum_{i=0}^K w_i \gL(\hat y_\theta^i),
\label{eq:iter-loss}
\end{equation}
where $w_i\in\R_+$ are weights in every iteration $i$
that give a design choice of how important
it is for the earlier iterations to produce reasonable
solutions.
For example, setting $w_i=1$ encourages every iterate
to be low.

Learning iterative semi-amortized methods also has (loose)
connections to sequence learning models that arise in,
\eg text, audio, and language processing.
Given the context $x$, an iterative semi-amortized model
seeks to produce a sequence of predictions that ultimately
result in the intermediate and final predictions,
which can be analogous to a language model predicting
future text given the previous text as context.
One difference is that semi-amortized models do not necessarily attempt
to model the probabilistic dependencies of a structured output space
(such as language) and instead only needs to predict
intermediate computation steps for solving an optimization problem.
The next section discusses concepts that arise when computing the derivatives
of a loss with respect to the model's parameters.

\subsubsection{Unrolled optimization and backpropagation through time}
\label{sec:unrolled}
\begin{center}
\begin{tikzpicture}
  \matrix (m) [
      matrix of math nodes,row sep=2em,column sep=1em,
      minimum width=2em,nodes={anchor=center}
  ] {
    \hat z^{0}_\theta & \hat z_\theta^{1} &
    \ldots & \hat z_\theta^{K} & \hat y_\theta(x) & \gL \\
    \; & \; & \; & \; & \; & \; \\};
  \node at (1.,0.1) () {\color{lightpurple}\ldots};
  \path[-stealth]
    (m-1-1) edge node {} (m-1-2)
    (m-1-2) edge node {} (m-1-3)
    (m-1-3) edge node {} (m-1-4)
    (m-1-4) edge node {} (m-1-5)
    (m-1-5) edge node {} (m-1-6)
    (m-1-6) edge[draw=lightpurple,out=-155,in=-15] node[below] {} (m-1-5)
    (m-1-6) edge[draw=lightpurple,out=-155,in=-15] node {} (m-1-4)
    (m-1-6) edge[draw=lightpurple,out=-155,in=-15] node {} (m-1-2)
    (m-1-6) edge[draw=lightpurple,out=-155,in=-15] node {} (m-1-1)
    ;
\end{tikzpicture}
\end{center}
\vspace{-10mm}

The parameterization of \emph{every} iterate $z_\theta^i$ can
influence the final prediction $\hat y_\theta$ and thus
losses on top of $\hat y_\theta$ need to consider the
entire chain of computations.
Differentiating through this kind of iterative procedure
is referred to as \emph{backpropagation through time}
in sequence models and \emph{unrolled optimization}
\citep{pearlmutter2008reverse,zhang2010multi,maclaurin2015gradient,belanger2016structured,metz2016unrolled,finn2017model,han2017alternating,belanger2017end,belanger2017deep,foerster2017learning,bhardwaj2020differentiable,monga2021algorithm}
when the iterates are solving an optimization problem.
The term ``unrolling'' arises because the model computation
is iterative and computing
$\D_\theta[\hat y_\theta(x)]$
requires saving and differentiating the ``unrolled''
intermediate iterations, as in \cref{sec:second-derivatives}.
The terminology ``unrolling'' here emphasizes that the
iterative computation produces a compute graph of operations
and is likely inspired from
\emph{loop unrolling} in compiler optimization
\citep{aho1986compilers,davidson1995aggressive} where
loop operations are inlined for efficiency and written
as a single chain of repeated operations rather
than an iterative computation of a single operation.

Even though $\D_\theta \hat y_\theta$ through unrolled
optimization is well-defined, in practice it can be
unstable because of exploding gradients
\citep{pearlmutter1996investigation,pascanu2013difficulty,maclaurin2016modeling,parmas2018pipps}
and inefficient for compute and memory resources because
every iterate needs to be stored,
as in \cref{sec:second-derivatives}.
This is why most methods using unrolled optimization for learning
often only unroll through \emph{tens} of iterations
\citep{metz2016unrolled,belanger2017end,foerster2017learning,finn2017model}
while solving the problems from scratch may
require 100k-1M+ iterations.
This causes the predictions to be extremely inaccurate solutions
to the optimization process and has sparked the research directions
that the next section turns to that seek to make unrolled optimization
more tractable.

\subsubsection{Truncated backpropagation through time and biased gradients}
\begin{center}
\begin{tikzpicture}
  \matrix (m) [
      matrix of math nodes,row sep=2em,column sep=1em,
      minimum width=2em,nodes={anchor=center}
  ] {
    \hat z^{0}_\theta & \hat z_\theta^{1} &
    \ldots & \hat z_\theta^{K-H} & \ldots &
    \hat z_\theta^{K} & \hat y_\theta(x) & \gL \\
    \; & \; & \; & \; & \; & \; \\};
  \node at (2.5,0.1) () {\color{lightpurple}\ldots};
  \path[-stealth]
    (m-1-1) edge node {} (m-1-2)
    (m-1-2) edge node {} (m-1-3)
    (m-1-3) edge node {} (m-1-4)
    (m-1-4) edge node {} (m-1-5)
    (m-1-5) edge node {} (m-1-6)
    (m-1-6) edge node {} (m-1-7)
    (m-1-7) edge node {} (m-1-8)
    (m-1-8) edge[draw=lightpurple,out=-155,in=-15] node[below] {} (m-1-7)
    (m-1-8) edge[draw=lightpurple,out=-155,in=-15] node {} (m-1-6)
    (m-1-8) edge[draw=lightpurple,out=-155,in=-15] node {} (m-1-4)
    ;
\end{tikzpicture}
\end{center}
\vspace{-10mm}

\emph{Truncated backpropagation through time (TBPTT)}
\citep{werbos1990backpropagation,jaeger2002tutorial}
is a crucial idea that has enabled the
training of sequence models over long sequences.
TBPTT's idea is that not every iteration needs to be
differentiated through and that the derivative
can be computed using smaller subsequences from
the full sequence of model predictions by
truncating the history of iterates.
For example, the derivative of a model running
for $K$ iterations with a truncation length of $H$
can be approximated by considering the
influence of the last $H$ iterates
$\left\{z_\theta^{i}\right\}_{i=K-H}^H$ on the loss $\gL$.

Truncation significantly helps improve the computational
and memory efficiency of unrolled optimization procedure
but results in harmful \emph{biased gradients} as
these approximate derivatives do not contain
the full amount of information that the model used
to compute the prediction.
This is especially damaging in approaches
such as MAML \citep{finn2017model} that \emph{only}
parameterize the first iterate and is why MAML-based
approaches often don't use TBPTT.
\citet{tallec2017unbiasing,wu2018understanding,liao2018reviving,shaban2019truncated,vicol2021unbiased}
seek to further theoretically understand the properties
of TBPTT, including the bias of the estimator
and how to unbias it.

\subsubsection{Other gradient estimators for sequential models}
In addition to truncating the iterations, other approaches
attempt to improve the efficiency of learning through
unrolled iterations with other approximations
that retain the influence of the entire sequence
of predictions on the loss
\citep{finn2017model,nichol2018first,lorraine2020optimizing}
which will be further discussed in
\cref{sec:second-derivatives}.
Some optimization procedures, such as gradient descent with momentum,
can also be ``reversed'' without needing to retain the
intermediate states \citep{maclaurin2015gradient,franceschi2017forward}.
\emph{Real-Time Recurrent Learning} (RTRL) by \citet{williams1989learning}
uses forward-mode automatic differentiation to compute unbiased
gradient estimates in an online fashion.
\emph{Unbiased Online Recurrent} (UORO) by \citet{tallec2017unbiased}
improves upon RTRL with a rank-1 approximation of the gradient
of the hidden state with respect to the parameters.
\citet{silver2022learning} considers the directional derivative
of a recurrent model along a candidate direction, which can
be efficiently computed to construct a descent direction.

\subsubsection{Semi-amortized learning with shrinkage and implicit differentiation}
\begin{figure}[t]
\centering
\includegraphics[width=2in]{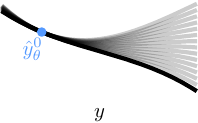}
\caption{Illustration of the penalty used in the Implicit MAML
  by \citet{rajeswaran2019meta} in \cref{eq:iMAML}.
  The original loss $f(y; x)$ is shown in black for
  a fixed context $x$ and the lighter grey colors
  show the impact of varying $\lambda$.
  This shows that the quadratic term of the penalization eventually
  overtakes the original loss and makes an optimum appear
  close to $\hat y_\theta^0$}
\label{fig:imaml}
\end{figure}

A huge issue arising in semi-amortized models is that adapting
to long time horizons is computationally and memory inefficient
and even if it wasn't, causes exploding, vanishing, or
otherwise unstable gradients.
An active direction of research seeks to solve these issues by
solving a smaller, local problem with the semi-amortized model,
such as in \citet{chen2019modular,rajeswaran2019meta}.
Implicit differentiation is an alternative to unrolling through
the iterations of a semi-amortized model in settings where the
model is able to successfully solve an optimization problem.

This section briefly summarizes \emph{Implicit MAML} (iMAML) by \citet{rajeswaran2019meta},
which notably brings this insight to MAML.
MAML methods usually only take a few gradient steps and
are usually not enough to globally solve \cref{eq:opt},
especially at the beginning of training.
\citet{rajeswaran2019meta} observe that adding a penalty
to the objective around the initial iterate
$\hat y^0_\theta$ makes it easy for the model to
\emph{globally} (!) solve the problem
\begin{equation}
  \label{eq:iMAML}
  \hat y_\theta(x) \in \argmin_y f(y; x) + \frac{\lambda}{2}\|y-\hat y^0_\theta\|_2^2,
\end{equation}
where the parameter $\lambda$ encourages the
solution to stay close to some initial iterate.
\Cref{fig:imaml} visualizes a function $f(y; x)$
in black and add penalties in grey with
$\lambda\in[0,12]$ and see that a global
minimum is difficult to find without adding a penalty
around the initial iterate.
This global solution can then be implicitly differentiated to obtain
a derivative of the loss with respect to the model's parameters
\emph{without} needing to unroll, as it requires
significantly less computational and memory resources.
\citet{huszar2019imaml} further analyzes and discuses iMAML.
They compare it to a Bayesian approach and observe that the insights
from iMAML can transfer from gradient-based meta-learning to
other amortized optimization settings.

\textbf{Warning.} Implicit differentiation is only useful
when optimization problems are exactly solved and satisfy
the conditions of the implicit function theorem in \cref{thm:dini}.
This is why \citet{rajeswaran2019meta} needed to add a penalty
to MAML's inner optimization problem in \cref{eq:iMAML} to
make the problem exactly solvable.
While they showed that this works and results in significant
improvements for differentiation, it comes at the expense of
changing the objective to penalize the distance from the
previous iterate.
In other words, iMAML modifies MAML's semi-amortized model
and in general is not helpful for estimating the derivative
through the original formulation of MAML.
Furthermore, computing the implicit derivative by
solving the linear system with the Jacobian
in \cref{eq:implicit-derivative} may be memory and
compute expensive to form and estimate exactly.
In practice, some methods such as \citet{bai2019deep}
successfully use indirect and approximation methods
to solve for the system in \cref{eq:implicit-derivative}.

\subsection{Learning with zeroth-order methods
  and RL}
\label{sec:learning:rl}

\begin{figure}[t]
\centering
\includegraphics[width=2in]{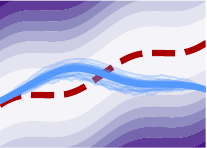}
\caption{Illustration of perturbing $\hat y_\theta$.
  A zeroth-order optimizers may make perturbations
  like this to search for an improved parameterization
}
\label{fig:perturbing}
\end{figure}

Computing the derivatives to learn $\hat y_\theta$ with a
first-order method may be impossible or unstable.
These problems typically arise when learning components
of the model that are truly non-differentiable, or
when attempting to unroll a semi-amortized model for
a lot of steps.
In these settings, research has successfully explored
other optimizers that do \emph{not} need the gradient information.
These methods often consider settings that improve
an objective-based loss with small local perturbations
rather than differentiation.
\Cref{fig:perturbing} illustrates that most of these
methods can be interpreted as locally perturbing the
model's prediction and updating the parameters to
move towards the best perturbations.

\subsubsection{Reinforcement learning}
\citet{li2016learning,li2017learning,ichnowski2021accelerating} view their
semi-amortized models as a Markov decision process (MDP)
that they solve with reinforcement learning.
The MDP interpretation uses the insight
that the iterations $x^i$ are the actions,
the context and previous iterations or losses are typically the states,
the associated losses $\gL(x^i)$ are the rewards,
and $\hat y_\theta^i(x)$ is a (deterministic) policy,
and transitions given by updating the current iterate,
either with a quantity defined by the policy or by running
one or more iterations from an existing optimizer.
Once this viewpoint is taken, then the optimal amortized model
can be found by using standard reinforcement learning methods,
\eg \citet{li2016learning,li2017learning} uses Guided Policy Search \citep{levine2013guided}
and \citet{ichnowski2021accelerating} uses TD3 \citep{fujimoto2018td3}.
The notation $\gL^{\rm RL}$ indicates that a loss is optimized with reinforcement learning,
typically on the objective-based loss.

\subsubsection{Loss smoothing and optimization with zeroth-order methods}
\label{sec:smooth}

Objective-based losses can have a high-frequency
structure with many poor local minimum.
\citet{metz2019understanding} overcome this by smoothing
the loss with a Gaussian over the \emph{parameter} space, \ie,
\begin{equation}
  \gL^{\rm smooth}(\hat y_\theta)\defeq \E_{\epsilon\sim\gN(0,\sigma^2 I)}\left[\gL(\hat y_{\theta+\epsilon})\right],
  \label{eq:smooth-loss}
\end{equation}
where $\sigma^2$ is a fixed variance.
\Cref{fig:smooth-loss} illustrates a loss function $\gL$ in
black and shows smoothed versions in color.
They consider learning the loss with reparameterization
gradients and zeroth-order evolutionary methods.
\citet{merchant2021learn2hop} further
build upon this for learned optimization in
atomic structural optimization
and study 1) clipping the values of the gradient estimator,
and 2) parameter optimization with genetic algorithms.

\begin{figure}[t]
\centering
\includegraphics[width=2.1in]{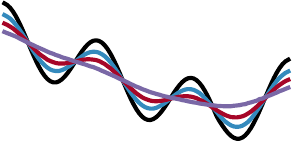}
\caption{Gaussian smoothing of a loss using \cref{eq:smooth-loss}.
  The colors show different values of the variance $\sigma^2$
  of the Gaussian. Selecting a high enough variance results in
  smoothing out most of the suboptimal minima.}
\label{fig:smooth-loss}
\end{figure}

\begin{remark}
  While smoothing can help reduce suboptimal local
  minima, it may also undesirably change the
  location of the global minimum.
  One potential solution to this is to decay
  the smoothing throughout training, as done
  in \citet[Appendix~A.1]{amos2021model}.
\end{remark}

\textbf{Connection to smoothing in reinforcement learning.}
The Gaussian smoothing of the objective $\gL$ in \cref{eq:smooth-loss}
is conceptually similar to Gaussian smoothing of the
objective in reinforcement learning, \ie the $-Q$-value,
by a Gaussian policy. This happens in
\cref{eq:Q-opt-sto-exp} and is further discussed
in \cref{sec:apps:ctrl}.
The policy's variance is typically controlled to match a
target entropy \citet{haarnoja2018soft} and the learning
typically starts with a high variance so the policy has a
broad view of the objective landscape and is then able to focus in
on a optimal region of the value distribution.
\citet{amos2021model} uses a fixed entropy decay schedule to
explicitly control this behavior.
In contrast, \citet{metz2019understanding,merchant2021learn2hop}
do not turn the loss into a distribution and more directly
smooth the loss with a Gaussian with a fixed variance $\sigma^2$
that is not optimized over.

\section{Extensions}
\label{sec:extensions}

I have intentionally scoped \cref{def:amor} to optimization problems
over \emph{deterministic, unconstrained, finite-dimensional, Euclidean}
domains $\gY$ where the context distribution $p(x)$
remains \emph{fixed} the
entire training time to provide a simple mental model that
allows us to focus on the core amortization principles
that consistently show up between applications.
This section cover extensions from this setting that may come up in practice.

\subsection{Extensions of the domain $\gY$}
\label{sec:extensions:domain}
\subsubsection{Deterministic $\rightarrow$ stochastic optimization}
\label{sec:extensions:sto}
A crucial extension is from optimization over deterministic vector
spaces $\gY$ to \emph{stochastic optimization}
where $\gY$ represents a space of distributions,
turning $y\in\gY$ from a vector in Euclidean space
into a distribution.
This comes up in \cref{sec:apps:ctrl} for control,
for example..

\textbf{Transforming parameterized stochastic problems
  back into deterministic ones.}
This portion will mostly focus on settings that
optimize over the parametric distributions.
This may arise in stochastic domains for variational inference in
\cref{sec:apps:avi} and stochastic control in \cref{sec:apps:ctrl}.
These settings optimize over a constrained parametric family
of distributions parameterized by some $\lambda$, for example
over a multivariate normal $\gN(\mu, \Sigma)$ parameterized
by $\lambda\defeq (\mu, \Sigma)$.
Here, problems can be transformed back to \cref{eq:opt} by
optimizing over the parameters with
\begin{equation}
  \lambda^\star(x) \in \argmin_{\lambda} f(\lambda; x),
  \label{eq:normal-opt}
\end{equation}
where $\lambda$ induces a distribution that the
objective $f$ may use.
When $\lambda$ is not an unconstrained real space, the
differentiable projections discussed in \cref{sec:constraints}
could be used to transform $\lambda$ back into this form for amortization.

\begin{figure}[t]
\centering
\includegraphics[width=2.3in]{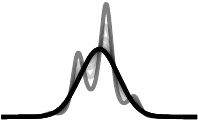}
\caption{
  The Gaussian distribution can be characterized as the result of
  the optimization problem in \cref{eq:maxent}:
  constrained to the space of continuous distributions with
  a given mean and variance, the Gaussian distribution has
  the maximum entropy in comparison to every other distribution.
  This example parameterizes a non-Gaussian density (shown in grey)
  and optimizes over it using gradient steps of \cref{eq:maxent},
  eventually converging to a Gaussian.
  An animated version is available
  \href{https://github.com/facebookresearch/amortized-optimization-tutorial/blob/main/paper/fig/maxent.gif}{in the repository associated with this tutorial}.
  While the Gaussian is the known closed-form solution to
  this optimization problem and analytically known,
  more general optimization problems over densities
  without known solutions can also be amortized.
}
\end{figure}
\textbf{Optimizing over distributions and densities.}
More general stochastic optimization settings involve optimizing over
spaces representing distributions, such as the functional space
of all continuous densities.
Many standard probability distributions can be obtained and
characterized as the solution to a maximum entropy
optimization problem and is explored, \eg, in
\citet[Ch.~12]{cover2006elements},
\citet[p.~47]{guiasu1985principle}, and
\citet[\S6.2]{pennec2006intrinsic}.
For example, a Gaussian distribution $\gN(\mu, \Sigma)$
solves the following constrained maximum entropy
optimization problem over the space of continuous densities $\gP$:
\begin{equation}
  p^\star(\mu, \Sigma)\in \argmax_{p\in\gP} \HH_p[X]\; \subjectto\; \E_p[X] = \mu\; {\rm and}\; \Var_p[X] = \Sigma,
  \label{eq:maxent}
\end{equation}
where $\HH_p[X]\defeq -\int p(x)\log p(x){\rm d}x$ is the \emph{differential entropy}
and the constraints are on the mean $\E_p[X]$ and covariance $\Var_p[X]$.
\citet[Theorem~8.6.5 and Example~12.2.8]{cover2006elements} prove that the closed-form solution
of $p^\star$ is the Gaussian density.
This Gaussian setting therefore does not need amortization as the
closed-form solution is known and easily computed, but
more general optimization problems over densities do not necessarily
have closed-form solutions and could benefit from amortization.
While this tutorial does not study amortizing these problems, in some cases it may
be possible to again transform them back into deterministic optimization problems
over Euclidean space for amortization by approximating the density $g_\theta$
with an expressive family of densities parameterized by $\theta$.

\subsubsection{Unconstrained $\rightarrow$ constrained optimization}
\label{sec:constraints}
Amortized constrained optimization problems may naturally arise, for example
in the convex optimization settings in \cref{sec:apps:convex}
and for optimization over the sphere in \cref{sec:impl:sphere}.
Constrained optimization problems for amortization can often be represented as
an extension of \cref{eq:opt} with
\begin{equation}
  y^\star(x) \in \argmin_{y\in\gC} f(y; x),
  \label{eq:opt-constrained}
\end{equation}
where the constraints $\gC$ may also depend on the context $x$.
\Cref{rmk:optimality-loss} suggests one way of amortizing
\cref{eq:opt-constrained} by amortizing the objective-based loss
associated with the optimality conditions of the constrained problem.
A budding research area studies how to more generally include
constraints into the formulation.
\citet{baker2019learning,dong2020smart,zamzam2020learning,pan2020deepopf,klamkin2025dualinteriorpointoptimization,vanhentenryck2025optimizationlearning}
predict solutions to optimal power flow problems.
\citet{misra2021learning} learn active sets for constrained optimization.
\citet{krivachy2020fast} solves constrained feasibility semi-definite programs
with a fully-amortized neural network model using an
objective-based loss.
\citet{donti2021dc3} learns a fully-amortized model and optimizes an
objective-based loss with additional completion and correction terms
to ensure the prediction satisfies the constraints of the original problem.

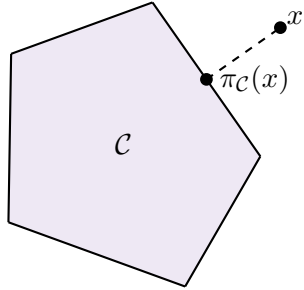
\begin{figure}[t]
  \centering
  \begin{tikzpicture}[every path/.style={thick}]
    \path[fill=lightpurple!20] (0,0) coordinate(p1) -- ++(20:2) coordinate(p2) -- ++(-55:2.5)
    coordinate(p3) -- ++(-120:2.) coordinate(p4) --  ++(160:2.5) coordinate(p5);
    \node at (barycentric cs:p1=1,p2=1,p3=1,p4=1,p5=1) {$\gC$};
    \foreach \X [count=\Y] in {2,...,6} {
      \ifnum\X=6
      \path (p\Y) -- (p1) coordinate[pos=0.](a\Y) coordinate[pos=1.](a1)
      coordinate[pos=0.5](m1);
      \draw (a\Y) -- (a1);
      \else
      \path (p\Y) -- (p\X) coordinate[pos=0.](a\Y) coordinate[pos=1.](a\X)
      coordinate[pos=0.5](m\X);
      \draw (a\Y) -- (a\X);
      \fi}
    \draw[dashed] (m3) -- ($ (m3)!1.2cm!90:(p3) $) node[pos=1.2]{$x$}; %
    \filldraw[black] (m3) circle(2pt);
    \filldraw[black] ($ (m3)!1.2cm!90:(p3) $) circle(2pt);
    \node[xshift=1pt,anchor=west] at (m3) {$\pi_\gC(x)$};
  \end{tikzpicture}
  \label{fig:projection}
  \caption{Illustration of \cref{def:proj} showing
    a Euclidean projection $\pi_\gC(x)$ of a point $x$
    onto a set $\gC$.}
\end{figure}

\textbf{Differentiable projections.}
When the constraints are relatively simple, a differentiable projection
can transform a constrained optimization problem into an unconstrained one,
\eg, in reinforcement learning constrained action spaces can be transformed
from the box $[-1,1]^n$ to the reals $\R^n$ by using
the $\tanh$ to project from $\R^n$ to $[-1,1]^n$.
\Cref{sec:impl:sphere} also uses a differentiable projection from $\R^n$
onto the sphere $\gS^{n-1}$.
These are illustrated in \cref{fig:projection} and defined as:
\begin{definition}
  A \emph{projection} from $\R^n$ onto a set $\gC\subseteq \R^n$ is
  \begin{equation}
    \pi_\gC: \R^n\rightarrow \gC \qquad \pi_\gC(x) \in \argmin_{y\in\gC} D(x, y) + \Omega(y),
    \label{eq:proj}
  \end{equation}
  where $D: \R^n\times\R^n\rightarrow \R$ is a distance and $\Omega:\R^n\rightarrow \R$ is
  a regularizer that can ensure invertibility or help spread $\R^n$ more uniformly throughout $\gC$.
  A \emph{(sub)differentiable projection} has (sub)derivatives $\nabla_x \pi_\gC(x)$.
  I sometimes omit the dependence of $\pi$ on the choice of $D$, $\Omega$, and $\gC$
  when they are given by the surrounding context.
  \label{def:proj}
\end{definition}

\textbf{Lack of idempotency.} In linear algebra, a projection is defined to
be \emph{idempotent}, \ie applying the projection twice gives the same result
so that $\pi\circ \pi=\pi$.
Unfortunately, projections as defined in \cref{def:proj},
such as Bregman projections, are \emph{not} idempotent in general
and often $\pi_\gC\circ \pi_\gC\neq \pi_\gC$
as the regularizer $\Omega$ may cause points that are already on $\gC$
to move to a different position on $\gC$.

\textbf{Differentiable projections for constrained amortization.}
These can be used to cast \Cref{eq:opt-constrained} as the unconstrained
problem \cref{eq:opt} by composing the objective with a projection
$f\circ \pi_\gC$.
(Sub)differentiable projections enable gradient-based learning through the projection
and is the most easily attainable when the projection has an explicit closed-form solution.
For intuition, the ReLU, sigmoid, and softargmax can be interpreted as
differentiable projections that solve convex optimization problems
in the form of \cref{eq:proj}.
\citet[\S2.4.4]{amos2019differentiable} further discusses these
and proves them using the KKT conditions:
\begin{itemize}
\item The standard Euclidean projection onto the
  \emph{non-negative orthant} $\R^n_+$ is defined by
  \begin{equation}
    \pi(x) \in \argmin_y \;\; \frac{1}{2}\|x-y\|_2^2 \;\; \st \;\; y\geq 0,
    \label{eq:relu-proj}
  \end{equation}
  and has a closed-form solution given by the
  ReLU, \ie $\pi(x) \defeq \max\{0, x\}$.
\item The interior of the \emph{unit hypercube} $[0,1]^n$ can
  be projected onto with the entropy-regularized
  optimization problem
  \begin{equation}
    \pi(x) \in \argmin_{0<y<1} \;\; -x^\top y -H_b(y),
    \label{eq:sigmoid-proj}
  \end{equation}
  where
  \begin{equation}
  H_b(y) \defeq \left(\sum_i y_i\log y_i + (1-y_i)\log (1-y_i)\right)
  \end{equation}
  is the
  binary entropy function.
  \Cref{eq:sigmoid-proj} has a closed-form solution given by
  the \emph{sigmoid} or \emph{logistic} function,
  \ie $\pi(x) \defeq (1+e^{-x})^{-1}$.
\item The interior of the $(n-1)$-\emph{simplex} defined by
  \begin{equation}
    \Delta_{n-1}\defeq\{p\in\R^n\; \vert\; 1^\top p = 1 \; \; {\rm and} \;\; p \geq 0 \}
    \label{eq:simplex}
  \end{equation}
  can be projected onto with the entropy-regularized
  optimization problem
  \begin{equation}
    \pi(x) \in \argmin_{0<y<1} \;\; -x^\top y - H(y) \;\; \st\;\; 1^\top y = 1
    \label{eq:simplex-proj}
  \end{equation}
  where $H(y) \defeq -\sum_i y_i \log y_i$ is the entropy function.
  \Cref{eq:simplex-proj} has a closed-form solution given by
  the \emph{softargmax}, \ie $\pi(x)_j = e^{x_j} / \sum_i e^{x_i}$,
  which is historically referred to as the \emph{softmax}.
\end{itemize}

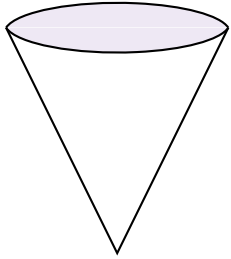
\begin{figure}[t]
  \centering
  \begin{tikzpicture}[every path/.style={thick}]
    \path[fill=lightpurple!20]
      (0,0) arc (-170:-10:1.5cm and 0.4cm)coordinate[pos=0]
      -- (0,0) arc (170:10:1.5cm and 0.4cm)coordinate;
    \draw (0,0) arc (-170:-10:1.5cm and 0.4cm)coordinate[pos=0] (a);
    \draw (0,0) arc (170:10:1.5cm and 0.4cm)coordinate (b);
    \draw (a) -- ([yshift=-3cm]$(a)!0.5!(b)$) -- (b);
  \end{tikzpicture}
  \label{fig:lorentz-cone}
  \caption{Illustration of the second-order cone in \cref{eq:lorentz-cone}.}
\end{figure}

This section goes beyond these to differentiable projections onto
\emph{convex cones}. These can also be softened or regularized
to help with continuity when composed with learning and
amortization methods.
\citet{ali2017semismooth,busseti2019solution} discuss
differentiating the standard Euclidean projections
onto these, including:
\begin{itemize}
\item
  The \emph{second-order, Lorentz, or ice cream cone}
  defined by
  \begin{equation}
    \gK_{\rm soc}\defeq\{(x,y)\in\R^{m-1}\times\R : \|x\|_2\leq y\},
    \label{eq:lorentz-cone}
  \end{equation}
  which is illustrated in \cref{fig:lorentz-cone}.
  The standard Euclidean projection is given in closed form as
  \begin{equation}
    \pi(x, y) \defeq
    \begin{cases}
      0 & \|x\|_2 \leq -y \\
      (x,y) & \|x\|_2 \leq y \\
      \frac{1}{2} (1 + \frac{y}{\|x\|_2}) (x, \|x\|_2) & \textrm{otherwise}.
    \end{cases}
  \end{equation}
  and can be explicitly differentiated.
\item The \emph{positive semidefinite cone} $\gS^m_+$ of the
  space of $m\times m$ positive semidefinite matrices.
  The Euclidean projection is obtained in closed-form
  by projecting the eigenvalues to be non-negative with
  $\pi(X)\defeq\sum_i\max\{\lambda_i, 0\}q_iq_i^\top$,
  where the eigenvalue decomposition of $X$ is given by
  $X=\sum_i\lambda_iq_iq_i^\top$.
  The derivative can be computed by differentiating
  through the eigenvalue decomposition and projection
  of the eigenvalues.
\item The \emph{exponential cone} is given by
  \begin{equation}
  \begin{aligned}
    \gK_{\rm exp} \defeq & \{(x,y,z) : x\in\R, y>0, z\geq y\exp(x/y)\} \\
    & \cup \{(x,0,z): x\leq0, z\geq 0\}.
  \end{aligned}
  \label{eq:exp-cone}
  \end{equation}
  The standard Euclidean projection onto this does
  \emph{not} have a known closed-form solution
  but can be computed using a Newton method
  as discussed in
  \citet[\S6.3.4]{parikh2014proximal}.
  \citet{ali2017semismooth} differentiate through
  this projection using implicit differentiation
  of the KKT system.
\end{itemize}

\noindent Other uses of projections in machine learning include:
\begin{itemize}
\item \citet{adams2011ranking,santa2017deeppermnet,mena2018learning}
  project onto the \emph{Birkhoff polytope} of
  \emph{doubly stochastic} matrices with row and
  column sums of 1, \ie
  \begin{equation}
    \label{eq:birkhoff}
    \gB_m\defeq \{X\in\R^{m\times m}: X1=X^\top1=1\}.
  \end{equation}
\item \citet{amos2019limited} project onto the capped
  simplex for a differentiable top-$k$ layer.
\item \citet{blondel2019structured} perform structure
  prediction and learning methods building on Fenchel-Young losses
  \citep{blondel2020learning} and use projections onto the
  simplex, unit cube, knapsack polytope, Birkhoff polytope,
  permutahedron, and order simplex.
\end{itemize}

In many of these, the projections have explicit closed-form
solutions that make it easy to compute and differentiate
through the projections for learning.
When a closed-form solution to the projection is not available to
the project, but the projection can be numerically computed,
projections can often still be differentiated through using
implicit differentiation.

\subsubsection{Euclidean $\rightarrow$ non-Euclidean optimization}
\emph{Manifold optimization} \citep{absil2009optimization,hu2019brief}
over non-Euclidean spaces is a thriving topic in optimization
as these problems arise frequently over complex geometries in nature.
One form of manifold optimization takes $\gY$ to be a Riemannian
manifold rather than a real-valued spaced.
This area of research has studied acceleration methods,
\citep{duruisseaux2022accelerated}, but less exploration
has been done on amortized optimization.
\Cref{sec:impl:sphere} discusses amortizing a simple
constrained spherical optimization problem that can be
transformed into an unconstrained Euclidean optimization
problem by using projections from ambient Euclidean space.
When this is not possible, a budding area of research investigates
more directly including the manifold structure into the
amortization process. \citet{gao2020learning} amortize optimization
problems over SPD spaces.

\subsection{Extensions of the model $\hat y_\theta$}
Finding the best model for an amortized optimization setup
is an active research topic in many areas.
While the tutorial is mostly scoped to differentiable
parametric models that are end-to-end learned,
variations and extensions can be considered.

\subsubsection{Symbolic models: uncovering human-interpretable update rules}
A huge issue of neural-network based amortization models
is that they are uninterpretable and it is often impossible
for us as humans to learn any new insights about the optimization
problems being modeled, \eg how to better-solve them.
Symbolic models are one potential answer to this that attempt
to search over a symbolic space that is much closer to the
operations that humans would use to implement update rules for
an optimization solver.
Early studies of these methods include
\citet{bengio1994use,runarsson2000evolution}.
\citet{bello2017neural} significantly advances this direction
by posing the learned optimizer as a reinforcement learning problem
where the actions produce the operations underlying the update rules.
They show how existing methods can be symbolically implemented
using this formulation, and learn better update rules for
classification and machine translation tasks.
Symbolic methods are further studied and scaled in
\citet{real2020automl,zheng2022symbolic}.
\citet{maheswaranathan2021reverse} reverse engineer learned
optimizers and show that they have learned interpretable behavior,
including momentum, gradient clipping, learning rate schedules,
and learning rate adaptation.

This direction of work challenges the best accelerated and adaptive
gradient-based optimizers that are used for machine learning.
Nesterov acceleration \citep{nesterov1983method} has a provably
optimal convergence rate among first-order methods for solving
convex optimization problems, but unfortunately breaks down
in the non-convex setting.
This has led to a stream of variations of acceleration methods
for non-convex problems that come up in machine learning,
such as \citet{duchi2011adaptive,zeiler2012adadelta,kingma2014adam},
that typically add components that adapt the update rules to
how much the objective is moving in each dimension.
None of these algorithms are theoretically or provably the
best in non-convex settings, and is often empirically validated
depending on the domain.
Using amortized optimization with a symbolic model to search
the design space of optimizers can result is significantly
better optimizers and insights into the optimization problems
being solved, especially when this is done on new classes
of problems beyond the parameter learning problems typically
considered in machine learning settings.

\subsection{Uncertainty-aware and Bayesian optimization}
An active research direction combines \emph{uncertainty} estimation
and amortized optimization:

\textbf{Amortization for Bayesian optimization.}
\citet{chen2017learning} propose to use an RNN-based
  amortization in Bayesian optimization settings that
  predict the optimal solution to commonly used
  acquisition functions such as the expected improvement
  and observed improvement.
  This is powerful as optimizing the acquisition function
  is often a computational bottleneck.
\citet{swersky2020amortized} consider amortized
  Bayesian optimization in discrete spaces and show
  applications to protein sequence design.
\citet{ravi2018amortized} propose amortized
  Bayesian meta-learning for meta-learning with uncertainty
  estimation over the posterior and show applications to
  contextual bandits and few-shot learning.

\textbf{Bayesian methods for amortization.}
\citet{you2022bayesian} investigate \emph{optimizer uncertainty}
or \emph{Bayesian learning to optimize}.
This setting explores the uncertainty that an optimizer,
\eg the amortization model, is the best optimizer for
the problem.

\subsection{Settings with additional learnable contexts $\varphi$}
The amortization model is often a component within a larger system
with other learnable parameters that are being optimized over.
This is done in, for example, 1) variational autoencoders
where the ELBO also depends on the decoder's parameters
that are also being optimized over,
2) deep equilibrium models where the fixed point
is parameterized and optimized over, and
3) reinforcement learning where the value estimate is
also parameterized and learned over.

These dependencies can be captured by writing an explicit
dependence of the context distribution and objective
on an additional parameter $\varphi$, \ie as $p(x; \varphi)$
and $f(y; x, \varphi)$.
$\varphi$ can be learned with a higher-level optimization
process with a loss $\ell$ defined on the \emph{solutions}.
This could take the form of the bi-level problem
\begin{equation}
  \argmin_{\varphi} \E_{x\sim p(x)} \ell(y^\star(x, \varphi); x, \varphi)\;
  \subjectto\; y^\star(x, \varphi)\in\argmin_y f(y; x, \varphi)
\label{eq:learn-context}
\end{equation}
where $y^\star(x, \varphi)$ can be replaced with an approximation
by a learned amortization model $\hat y_\theta \approx y^\star$.
The parameters $\varphi$ in \cref{eq:learn-context} can
often by end-to-end learned \emph{through the solution} of
\cref{eq:opt} to update the influence that $\varphi$
has on the solutions.
The next section turns to methods that show how to differentiate
through the value $f(y^\star(x, \varphi); x, \varphi)$ and solution
$y^\star(x, \varphi)$ to enable gradient-based learning of
$\varphi$ in \cref{eq:learn-context}.

\subsubsection{Learning $\varphi$ by differentiating
  the objective value $f(y^\star(x, \varphi); x, \varphi)$}
Methods can end-to-end learn through the \emph{optimal objective value}
$f(y^\star(x, \varphi); x, \varphi)$ to update parameters $\varphi$
that show up in the context --- \ie by taking $\ell=f$ in \cref{eq:learn-context}.
For example, variational autoencoders differentiate through
the objective value, \ie the best approximation to the ELBO,
to optimize the data log-likelihood of a parameterized
decoder $\log p(x\mid z; \varphi)$.
The theory around this is rooted in the optimization
community's studies of \emph{envelope theorems},
which describe settings where the minimum value
can be differentiated by just differentiating
the objective.
\emph{Danskin's envelope theorem} \citep{danskin1966theory}
in convex settings is one of the earliest and has been
extended into more general settings, \eg, in
\citep[Prop. A.22]{bertsekas1971control}
and \cite{carter2001foundations,milgrom2002envelope,bonnans2013perturbation}.
In the unconstrained and non-convex \cref{eq:opt},
the envelope theorem gives
\begin{equation}
  \nabla_\varphi \min_y f(y; x, \bar\varphi) = \nabla_\varphi f(y^\star(x, \bar\varphi); x, \bar\varphi)
  \label{eq:envelope}
\end{equation}
at a point $\bar\varphi$ under mild assumptions, showing
that differentiating through the $\min$ operation is equivalent to differentiating
through just the objective at the optimal solution $y^\star(x, \varphi)$.

\subsubsection{Learning $\varphi$ by differentiating the solution $y^\star(x, \varphi)$}
In addition to differentiating through the objective value, the solution
$y^\star(x, \varphi)$ can be implicitly differentiated.
The derivative $\D_\varphi y^\star(x, \varphi)$ is referred to
as the \emph{adjoint derivative}, and it is often used for
end-to-end learning
\citep{domke2012generic,gould2016differentiating,amos2017optnet,barratt2018differentiability,amos2019differentiable,agrawal2019differentiable,bai2019deep,bai2020multiscale}
and perturbation and sensitivity analysis
\citep{bank1982non,fiacco1990sensitivity,shapiro2003sensitivity,klatte2006nonsmooth,bonnans2013perturbation,still2018lectures,fiacco2020mathematical}.

Computing the adjoint derivative $\D_\varphi y^\star(x, \varphi)$ is more involved
than the value derivative using the envelope theorem
in \cref{eq:envelope} as more components of $y^\star(x)$
can change as $x$ moves infinitesimally.
An explicit closed-form solution to $y^\star(x)$ is
not available in most cases, which means that standard
methods for explicitly computing the derivative through
this computation may not work well or may break down.
For example, an optimizer to compute $y^\star(x)$ may be
explicitly unrolled through, but this may be
unstable and extremely memory- and compute-intensive
to track all of the iterations.
The adjoint derivative is typically computed with implicit
differentiation by seeing $y^\star(x)$ as an \emph{implicit}
function of $x$.
This uses the implicit function theorem,
which is originally from \citet{dini1878analisi},
and is presented in \citet[Theorem 1B.1]{dontchev2009implicit} as:
\begin{theorem}[Dini's implicit function theorem]
  \label{thm:dini}
  Let the roots of $g(y; \varphi)$ define an implicit
  mapping $Y^\star(\varphi)$ given by $Y^\star(\varphi)\defeq\{y \mid g(y;\varphi)=0\}$,
  where $\varphi\in\R^m$, $y\in\R^n$, and
  $g: \R^n\times\R^m\rightarrow\R^n$.
  Let $g$ be continuously differentiable in a neighborhood of $(\bar y, \bar \varphi)$
  such that $g(\bar y; \bar \varphi)=0$, and let the Jacobian of $g$
  with respect to $y$ at $(\bar y, \bar \varphi)$,
  \ie $\D_y g(\bar y; \bar \varphi)$, be non-singular.
  Then $Y^\star$ has a single-valued localization $y^\star$
  around $\bar \varphi$ for $\bar y$ which is continuously differentiable
  in a neighborhood $Q$ of $\bar \varphi$ with Jacobian satisfying
  \begin{equation}
    \D_\varphi y^\star(\tilde \varphi) = -\D_y^{-1} g(y^\star(\tilde \varphi); \tilde \varphi) \D_\varphi g(y^\star(\tilde \varphi); \tilde \varphi)
    \qquad \mathrm{for\ every}\; \tilde \varphi\in Q.
    \label{eq:implicit-derivative}
  \end{equation}
\end{theorem}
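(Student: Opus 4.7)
The plan is to establish existence, uniqueness, continuity, and continuous differentiability of the localization $y^\star$ in four steps, recovering the derivative formula in \cref{eq:implicit-derivative} at the last step by implicit differentiation. Let $A \defeq \D_y g(\bar y; \bar x)$, which is non-singular by hypothesis. The central object is the auxiliary map $T(y; x) \defeq y - A^{-1} g(y; x)$, which has the property that $g(y; x) = 0$ if and only if $T(y; x) = y$, so roots of $g$ coincide with fixed points of $T$ in $y$. Observe that $\D_y T(\bar y; \bar x) = I - A^{-1} A = 0$, so by continuity of $\D_y g$ (which follows from continuous differentiability of $g$), there is a neighborhood of $(\bar y, \bar x)$ in which $\|\D_y T(y; x)\| \leq \tfrac{1}{2}$.

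Next I would apply Banach's fixed-point theorem. Choose a closed ball $B_r(\bar y) \subset \R^n$ and a neighborhood $Q \subset \R^m$ of $\bar x$ small enough that, for every $x \in Q$, the map $y \mapsto T(y; x)$ is a $\tfrac{1}{2}$-contraction on $B_r(\bar y)$ and sends $B_r(\bar y)$ into itself; the self-map property uses $\|T(\bar y; x) - \bar y\| = \|A^{-1} g(\bar y; x)\| \to 0$ as $x \to \bar x$ by continuity of $g$, combined with the contraction estimate. The Banach fixed-point theorem then supplies, for each $x \in Q$, a unique $y^\star(x) \in B_r(\bar y)$ with $g(y^\star(x); x) = 0$. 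This gives the single-valued localization, and the standard Lipschitz estimate on a uniformly contractive family of fixed-point equations yields Lipschitz continuity of $y^\star$ on $Q$.

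For differentiability at an arbitrary $\tilde x \in Q$, let $\tilde y \defeq y^\star(\tilde x)$; by continuity we may shrink $Q$ so that $\D_y g(\tilde y; \tilde x)$ remains non-singular (the set of invertible matrices is open). Writing $g(y^\star(x); x) \equiv 0$ for $x$ near $\tilde x$ and expanding $g$ to first order using its continuous differentiability gives
\begin{equation}
  0 = g(y^\star(x); x) - g(\tilde y; \tilde x) = \D_y g(\tilde y; \tilde x)\,(y^\star(x) - \tilde y) + \D_x g(\tilde y; \tilde x)\,(x - \tilde x) + o(\|x - \tilde x\|),
\end{equation}
where the remainder is controlled using the Lipschitz bound on $y^\star$ to absorb the $\|y^\star(x) - \tilde y\|$ terms. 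Solving for $y^\star(x) - \tilde y$ shows that $y^\star$ is differentiable at $\tilde x$ with Jacobian $-\D_y^{-1} g(\tilde y; \tilde x)\,\D_x g(\tilde y; \tilde x)$, matching \cref{eq:implicit-derivative}, and continuity of this expression in $\tilde x$ gives continuous differentiability on $Q$.

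The main obstacle I expect is the last step: honestly controlling the first-order remainder uniformly along the curve $x \mapsto y^\star(x)$ so as to pass from the fixed-point equation to a bona fide derivative, rather than merely a formal identity obtained by differentiating $g(y^\star(x); x) = 0$ (which presupposes the differentiability one is trying to establish). An appealing alternative that circumvents this bookkeeping is to apply the inverse function theorem to the augmented map $G(y, x) \defeq (g(y; x), x)$, whose Jacobian at $(\bar y, \bar x)$ is block-triangular with blocks $A$ and $I$ and thus invertible; the implicit function $y^\star$ then appears as the first component of $G^{-1}$, and both its smoothness and the derivative formula are immediate consequences of the inverse function theorem. I would prefer the contraction-based route for self-containedness, but flag the inverse-function-theorem reduction as a cleaner fallback if the remainder estimate becomes cumbersome.
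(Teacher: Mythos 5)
The paper does not actually prove this theorem: it is quoted as a classical result from \citet[Theorem 1B.1]{dontchev2009implicit}, so there is no in-paper argument to compare against. Your contraction-mapping proof is the standard textbook argument and is correct. The key steps all check out: fixed points of $T(y;x) \defeq y - A^{-1}g(y;x)$ with $A \defeq \D_y g(\bar y;\bar x)$ coincide with roots of $g(\cdot\,;x)$; the vanishing of $\D_y T$ at $(\bar y,\bar x)$ plus continuity of $\D_y g$ gives a uniform $\tfrac12$-contraction on a small ball, and the self-map property follows from $\|A^{-1}g(\bar y;x)\|\to 0$; Banach then yields the single-valued localization, and the usual two-term triangle-inequality estimate gives the Lipschitz bound $\|y^\star(x)-y^\star(x')\|\leq 2\|A^{-1}\|\,L\,\|x-x'\|$. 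You correctly identify the one genuinely delicate point, namely that the first-order remainder in the expansion of $g$ is a priori $o\bigl(\|y^\star(x)-\tilde y\| + \|x-\tilde x\|\bigr)$ and must be converted to $o(\|x-\tilde x\|)$ via the Lipschitz bound before one can solve for the Jacobian in \cref{eq:implicit-derivative}; this is exactly the step that distinguishes a real proof from the formal chain-rule computation. Your fallback of applying the inverse function theorem to $G(y,x)\defeq(g(y;x),x)$, whose Jacobian is block-triangular with invertible diagonal blocks $A$ and $I$, is also valid and is the route many references take; it trades the explicit remainder bookkeeping for reliance on the inverse function theorem as a black box. Either route establishes the statement as given.
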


The adjoint derivative $D_\varphi y^\star(\varphi)$ can be computed
by seeing $y^\star$ as the root of an implicit
function $g(y;x,\varphi)$, which needs to be selected to
make the solution equivalent to the solution of \cref{eq:opt}.
Typically $g(y;x,\varphi)$ is an optimality system of
the optimization problem, \eg the KKT system
for constrained convex optimization problems.
For the unconstrained problem here,
the first-order optimality of
the objective $g(y;x, \varphi) \defeq \nabla_y f(y; x, \varphi)$
can be used with \cref{thm:dini} to compute
$\D_\varphi y^\star(x, \varphi)$.

\chapter{Applications of amortized optimization}
\label{sec:apps}

\begin{table}[t]
  \caption{Applications of amortized optimization covered in \cref{sec:apps}}
  \vspace{2mm}
\resizebox{\textwidth}{!}{
\begin{tabular}{ccccccc}
\S & Application & Objective $f$ & Domain $\gY$ & Context Space $\gX$ & Amortization model $\hat y_\theta$ & Loss $\gL$ \\ \toprule
\ref{sec:apps:avi} & VAE & $-\ELBO$ & variational posterior & data & full & $\gL_{\rm obj}$ \\
& SAVAE/IVAE & | & | & | & semi & | \\
\midrule
\ref{sec:apps:lista} & PSD & reconstruction & sparse code & data & full & $\gL_{\rm reg}$ \\
& LISTA & | & | & | & semi & | \\
\midrule
\ref{sec:apps:meta} & HyperNets & task loss & model parameters & tasks & full & $\gL_{\rm obj}$ \\
& LM & | & | & | & semi & $\gL^{\rm RL}_{\rm obj}$ \\
& MAML & | & | & | & | & $\gL_{\rm obj}$ \\
& Neural Potts & pseudo-likelihood & | & protein sequences & full & $\gL_{\rm obj}$ \\
\midrule
\ref{sec:apps:convex} & NeuralFP & FP residual & FP iterates & FP contexts & semi & $\gL_{\rm obj}^\Sigma$ \\
& HyperAA & | & | & | & | & $\gL_{\rm reg}^\Sigma$ \\
& NeuralSCS & CP residual & CP iterates & CP parameters & | & $\gL_{\rm obj}^\Sigma$ \\
& HyperDEQ & DEQ residual & DEQ iterates & DEQ parameters & | & $\gL_{\rm reg}^\Sigma$ \\
& NeuralNMF & NMF residual & factorizations & input matrices & | & $\gL_{\rm obj}^\Sigma$ \\
& RLQP & $R_{\rm RLQP}$ & QP iterates & QP parameters & | & $\gL^{\rm RL}_{\rm obj}$ \\
\midrule
  \ref{sec:apps:ot} & Meta OT & dual OT cost & optimal couplings & input measures & full & $\gL_{\rm obj}$ \\
  & CondOT & dual OT cost & optimal couplings & contextual information & | & $\gL_{\rm obj}$ \\
   & AmorConj & $c$-transform obj & ${\rm supp}(\alpha)$ & ${\rm supp}(\beta)$ & | & $\gL_{\rm obj}$ \\
  & $\gA$-SW & max-sliced dist & slices $\Theta$ & mini-batches & | & $\gL_{\rm obj}$ \\
\midrule
\ref{sec:apps:ctrl} & BC/IL & $-Q$-value & controls & state space & full & $\gL_{\rm reg}$ \\
& (D)DPG/TD3 & | & | & | & | & $\gL_{\rm obj}$ \\
& PILCO & | & | & | & | & $\gL_{\rm obj}$ \\
& POPLIN & | & | & | & full or semi & $\gL_{\rm reg}$ \\
& DCEM & | & | & | & semi & $\gL_{\rm reg}$ \\
& IAPO & | & | & | & | & $\gL_{\rm obj}$ \\
& SVG & $\D_\gQ$ or $-\gE_Q$ & control dists & | & full & $\gL_{\rm obj}$ \\
& SAC & | & | & | & | & $\gL_{\rm obj}$ \\
& GPS & | & | & | & | & $\gL_{\rm KL}$ \\
\bottomrule
\end{tabular}}

  \label{tab:rw}
\end{table}

This section takes a review and tour of many key applications
of amortized optimization to show some unifying ideas
that can be shared between all of these topics.
\Cref{tab:rw} summarizes the methods.
The subsections in here are meant to be standalone
and can be randomly accessed and read in any order.
I scope closely to providing the relevant context for
just the amortized optimization components and
under-emphasize the remaining context of each research area.

\textbf{Warning.}
Even though I try to provide the relevant background and notation to
present the amortized optimization components, each section is
meant to be a review rather than an introduction to
these research topics.
I defer further background to the original literature.

\section{Variational inference and
  variational autoencoders}
\label{sec:apps:avi}
Key ideas in amortized optimization originated in the
variational inference (VI) community's interest in
approximating intractable densities and integrals
via optimization.
This section focuses only on the relevant components of amortized
variational inference (AVI) used in machine learning
for the variational autoencoder (VAE) and
related generative models
\citep{kingma2013auto,rezende2014stochastic,mnih2014neural,rezende2015variational,higgins2016beta,doersch2016tutorial,kingma2019introduction}
and refer to references such as
\citet{jordan1999introduction,wainwright2008graphical,blei2017variational}
for a complete background in variational inference.
\citet{kim2020deep,marino2021learned} provide additional background
on the use of amortization and semi-amortization in these settings.
Historically, the use of an encoder network for amortized inference
is often traced back to the Helmholtz machine \citep{dayan1995helmholtz},
which uses a fully-amortized model but without a proper gradient estimator.
\citet{sjolund2023tutorial,zammit2024neural} provide further background
information and tutorials on parametric, variational,
and amortized inference.

\subsection{The variational autoencoder (VAE) by \citet{kingma2013auto}}
\label{sec:apps:vae}
A VAE models a density $p(x)$ over a high-dimensional space,
for example images, text, or videos, given samples $x\sim p(x)$.
They introduce a lower-dimensional latent space
with a known distribution $p(z)$, such as an isotropic Gaussian,
designed to capture hidden structure present
in $p(x)$.
VAEs parameterize a likelihood model $p(x; \varphi)$ with $\varphi$.
Optimizing the log-likelihood
$\log p(x; \varphi)=\log\int_z p(x\mid z; \varphi)p(z)dz$
with this latent structure is typically intractable because
of the integral over the latent space.
Variational methods overcome this by introducing
a tractable lower-bound called the
\emph{evidence lower bound} ($\ELBO$) defined by
\begin{equation}
  \log p(x; \varphi)\geq \ELBO_\varphi(\lambda; x) \defeq \E_{q(z; \lambda)}[\log p(x\mid z; \varphi)] - \kl{q(z;\lambda)}{p(z)},
  \label{eq:elbo}
\end{equation}
where $q(z; \lambda)$ is a variational distribution
over the latent space parameterized by $\lambda$
and $p(z)$ is the prior.
Given a sample $x\sim p(x)$ and fixed encoder's parameters $\varphi$
the \emph{best}
lower bound $\lambda^\star$ satisfying
\begin{equation}
  \log p(x)\geq \ELBO_\varphi(\lambda^\star; x) \geq \ELBO_\varphi(\lambda; x)
  \label{eq:best-elbo}
\end{equation}
for all $\lambda$ can be obtained by solving the
optimization problem
\begin{equation}
  \lambda^\star_\varphi(x) \in \argmax_\lambda\; \ELBO(\lambda; x, \varphi).
  \label{eq:elbo-opt}
\end{equation}
Gaussians are a common choice of the variational distribution
$q(z; \lambda)$ is in \citet{kingma2013auto},
but may cause a loose inequality in \cref{eq:best-elbo}.
\citet{rezende2015variational,cremer2018inference} explore
more expressive distributions to help make
$\ELBO(\lambda^\star; x, \varphi)$ equal to $\log p(x)$.

Amortized VI (AVI) methods predict the solution to
\cref{eq:elbo-opt} while stochastic variational
methods \citep{hoffman2013stochastic} explicitly solve it.
AVI methods learn a model $\hat \lambda_\theta: \gX\rightarrow \Lambda$
with parameters $\theta$, which is usually a feedforward neural network,
to predict the maximum value of the $\ELBO$ by optimizing
the objective-based loss
\begin{equation}
  \argmax_\theta \E_{x\sim p(x)} \ELBO_\varphi(\hat \lambda_\theta(x); x)
  \label{eq:vae-amor}
\end{equation}
where the expectation is usually approximated with a
Monte Carlo estimate from the samples.

\textbf{Summary.} This standard AVI formulation is therefore an amortized optimization method
$\gA_{\rm VAE}\defeq (-\ELBO, \Lambda, \gX, p(x), \hat \lambda_\theta, \gL_{\rm obj})$
over the (negated) $\ELBO$ where the domain of the optimization
problem is the variational parameter space $\Lambda$,
the context space $\gX$ is the sample space for the
generative model,
the samples are given from $p(x)$,
$\hat\lambda_\theta: \gX\rightarrow\Lambda$ is the
\emph{fully-amortized} model optimized with the
gradient-based loss $\gL_{\rm obj}$ over $-\ELBO$.

\textbf{Extensions.}
Analyzing and extending the amortization components
has been a key development in AVI methods.
\citet{cremer2018inference} investigate suboptimality in
these models and categorize it as coming from an
\emph{amortization gap} where the amortized model for
\cref{eq:vae-amor} does not properly solve it,
or the \emph{approximation gap} where the variational
posterior is incapable of approximating the true distribution.
Semi-amortization plays a crucial role in addressing the
amortization gap and is explored in the
semi-amortized VAE (SAVAE) by \citet{kim2018semi}
and iterative VAE (IVAE) by \citet{marino2018iterative}.
AVI methods are also used in
hierarchical \citep{sonderby2016ladder}
and sequential settings \citep{chung2015recurrent}.

\textbf{The full VAE loss.}
This section has left the parameterization
$\varphi$ of the model $p(x; \varphi)$
fixed to allow us to scope into the
amortized optimization component in isolation.
For completeness, the final step necessary to train
a VAE is to optimize the $\ELBO$ over the training
data of \emph{both} $p(x; \varphi)$ along with
the $\hat \lambda_\theta(x)$ with
\begin{equation}
  \argmax_{\varphi,\theta} \E_{x\sim p(x)} \ELBO_\varphi(\hat \lambda_\theta(x); x).
  \label{eq:vae-full}
\end{equation}

\section{Sparse coding}
\label{sec:apps:lista}
Another early appearance of amortized optimization has been in
sparse coding \citep{kavukcuoglu2010fast,gregor2010learning}.
The connection to the broader amortized optimization and
learning to optimize work has also been made by,
\eg, \citet{chen2021learning}.
\emph{Sparse coding methods} seek to reconstruct an input
from a sparse linear combination of bases
\citep{olshausen1996emergence,chen2001atomic,donoho2003optimally}.
Given a \emph{dictionary} $W_d$ of the basis vectors
and an input $x\in\gX$, the \emph{sparse code} $y^\star$ is
typically recovered by solving the optimization problem
\begin{equation}
  y^\star(x) \in \argmin_y E(y; x) \qquad
  E(y; x)\defeq \frac{1}{2}\|x-W_dy\|_2^2 + \alpha\|y\|_1,
  \label{eq:sparse-coding}
\end{equation}
where $E$ is the regularized reconstruction energy
and $\alpha\in\R_+$ is a coefficient of the $\ell_1$ term.
\Cref{eq:sparse-coding} is traditionally solved
with the Iterative Shrinkage and Thresholding Algorithm (ISTA)
such as in \citet{daubechies2004iterative}.
Fast ISTA (FISTA) by \citet{beck2009fast} improves ISTA even more
by adding a momentum term.
The core update of ISTA methods is
\begin{equation}
  y^{t+1}\defeq h_\beta\left(W_ex + Sy^t\right)
  \label{eq:ista}
\end{equation}
$W_e\defeq (1/L)W_d^\top$ is the \emph{filter} matrix,
$L$ is an \emph{upper bound on the largest eigenvalue} of $W_d^T W_d$,
$S\defeq I-W_eW_d$ is the \emph{mutual inhibition} matrix,
and
$h_\beta(v)\defeq\sign(v)\max\left\{0, |v|-\beta\right\}$
is the \emph{shrinkage function} with threshold $\beta$, usually
set to $\alpha/L$.
ISTA methods are remarkably fast ways of solving
\cref{eq:sparse-coding} and
the machine learning community has explored the use
of learning to make ISTA methods even faster
that can be seen as instances of amortized optimization.

\subsection{Predictive Sparse Decomposition (PSD) by \citet{kavukcuoglu2010fast}}
PSD predicts the best sparse code using fully-amortized models
of the form
\begin{equation}
  \hat y_\theta(x) \defeq D\tanh(Fx),
  \label{eq:fast-ista}
\end{equation}
where the parameters are $\theta=\{D,F\}$.
Then, given a distribution over vectors $p(x)$,
PSD regresses the prediction onto the true sparse code $y^\star(x)$
by solving
\begin{equation}
  \argmin_\theta \E_{x\sim p(x)} \|\hat y_\theta(x) - y^\star(x)\|_2^2,
  \label{eq:psd-loss}
\end{equation}
where instead of solving for $y^\star(x)$ directly
with (F)ISTA, they also iteratively approximate it while
iteratively learning the model.

\textbf{Summary.}
$\gA_{\rm PSD}\defeq (E, \gY, \gX, p(x), \hat y_\theta, \gL_{\rm reg})$

\subsection{Learned ISTA (LISTA) by \citet{gregor2010learning}}
LISTA further explores the idea of predicting solutions
to sparse coding problems by proposing a semi-amortized model
that integrates the iterative updates of ISTA into the model.
LISTA leverages the soft-thresholding operator $h$ and
considers a semi-amortized model over the domain $\gY$
that starts with $x^0_\theta\defeq 0$
and iteratively updates $x^{t+1}_\theta\defeq h_{\beta}(Fx+Gx_\theta^t)$.
Running these updates for $K$ steps results in a
final prediction $\hat y(x)\defeq x^K_\theta$ parameterized
by $\theta=\{F, G, \beta\}$ that is also optimized with
the regression-based loss to the ground-truth
sparse codes as in \cref{eq:psd-loss}.

\textbf{Summary.}
$\gA_{\rm LISTA}\defeq (E, \gY, \gX, p(x), \hat y_\theta, \gL_{\rm reg})$

\section{Multi-task learning and meta-learning}
\label{sec:apps:meta}

Many multi-task learning and meta-learning methods also
use amortized optimization for parameter learning.
This section takes a glimpse at this viewpoint, which has
also been observed before in \citet{shu2017amortized,gordon2018meta}.

\textbf{Background.}
\emph{Multi-task learning} \citep{caruana1997multitask,ruder2017overview}
methods use shared representations and models to learn
multiple tasks simultaneously.
\emph{Meta-learning} methods \citep{ward1937reminiscence,harlow1949formation,schmidhuber1987evolutionary,kehoe1988layered,schmidhuber1995learning,thrun1998learning,baxter1998theoretical,hochreiter2001learning,vilalta2002perspective,lv2017learning,li2016learning,li2017learning,lake2017building,weng2018metalearning,hospedales2020meta}
seek to learn how to learn and are often used
in multi-task settings.
Multi-task and meta-learning settings typically define
\emph{learning tasks} $\gT\sim p(\gT)$ that each consist of
a classification or regression task.
The tasks could be different hyper-parameters of a model,
different datasets that the model is trained on, or
in some cases different samples from the same dataset.
Each task has an associated
\emph{task loss} $\gL_{\gT}(\hat y_\theta)$
that measures how well a parameterized model $\hat y_\theta$
performs on it.
There is typically a distribution over tasks $p(\gT)$ and
the goal is to find a model that
best-optimizes the expectation over task losses by solving
\begin{equation}
  \argmin_\theta \E_{\gT\sim p(\gT)} \gL_{\gT}(\hat y_\theta). \label{eq:multi-task}
\end{equation}
The motivation here is that there is likely shared structure
and information present between the tasks that learning
methods can leverage.
The next section goes through methods that solve \cref{eq:multi-task}
using objective-based amortized optimization methods.

\subsection{Fully-amortized hyper networks (HyperNets)}
HyperNEAT \citep{stanley2009hypercube} and
Hypernetworks \citep{ha2016hypernetworks}
predict the optimal parameters to a network given a
data sample and can be seen as fully-amortized optimization.
The tasks here $\gT=(x,y^\star(x))$ usually consist of
a sample from some data distribution
$x\sim p(x)$ along with a target $y^\star(x)$
for classification or regression,
inducing a task distribution $p(\gT)$.
HyperNets propose to predict $y^\star(x)$ with a \emph{prediction model}
$\hat y_\varphi(x)$ parameterized by $\varphi$.
Instead of learning this model directly, they propose to
use an \emph{amortization model} $\hat \varphi_\theta(x)\in\Phi$
to predict the parameters to the model
$\hat y_{\hat \varphi_\theta(x)}(x)\eqdef \hat y_\theta(x)$
that best-optimize the \emph{task loss} $\gL_\gT(\hat y_\theta(x), y^\star(x))$
for each data point.
The amortization model is usually a
black-box neural network that is fully-amortized and
predicts the parameters from only the task's data
without accessing the task loss.
The models are learned with an objective-based loss
\begin{equation}
  \argmin_\theta \E_{\gT\sim p(\gT)} \gL_\gT(\hat y_\theta(x), y^\star(x)).
  \label{eq:hypernet}
\end{equation}

\textbf{Summary.}
$\gA_{\rm HyperNet}\defeq(\gL_\gT, \Phi, \gT, p(\gT), \hat \varphi_\theta, \gL_{\rm obj})$

\subsection{Learning to optimize (LM) by \citet{li2016learning}}
\citet{li2016learning} consider three multi-task settings for
logistic regression, robust linear regression, and neural network classification
where the different tasks are different datasets the models are
trained on.
Given a dataset $\gT=\{x_i,y_i\}_{i=1}^N$ to train on,
they again search for the parameters
$\hat \varphi_\theta(\gT)\in\Phi$
of another prediction model $\hat y_{\hat \varphi_\theta(\gT)}(x)\eqdef \hat y_\theta(x)$
that performs well on a loss
$\gL_\gT(\hat y_\theta)$
that measures how well the model fits to the dataset.
In contrast to HyperNets, LM consider each task to be an
entire dataset rather than just a single data point,
and LM considers semi-amortized models that are able
to iteratively refine the prediction.
They use a semi-amortized model that starts with
an initial iterate $\hat \varphi^0_\theta(\gT)$ and then
predicts the next iterate with
\begin{equation}
  \hat \varphi^{t+1}_\theta = g_\theta(\{\varphi^i, \gL_\gT(\hat \varphi^i),
  \nabla_\varphi \gL_\gT(\hat \varphi^i), \Delta^i\}),
  \label{eq:LM-model}
\end{equation}
where the update model $g_\theta$ takes the
last $i\in\{t-H,\ldots,t\}\cap \gZ_{\geq 0}$ iterates as the input,
along with the objective, gradient, and objective improvement
as auxiliary information.
This model generalizes methods such as gradient descent
that would just use the previous iterate and gradient.
The experiments use $H=25$ and typically run
the model updates for 100 iterations.
They want to learn the model with an objective-based
loss here and take the viewpoint that it can be seen as
an MDP that can be solved with the guided policy search
\citep{levine2013guided} method for reinforcement learning.
\citet{li2017learning} further develops these ideas for
learning to optimize neural network parameters.

\textbf{Summary.}
$\gA_{\rm LM}\defeq(\gL_\gT, \Phi, \gT, p(\gT), \hat \varphi_\theta, \gL_{\rm obj}^{\rm RL})$

\subsection{Model-agnostic meta-learning (MAML) by \citet{finn2017model}}
As discussed in \cref{sec:semi-domain},
MAML can be seen as a semi-amortized optimization method.
They also seek to predict the parameters
$\hat \varphi_\theta(\gT)\in\Phi$
of prediction model $\hat y_{\hat \varphi_\theta(\gT)}(x)\eqdef \hat y_\theta(x)$
in a multi-task setting with tasks $\gT\sim p(\gT)$.
They propose to only learn to predict
an initial iterate $\hat \varphi^0_\theta(\gT)=\theta$ and then
update the next iterates with gradient-based updates such as
\begin{equation}
  \hat \varphi^{t+1}_\theta = \varphi^t_\theta - \alpha\nabla_\varphi \gL_\gT(\hat \varphi^t_\theta),
  \label{eq:MAML-model}
\end{equation}
where $\gL_\gT(\varphi)$ is the task loss obtained by
the model $\hat y_\varphi$ parameterized by $\varphi$.
MAML optimizes this model with an objective-based
loss through the final prediction.

\textbf{Summary.}
$\gA_{\rm MAML}\defeq(\gL_\gT, \Phi, \gT, p(\gT), \hat \varphi_\theta, \gL_{\rm obj})$

\subsection{Protein MSA modeling with the Neural Potts Model}
\label{sec:apps:potts}
\citet{sercu2021neural} proposes a fully-amortized solution to
fit a Potts model to a protein's multiple sequence alignment (MSA).
Each task consists of a finite MSA $\gM\defeq\{x_i\}$
and they use a fully-amortized model $\varphi_\theta(\gM)\in\Phi$
to predict the optimal parameters of a Potts model
$p(\gM; \varphi)$ fit to the data.
The model $\varphi_\theta$ is a large attention-based sequence
model that takes the MSA as the input.
Learning is done with the objective-based loss
\begin{equation}
  \argmin_\theta \E_{\gM\sim p(\gM)} \gL_{\rm PL}(\varphi_\theta(\gM))
  \label{eq:neural-potts-loss}
\end{equation}
to optimize the \emph{pseudolikelihood} $\gL_{\rm PL}$ of
the Potts model.

\citet{sercu2021neural} surprisingly observes that amortization results
in \emph{better} solutions than the classical method
for the Potts model parameter optimization with a finite MSA.
They refer to this as the \emph{inductive gain} and attribute
it to the fact that they only have a finite MSA from each
protein and thus amortizing effectively shares information
between proteins

\textbf{Summary.} $\gA_{\rm NeuralPotts}=(\gL_{\rm PL}, \Phi, \gM, p(\gM), \hat \varphi_\theta, \gL_{\rm obj})$

\subsection{Other relevant multi-task and meta-learning work}
The literature of multi-task learning and meta-learning
methods is immense and often build on the preceding concepts.
The following selectively summarizes a few other relevant ideas:

\begin{enumerate}
\item \citet{ravi2016optimization} also propose optimization-based
  semi-amortized models that use a recurrent neural network
  to predict parameter updates for meta-learning in multi-task
  learning settings.
\item Latent embedding optimization (LEO) by \citet{rusu2018meta}
  and fast context adaptation (CAVIA) by \citet{zintgraf2019fast}
  perform semi-amortization over a learned latent space.
  This uses the powerful insight that semi-amortizing over
  the low-level parameters $\varphi$ had a lot of redundancies
  and may not be able to easily capture task-specific
  variations that can be learned in a latent space.
\item \citet{andrychowicz2016learning}
  consider semi-amortized models based on recurrent neural networks
  and show applications to amortizing quadratic optimization,
  neural networks for MNIST and CIFAR-10 classification,
  and neural style transfer.
\item \citet{chen2017learning} consider RNN-based semi-amortized models
  in settings where the gradient of the objective is not
  used as part of the model and show applications in Bayesian
  optimization, Gaussian process bandits, control, global optimization,
  and hyper-parameter tuning.
\item \citet{wichrowska2017learned} continue studying
  RNN-based semi-amortized models for classification.
  They scale to Inception V3 \citep{szegedy2016rethinking}
  and ResNet V2 \citep{he2016identity} architectures
  and scale to classification tasks on ImageNet
  \citep{russakovsky2015imagenet}, presenting many
  insightful ablations along the way.
\item \citet{franceschi2018bilevel} further analyze the
  bilevel optimization aspects of gradient-based meta-learning
  and present new theoretical convergence results and
  empirical demonstrations.
\item MetaOptNet \citep{lee2019meta} and R2D2 \citep{bertinetto2018meta}
  consider semi-amortized models based on differentiable
  optimization and propose to use differentiable SVMs
  and ridge regression as part of the amortization model.
\item Almost No Inner Loop by \citet{raghu2019rapid}
  study what parameters should be adapted within the amortization
  model and demonstrate settings where adapting only
  the final layer performs well, indicating that the shared model
  between tasks works well because it is learning
  shared features for all the tasks to use.
\item \citet{wang2021bridging} further connect gradient-based meta
  learning methods to multi-task learning methods.
\item HyperTransformer \citep{zhmoginov2022hypertransformer}
  study amortized models based on transformers
  \citep{vaswani2017attention}
  and show applications to few-shot classification.
\item \citet{metz2021gradients} study and emphasize the difficulty
  of optimizing objective-based loss with just gradient
  information due to natural chaotic-based failure models
  of the amortization model.
  They focus on iterated dynamical systems and study where
  chaotic losses arise in physics and meta-learning.
  They identify the spectrum of the Jacobian as one
  source of these issues and give suggestions for
  remedying these undesirable behaviors to have learning
  systems that are well-behaved and stable.
\item \citet{metz2019using} learn optimizers for robust
  classification tasks. They find that optimizers can
  uncover ways of quickly finding robust parameterizations
  that generalize to settings beyond the corruptions
  used during training.
\item \citet{metz2019understanding} study semi-amortized
  optimization of convolutional architectures and identify
  and focus on key issues of
  1) biased gradients from truncated BPTT and 2) exploding gradient
  norms from unrolling for many timesteps.
  They overcome both of these issues by optimizing the
  smoothed loss in \cref{eq:smooth-loss}
  with a variant of the gradient estimator proposed in
  \citet{parmas2018pipps} for reinforcement learning.
  This estimator re-weights reparameterization gradients and
  likelihood ratio gradients using inverse variance
  weighting \citep{fleiss1993review}.
  \citet{parmas2021unified} further unify the likelihood ratio
  and reparameterization gradients by connecting them
  with the divergence theorem which enables them to
  create a generalized estimator combining them.
\item \citet{merchant2021learn2hop} further build on
  the advancements of \citet{metz2019understanding} for
  semi-amortized atomic structural optimization, which
  is a setting rife with poor local minima.
  Their models learn to ``hop'' out of these minima
  and are able to generalize more efficiently to
  new elements or atomic compositions.
\item \citet{zhang2018graph,knyazev2021parameter} explore
  the fully-amortized HyperNets for architecture search
  for predicting parameters on CIFAR-10 and ImageNet.
  These models take a model's compute graph as the
  context and use a graph neural network to predict
  the optimal parameters of that architecture on a task.
\item \citet{huang2022optimizer} show how to use
  information from existing classes of ``teacher''
  optimizers to learn a new ``student'' one that
  can result in even better performance,
  which they refer to as \emph{optimizer amalgamation}.
  This is done by optimizing for the objective-based
  loss with additional regression-based terms that
  encourage the learned optimizer to match one or
  more trajectories of the existing optimizers.
\item \citet{harrison2022closer} look at the stability
  of learned optimization methods from a dynamical
  systems perspective and propose a number of modifications
  to improve the stability and generalization.
\item \citet{metz2022velo} continue scaling a semi-amortized
  learned optimizer that predicts parameter updates for
  training machine learning models with millions of parameters.
  They train the optimizer for four thousand ATP-months
  and show that it outperforms many standard parameter optimization
  methods on standard learning tasks.
  One standout feature is that their amortization model does
  not assume a fixed-size context or prediction space
  but instead is able to predict updates for models with varying
  numbers of parameters.
  The key insight to supporting a variable number of parameters
  is to decompose the amortization model across parameter groups
  using an LSTM and hyper-network.
\item MetaOptimize \citep{sharifnassab2024metaoptimize} predicts
  the solutions to hyper-parameter optimization problems for machine learning,
  showing results on image classification and language models.
\end{enumerate}

\section{Fixed-point computations and convex optimization}
\label{sec:apps:convex}

\begin{definition}
  A \emph{fixed point} $y^\star\in\R^n$ of a map
  $g:\R^n\rightarrow\R^n$ is where $g(y^\star) = y^\star$.
  \label{def:fixed}
\end{definition}

Continuous fixed-point problems as in \cref{def:fixed}
and illustrated in
\cref{fig:fp}
are ubiquitous in engineering and science and
amortizing their solutions is an activate research area.
Let $\gR(y; x)\defeq g(y; x)-y$ be the \emph{fixed-point residual}
with squared norm $\gN(y; x)\defeq \|\gR(y; x)\|_2^2$.
Fixed-point computations are connected to continuous
unconstrained optimization as any fixed-point problem can be
transformed into \cref{eq:opt} by optimizing the
residual norms with:
\begin{equation}
  y^\star(x)\in\argmin_y \gN(y; x),
  \label{eq:fixed-point-opt}
\end{equation}
and conversely \cref{eq:opt} can be transformed into
a fixed-point problem via first-order optimality
to find the fixed-point of
$\nabla f(y; x) - y = 0$.
Thus methods that to amortize the solutions to \cref{eq:opt}
can help amortize the solutions
to fixed-point computations \cref{def:fixed}.

\begin{figure}[t]
\vspace{-3mm}
\centering
\includegraphics[width=1.8in]{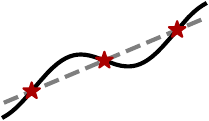}
\caption{Illustration of the fixed points of a map $f(x)$.
  The map is shown in black and the fixed points (red stars)
  are where the map is equal to the identity (shown in grey),
  \ie $f(x)=x$.
}
\label{fig:fp}
\end{figure}

\textbf{Solving and accelerating fixed-point computations.}
Fixed points can be found with \emph{fixed-point iterations}
$y^{t+1}\defeq f(y^t)$ or by using
\emph{Newton's root-finding method} on the
fixed-point residuals with
\begin{equation}
  y^{t+1}\defeq y^t - \left(\D_y g(y^t)\right)^{-1} g(y^t).
  \label{eq:newton}
\end{equation}
These methods can also be \emph{accelerated} by using
a sequence of past iterates instead of just the most
recent iterate.
\emph{Anderson acceleration} methods
\citep{anderson1965iterative,walker2011anderson,zhang2020globally}
are standard and generate updates that combine the
previous $M+1$ iterates $\{y^i\}_{i=t-M}^t$  with an update of the form
\begin{equation}
  \label{eq:aa-update}
  {\rm AA\_Update}^t(\{y_i\}, \alpha, \beta) \defeq
  \beta \sum_{i=0}^M \alpha_i g(y^{t-M+i}) +
  (1-\beta)\sum_{i=0}^M\sum_{i=0}^M \alpha_i y^{t-M+i},
\end{equation}
where $\beta\in[0,1]$ is a coefficient that controls
whether the iterates or application of $g$ on the iterates
should be used, and $\alpha\in\R^{M+1}$ where $1^\top\alpha=1$
are the coefficients used to combine the previous iterates.
A basic AA method sets $\beta=1$ and solves
\begin{equation}
  \label{eq:aa-alpha}
  \alpha^\star\defeq \argmin_\alpha \|\gR(y^i)\alpha\|_2\; \subjectto\; 1^\top \alpha = 1
\end{equation}
for $i\in\{t-M,t\}$ with least squares.
Other methods such as
\emph{Broyden's method} \citep{broyden1965class} can
also accelerate fixed-point computations by turning them
into root-finding problems.

\subsection{Neural fixed-point acceleration (NeuralFP) and conic optimization with the splitting cone solver (NeuralSCS)}
\label{sec:apps:neural-fp}
\emph{Neural fixed-point acceleration} \citep{venkataraman2021neural}
proposes a semi-amortized method for computing fixed-points
and use it for convex cone programming.
Representing a latent state at time $t$ with $\hat h^t$,
they parameterize the initial iterate
$\left(\hat y^0, \hat h^0\right) = {\rm init}_\theta(x)$
with an \emph{initialization model} ${\rm init_\theta}$
and perform the fixed-point computations
\begin{equation}
  \begin{aligned}
    \tilde x^{t+1} &= f(\hat y^t; x) \\
    \left(\hat y^{t+1},\hat h^{h+1}\right) &= {\rm acc}_\theta(\hat y^t, \tilde x^{t+1}, \hat h^t)
  \end{aligned}
  \label{eq:neural-fp}
\end{equation}
using an \emph{acceleration model} ${\rm acc_\theta}$
that is typically a recurrent model that predicts the
next iterate given the past sequence of iterates.
\citet[Prop. 1]{venkataraman2021neural} discuss how
this model captures standard AA as an instance
by setting the models equal to the standard update
that doesn't use learning.
They learn this model to amortize \cref{eq:fixed-point-opt}
over a distribution of contexts $p(x)$ with an objective-based loss
that solves
\begin{equation}
  \argmin_\theta \E_{x\sim p(x)} \sum_{t=0}^K \gN(\hat y^t_\theta(x)),
  \label{eq:neural-fp-loss}
\end{equation}
where the fixed-point residual norm $\gN$ is scaled by a
context-specific \emph{normalization factor}.

\emph{NeuralSCS} \citep{venkataraman2021neural} applies this
neural fixed-point acceleration to solve
\emph{constrained convex cone programs}
solved by the splitting cone solver (SCS)
\citep{o2016conic}
of the form
\begin{equation}
  \label{eq:cone_problem}
\begin{aligned}[c]
  & \minimize\; c^{T}x \\
  & \st \; Ax + s = b \\
  & \qquad (x,s) \in \mathbb{R}^n \times \mathcal{K}
\end{aligned}
\qquad
\begin{aligned}[c]
  & \maximize\; -b^{T}y \\
  & \st\; -A^{T}y + r = c \\
  & \qquad (r,y) \in \{0\}^n \times \mathcal{K}^*
\end{aligned}
\end{equation}
where $x \in \mathbb{R}^n$ is the primal variable, $s \in
\mathbb{R}^m$ is the primal slack variable, $y \in \mathbb{R}^m$ is
the dual variable, and $r \in \mathbb{R}^n$ is the dual residual. The
set $\mathcal{K} \in \mathbb{R}^m$ is a non-empty convex cone with
dual cone $\mathcal{K}^* \in \mathbb{R}^m$.
where $x \in \mathbb{R}^n$ is the primal variable, $s \in
\mathbb{R}^m$ is the primal slack variable, $y \in \mathbb{R}^m$ is
the dual variable, and $r \in \mathbb{R}^n$ is the dual residual. The
set $\mathcal{K} \in \mathbb{R}^m$ is a non-empty convex cone.
SCS uses the \emph{homogeneous self-dual embedding} to view
\cref{eq:cone_problem} as a fixed-point computation
over $\gZ=\R^{n\times m\times 1}$ with a scalar-valued scaling factor
as the last component.

NeuralSCS proposes a semi-amortized model to predict
the solution to the fixed point of the self-dual embedding
that solves \cref{eq:cone_problem}.
Their semi-amortized model $\hat z_\theta(\phi)$ takes
a context $\phi$ as the input and outputs a solution to the
self-dual embedding by composing the SCS iterations $f$ with
the learned fixed-point acceleration modules
(${\rm init}_\theta,{\rm acc}_\theta$).

\textbf{Summary.}
$\gA_{\rm NeuralSCS}\defeq\left(\gN, \gZ, \phi, p(\phi), \hat z_\theta, \gL_{\rm obj}^\Sigma\right)$

\subsection{Neural acceleration for matrix factorization (NeuralNMF)}
\citet{sjolund2022graphbased} use semi-amortized neural acceleration
to find low-rank factorizations of an input matrix $V$ of the form:
\begin{equation}
  V\approx WH^\top, \qquad W\geq 0, H\geq 0,
  \label{eq:nmf}
\end{equation}
where the basis matrix $W\in\R^{m\times r}$ and
mixture matrix $H\in\R^{n\times r}$ are elementwise
non-negative matrices of rank $r\leq\min(m,n)$.
Let $Z=(W,H)$. Taking the norm of the residual of \cref{eq:nmf}
leads to the optimization formulation
\begin{equation}
  Z^\star(V)\in \argmin_{W,H\geq 0} \gN_{\rm NMF}(W, H; V) \qquad
  \gN_{\rm NMF}(W, H; V)\defeq \frac{1}{2}\|WH^\top - V\|_F^2,
  \label{eq:nmf-opt}
\end{equation}
which can be solved with ADMM \citep{boyd2011distributed}
using alternating steps on $H$ and $V$ as done in
\citet{huang2016flexible}.
Given a distribution over input matrices $V$,
\citet{sjolund2022graphbased}, augment the ADMM approach
from \citet{huang2016flexible} with transformer-based
initialization and acceleration modules.
This semi-amortized model is learned with an
objective-based loss and unrolls through
the ADMM iterations for learning.

\textbf{Summary.}
$\gA_{\rm NeuralNMF}\defeq\left(\gN_{\rm NMF}, \gZ, V, p(V), \hat Z_\theta, \gL_{\rm obj}^\Sigma\right)$

\subsection{HyperAnderson acceleration and
  deep equilibrium models (HyperDEQ)}
\label{sec:apps:deq}

\citet{bai2022neural} similarly proposes a semi-amortized
method for computing fixed-points and use it to
improve \emph{Deep equilibrium (DEQ) models}
\citep{bai2019deep,bai2020multiscale,gurumurthy2021joint}.
Their learned variant of AA, called \emph{HyperAnderson acceleration}
uses models that predict the initial point $\hat y^0_\theta(x)$
and coefficients $\alpha_\theta(x; G)$ and $\beta_\theta(x; G)$
and result in iterations of the form
\begin{equation}
  G^{t+1}_\theta, \hat y^{t+1}_\theta \defeq
  {\rm AA\_Update}(\{\hat y^t_\theta\},
    \alpha^t_\theta(x; G^t_\theta), \beta^t_\theta(x, G^t_\theta)),
  \label{eq:hyperaa}
\end{equation}
where $G^t\defeq \gR(x^t)$ is the fixed-point residual at
iteration $t$ and the model's final prediction is
$\hat y_\theta(x)\defeq x^K$.
Learning is performed by optimizing a summed regression-based loss
that encourages the fixed-point iterations to converge
as fast as possible by optimizing
\begin{equation}
  \argmin_\theta \gL_{\rm HyperAA}(\hat y_\theta) \qquad \gL_{\rm HyperAA}(\hat y_\theta) \defeq \E_{x\sim p(x)} \sum_{t=0}^K w_t \|y^\star - \hat y_\theta^t\|_2^2 + \Omega(\alpha^t),
  \label{eq:hyperaa-loss}
\end{equation}
where $\Omega$ is a regularizer on $\alpha^t$ that is annealed
to equal zero by the end of training and the
weights $(w_t)$ are set to be monotonically increasing to
penalize the later iterations for not reaching the fixed point.

\emph{Deep equilibrium (DEQ) models} \citep{bai2019deep,bai2020multiscale}
investigate implicit layers that parameterize and solve
fixed-point computations and have been a flagship for
``infinite depth'' vision and language models.
Given an input $x\in\gX$, such as an image or language sequence
to process, a DEQ model finds a fixed point $y^\star(x)$
of $g_\varphi(y; x)$ to make a prediction for a task,
such as regression or classification.
This fixed-point problem can again be interpreted as
finding the minimum norm of the residual
$\gN(y; x)\defeq ||y-g_\varphi(y;x)||_2^2$
as
\begin{equation}
  y^\star(x)\in \argmin_x \gN(y; x).
  \label{eq:deq-opt}
\end{equation}
\citet{bai2022neural} propose to use the HyperAnderson Acceleration
model and loss to semi-amortize DEQs by learning the initial
iterate and AA update coefficients, resulting in a setup of the form
$\gA_{\rm HyperDEQ}\defeq(\gN, \gY, \gX, p(x), \hat y_\theta, \gL_{\rm HyperAA})$.

\subsection{Comparing NeuralFP and HyperAA}
Neural fixed-point acceleration (NeuralFP) by \citet{venkataraman2021neural}
and
HyperAnderson Acceleration (HyperAA) by \citet{bai2022neural}
can both generally be applied to semi-amortize
fixed-point computations by parameterizing updates
based on Anderson Acceleration, even though they are
instantiated and evaluated on very different classes of
fixed-point computations.
Here is a brief comparison of the methods:
\vspace{8mm}

\noindent\begin{minipage}[t]{0.5\textwidth}
\noindent\textbf{Neural fixed-point acceleration}
\begin{itemize}[leftmargin=*,noitemsep]
\item Learn the initial iterate
\item Learn the entire update
\item Use an objective-based loss
\end{itemize}
\end{minipage}\hfill\begin{minipage}[t]{0.5\textwidth}
\textbf{HyperAnderson Acceleration}
\begin{itemize}[leftmargin=*,noitemsep]
\item Learn the initial iterate
\item Learn $\alpha,\beta$ for the update
\item Use an regression-based loss
\end{itemize}
\end{minipage}

\subsection{RLQP by \citet{ichnowski2021accelerating}}
\label{sec:apps:qprl}
RLQP \citep{ichnowski2021accelerating} amortizes solutions to constrained
convex quadratic optimization problems of the form
\begin{equation}
  x^\star(\phi)\in \argmin_x \frac{1}{2} x^\top Px+ q^\top x\; \subjectto\; l\leq Ax\leq u,
  \label{eq:qp}
\end{equation}
where $x\in\R^n$ is the \emph{domain} of the optimization problem
and $\phi=\{P,q,l,A,u\}$ is the \emph{context} or \emph{parameterization}
(from a larger space $\phi\in\Phi$)
of the optimization problem with $P\succ 0$ (symmetric positive semi-definite).
They build on the OSQP solver \citep{stellato2018osqp} for these
optimization problems, which is based on operator splitting.
Without over-relaxation, the core of OSQP uses updates
that first solve the system
\begin{equation}
  \begin{bmatrix}
    P+\sigma I & A^\top \\
    A & -{\rm diag}(\rho^t)^{-1} \\
  \end{bmatrix}
  \begin{bmatrix}
    x^{t+1} \\
    v^{t+1}
  \end{bmatrix}
  =
  \begin{bmatrix}
    \sigma x^t - q \\
    z^t - {\rm diag}(\rho^t)^{-1} y^t
  \end{bmatrix}
  \label{eq:osqp-system}
\end{equation}
and then updates
\begin{equation}
  \begin{aligned}
  \tilde z^{t+1} &\defeq z^t+{\rm diag}(\rho^t)^{-1}(v^{t+1}-y^t) \\
  z^{t+1} &\defeq \Pi\left(\tilde z^{t+1}+{\rm diag}(\rho^t)^{-1}y^t\right) \\
  y^{t+1} &\defeq x^t + {\rm diag}(\rho)\left(\tilde z^{t+1}-z{t+1}\right),
  \end{aligned}
  \label{eq:osqp-update}
\end{equation}
where $y,v$ are dual variables,
$z,\tilde z$ are auxiliary operator splitting variables,
$\sigma$ is a regularization parameter, and
$\rho^t\in\R^m_+$ is a step-size parameter.
Combining all of the variables into a state
$s\defeq(y, \lambda, \tilde z, z)$ living in
$s\in\gS$, the update can be written as
$s^{t+1}\defeq{\rm OSQP\_UPDATE}(s^t, \rho^t)$.

RLQP proposes to use these OSQP iterates as a
semi-amortized model with the iterates
$\{s^t, \rho^t\}$. They propose to only parameterize and
learn to predict the step size
$\rho^{t+1}\defeq \pi_\theta(s^t)$, with a
neural network amortization model $\pi_\theta$.
They model the process of predicting the optimal $\rho$ as
an MDP and define a reward $R_{\rm RLQP}(s, \rho)$ that
is $-1$ if the QP is not solved
(based on thresholds of the primal and dual residuals)
and $0$ otherwise, \ie each episode rolls
out the OSQP iterations with a policy predicting
the optimal step size.
They solve this MDP with TD3 by \citet{fujimoto2018td3}
to find the parameters $\theta$.

\textbf{Summary.}
$\gA_{\rm RLQP}\defeq (R_{\rm RLQP}, \gS\times\R^m_+, \Phi, p(\phi), \pi_\theta, \gL^{\rm RL}_{\rm obj})$

\section{Optimal transport}
\label{sec:apps:ot}

\textbf{Preliminaries.}
Optimal transport methods seek to optimally move mass between measures.
Standard references and introductions include
\citet{villani2009optimal,santambrogio2015optimal,peyre2019computational},
and this section concisely reviews key concepts.
Given two measures $(\alpha, \beta)$ supported on spaces $(\gX, \gY)$,
the \emph{Kantorovich problem} (\eg in \citet[Remark~2.13]{peyre2019computational})
solves
\begin{equation}
  \pi^\star(\alpha, \beta, c)\in\arginf_{\pi\in\gU(\alpha, \beta)} \int_{\gX\times \gY} c(x,y){\rm d}\pi(x,y),
  \label{eq:kantorovich}
\end{equation}
where the \emph{coupling} $\pi$ is joint distributions over the product of the measures,
$\gU$ is the set of admissible couplings,
$c$ is a cost.
The \emph{dual} of \cref{eq:kantorovich},
\eg in \citet[Eq.~2.31]{peyre2019computational},
can be represented by
\begin{equation}
  f^\star(\alpha, \beta, c)\in\argsup_{f} J(f)
  \label{eq:kantorovich-dual}
\end{equation}
where the \emph{dual objective} is defined by
\begin{equation}
  J(f)\defeq\int_\gX f(x){\rm d}\alpha(x) + \int_\gY f^c(y) {\rm d}\beta(y)
  \label{eq:kantorovich-dual-obj}
\end{equation}
over \emph{continuous dual potential functions} $f: \gX \rightarrow \R$ where
\begin{equation}
  f^c(y)\defeq\inf_x c(x, y)-f(x)
  \label{eq:c-transform}
\end{equation}
is the \emph{$c$-transform} operation.
In Euclidean spaces with the negative inner product cost \citep[eq.~5.12]{villani2009optimal},
$f$ is a convex function and the $c$-transform operation $f^c$
in \cref{eq:c-transform} is the standard Legendre-Fenchel transform,
also known as the convex conjugate.
When the measures $\alpha,\beta$ are discrete, a coupling $\pi$
in the primal (\cref{eq:kantorovich}) can be represented as a matrix,
and the potential $f$ in the dual (\cref{eq:kantorovich-dual})
can be represented as a finite-dimensional vector and solved with
a linear programming formulation or Sinkhorn iterations
\citep{cuturi2013sinkhorn} in the entropic setting.

The following sections overview methods that amortize these optimization
problems arising for optimal transport.
\Cref{sec:apps:meta-ot} discusses methods that amortize multiple OT
problems and map from the measures and cost to the optimal coupling in
\cref{eq:kantorovich} or duals in \cref{eq:kantorovich-dual}.
\Cref{sec:apps:amor-conj} discusses methods that amortize
the $c$-transform operation in \cref{eq:c-transform} that arises
as a repeatedly-solved subproblem when solving a single OT problem.
\Cref{sec:apps:amor-sliced} overviews amortizing a slicing optimization
problem that arises when projecting measures down to a single dimension
for more computationally efficient solves.

\begin{remark}
\textbf{The Wasserstein GAN (WGAN) by \citet{arjovsky2017wasserstein} is not amortized optimization.}
While the continuous Wasserstein-1 potentials are estimated using
samples from the generated and real data distributions,
this is not performing amortized optimization because these potentials
only optimize a single optimization transport problem between the generated
and real data distributions.
Changing the generated distribution indeed changes the optimal transport
problem, but it's not important to solve the optimal transport
problem between older generated distributions.
The WGAN potentials during training can better be interpreted
as warm-starting new optimal transport problems given by the generator's distribution.
\end{remark}

\subsection{Amortizing solutions to optimal transport problems}
\label{sec:apps:meta-ot}

Many computational OT methods focus on computing the solution mapping from
the measures and cost to the optimal coupling (\cref{eq:kantorovich})
or duals (\cref{eq:kantorovich-dual}). When repeatedly solving
optimal transport problems, this solution mapping is an optimization
problem that can be amortized.
These methods for predicting the optimal duals of OT problems
are also related to \citet{dinitz2021faster,khodak2022learning},
which predicts the dual solutions to matching problems.
They are also related to other heuristic-based initializations
for OT problems that do not use amortization
such as \citet{thornton2022rethinking}.

\subsubsection{Meta Optimal Transport by \citet{amos2022meta}}
Meta OT proposes to use a hypernetwork for this amortization.
Here, they consider settings where there is a \emph{meta-distribution}
over the measures to couple and costs to use, \ie $p(\alpha, \beta, c)$
and map directly from representations of the input measures and cost to
the optimal dual variables, \ie $f_\theta(\alpha, \beta, c)$.
They instantiate this idea for optimal transport between discrete
and continuous measures where the prediction $f_\theta$ warm-starts
standard dual-based solvers and can then be mapped to the optimal
primal coupling $\pi^\star$.

\textbf{Summary.}
$\gA_{\rm MetaOT}\defeq (g, \gF, \gP(\gX\times\gY)\times\gC, \gD, \gL_{\rm obj})$,
where $g$ is the dual objective,
$\gF$ represents the space of dual potentials, \ie $f\in\gF$,
$\gP(\gX\times\gY)$ is a space of distributions over $\gX\times\gY$
that $\alpha,\beta$ are sampled from, $\gC$ is a representation of the
space of costs, and $\gD$ is a meta-distribution over $\alpha,\beta,c$.

\subsubsection{Conditional Optimal Transport by \citet{bunne2022supervised}}
CondOT parameterizes a partially input-convex neural network (PICNN) \citep{amos2017input}
to condition the amortized OT solution on contextual information.
They focus primarily on the application of optimal transport in predicting
the effect of drugs on cellular populations for patients.
One of their key insights is to observe that OT problems need to
be repeatedly solved in this setting for different combinations of
drugs and patients. Instead of conditioning the amortization model
directly on the input measures, they condition it using auxiliary information
about the patient and drug. This enables them to obtain an OT coupling
and prediction of the drug effect even without knowing the target measure!
Parameterizing their amortization model as a PICNN has connections to
conditional neural processes \citep{garnelo2018conditional} and
is especially useful for conditioning the high-dimensional potentials
on the contextual information.

\textbf{Summary.}
$\gA_{\rm CondOT}\defeq (g, \gF, \gZ, \gD, \gL_{\rm obj})$,
where $g$ is the dual objective,
$\gF$ represents the space of dual potentials, \ie $f\in\gF$,
$\gZ$ is contextual information for the OT problems,
and $\gD$ is a meta-distribution over the contexts.

\subsection{Amortized convex conjugates (AmorConj) and $c$-transforms}
\label{sec:apps:amor-conj}

Most methods for computing the dual in \cref{eq:kantorovich-dual} between
a \emph{single} pair of measures needs to repeatedly compute the $c$-transform
in \cref{eq:c-transform} to evaluate the objective value $J$
in \cref{eq:kantorovich-dual-obj}.
This transform is typically not a computational bottleneck in
discrete settings when the measures have a few thousand points,
\eg in Sinkhorn solvers such as \citep{cuturi2013sinkhorn}.
Otherwise in some continuous settings, the conjugate operation may
be computationally challenging because it is an optimization problem
that needs to be repeatedly solved from scratch to obtain a
single Monte-Carlo estimate of $\int_\gY f^c(y) {\rm d}\beta(y)$
to evaluate $J$ in \cref{eq:kantorovich-dual-obj} \emph{once}.

Scoping to computing the optimal transport maps between Euclidean measures with
the negative inner product cost \citep[eq.~5.12]{villani2009optimal},
$f$ is a convex function and the $c$-transform operation $f^c$
in \cref{eq:c-transform} is the standard Legendre-Fenchel transform.
\citet{taghvaei2019wasserstein} discusses a lot of the theoretical foundations
underpinning Wasserstein-2 optimal transport and experimentally use a
numerical method that solves each conjugate operation from scratch.
To alleviate the computational bottleneck of this, many methods using
similar theoretical foundations amortize this conjugate operation
\citep{nhan2019threeplayer,makkuva2020optimal,korotin2019wasserstein,amos2023amortizing}.
The simplest instantiation of this amortization uses a fully-amortized model $\hat x_\theta(y)$
trained with objective-based learning of the conjugate objective.
\citet{amos2023amortizing} further discusses the modeling and loss choices
in this setting and also discusses the idea of fine-tuning the amortized
prediction with a numerical solver to ensure the dual objective is
accurately estimated.

\textbf{Summary.}
$\gA_{\rm AmorConj}\defeq (c(\cdot, y)-f(\cdot), \gX, \gY, \beta, \hat x_\theta, \gL_{\rm obj})$.

\begin{remark}
  Separate from amortizing the convex conjugate for optimal transport,
  \citet{garcia2023fishleg} proposes to amortize the solution to a
  convex conjugation problem arising when computing the natural gradient.
  Also related is \citet{bae2022amortized}, which amortizes the solution
  to a proximal optimization problem for updating parameters and can also
  amortize the solution to the natural gradient update
  (but doesn't explicitly go through the convex conjugate perspective).
\end{remark}

\subsection{Amortized Sliced Wasserstein ($\gA$-SW) by \citet{nguyen2022amortized}}
\label{sec:apps:amor-sliced}
Computing the Wasserstein distance between measures is computationally challenging.
The \emph{max-sliced} Wasserstein distance \citep{deshpande2019max} approximates
$W(\alpha, \beta)$ between measures with $\gX=\gY=\R^d$ by linearly projecting
(or slicing) the atoms of the measures down into 1-dimensional measures
where the 1-Wasserstein distance has a closed-form solution.
Max-sliced Wasserstein distances search over slices on the $d$-dimensional
sphere $\gS^{d-1}$ with
\begin{equation}
  \textnormal{Max-SW}(\alpha, \beta)\defeq \max_{\theta\in\gS^{d-1}} W(\theta; \alpha, \beta) \qquad W(\theta; \alpha, \beta)\defeq W(\theta_\#\alpha, \theta_\#\beta),
  \label{eq:max-sw}
\end{equation}
where $\theta_\#\alpha$ is the push-forward measure of $\mu$ through
$T_\theta(x)=\theta^\top x$.
\citet{nguyen2022amortized}
propose to amortize \cref{eq:max-sw} over mini-batches $\gD$ of size $m$
sampled from each measure.

\textbf{Summary.}
$\gA_{\rm SW}\defeq (W(\theta), \gS^{d-1}, \gX^m\times\gY^m, \gD, \gL_{\rm obj})$.

\section{Policy learning for control
  and reinforcement learning}
\label{sec:apps:ctrl}

\begin{figure}[t]
  \centering
  \includegraphics[width=\textwidth]{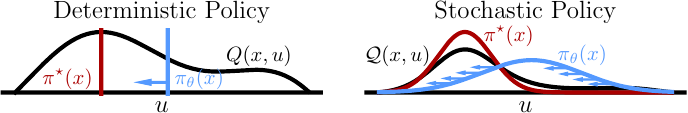}
  \caption{
    Many policy learning methods amortize
    optimization problem over the $Q$-values.
    Given a fixed input state $x$,
    the policy $\pi_\theta(x)$ predicts the
    maximum value $\pi^\star(x)$.
    A stochastic policy predicts
    a distribution that minimizes some probabilistic distance to
    the $Q$-distribution, such as the expected
    value or KL.
  }
  \label{fig:ctrl}
\end{figure}

Many control and reinforcement learning methods amortize the
solutions to a control optimization problem as illustrated in
\cref{fig:overview,fig:ctrl}.

\textbf{Distinction.} This section is on \emph{amortization for
reinforcement learning and control} and \emph{not} the
opposite direction of using
\emph{reinforcement learning for amortization}
that \cref{sec:learning:rl} discusses for parameter learning.

\subsection{Background}
\textbf{Preliminaries in Markov decision processes.}
\begin{definition}
  A \emph{Markov decision process} (MDP) can be represented with
  $\gM\defeq(\gX, \gU, p, r)$,
  where $\gX$ are the continuous \emph{states},
  $\gU$ are the continuous \emph{actions} or \emph{controls},
  $p(x' \mid x, u)$ is the \emph{transition dynamics}
  (also referred to as the \emph{system}, \emph{model},
  or \emph{world model}),
  which is a \emph{Markov kernel} providing
  the probability the next state $x'$
  given the current state-action $(x, u)$,
  and $r: \gX\times\gU$ is the \emph{reward function}.
\end{definition}
This section scopes to methods that control a
fully-observable and continuous MDP.
A \emph{policy} $\pi$ that \emph{controls}
the MDP provides a distribution over actions to sample from
for every state $x$ and induces \emph{state} and
\emph{state-control marginals} $\rho_t^\pi(\cdot)$ for each
time step $t$, which can be constrained to start from an initial
state $x_0$ as $\rho_t(\cdot|x)$.
In the non-discounted, infinite-horizon case
an \emph{optimal policy} $\pi^\star$ maximizes the reward over
rollouts of the MDP with
\begin{equation}
  \pi^\star(x)\in \argmax_\pi \E_{x\sim p_{\rm init}(x)} V^\pi(x)
  \qquad
  V^\pi(x)\defeq \sum_t \E_{(x_t, u_t)\sim\rho_t^\pi(\cdot\mid x)} r(x_t, u_t),
  \label{eq:pistar}
\end{equation}
where $p_{\rm init}$ is the \emph{initial state distribution}
and $V^\pi(x)$ is the expected \emph{value} of a policy $\pi$
starting from a state $x$ and that is taken over all possible
future rewards induced by the stochastic policy and dynamics.
Given the \emph{action-conditional value} $Q$ of a policy
defined by
\begin{equation}
  Q^\pi(x, u)\defeq r(x, u) + \E_{x'\sim p(\cdot|x,u)}\left[V^\pi(x')\right].
  \label{eq:Q}
\end{equation}
In the deterministic setting with a fixed $Q$ function,
an optimal policy can be obtained by solving the
\emph{max-Q} optimization problem
\begin{equation}
  \pi^\star(x) \in \argmax_u Q(x, u),
  \label{eq:Q-opt}
\end{equation}
which is the form that can be used to
interpret many control and reinforcement
learning methods as amortized optimization.
Instead of amortizing the solution to \cref{eq:Q-opt},
methods such as \citet{lowrey2018plan,ryu2019caql}
explicitly solve the max-Q problem.

\textbf{Control of deterministic MDPs with deterministic policies.}
If all of the components of the MDP are known,
no learning needs to be done to obtain an optimal policy
and standard \emph{model predictive control} (MPC) methods often work well.
In \emph{deterministic MDPs}, the dynamics are deterministic
and can be written as $x'\defeq p(x, u)$.
These can be solved with deterministic policies,
which turns the expected value and marginal distributions
into Dirac delta distributions that can be computed with single rollout.
An optimal controller from an initial state $x_1$ can thus
be obtained by solving the \emph{finite-horizon} problem over the
(negated) value approximated with a horizon length of
$H$ timesteps with
\begin{equation}
  u^\star_{1:H}(x_1) \defeq \argmin_{u_{1:H}} \sum_t C_t(x_t, u_t)\; \subjectto\; x_{t+1}=p(x_t, u_t),
  \label{eq:mpc}
\end{equation}
where the \emph{cost} $C$ at each time is usually the negated reward
$C_t(x_t, u_t)\defeq -r(x_t, u_t)$.
The field of optimal control studies methods for solving control
optimization problems of the form \cref{eq:mpc} and standard references
include \citet{bertsekas2000dp,kirk2004optimal,camacho2013model}.
\Cref{eq:mpc} induces the policy $\pi(x)\defeq u^\star_1(x)$ that
solves the MDP if the horizon $H$ is long enough.
Using a \emph{terminal cost} at the end of the horizon can
also give the controller information about how the system will
perform beyond the finite-horizon rollouts being used,
for example with
$C_H(x_H, u_H)\defeq -Q^\pi(x_H, u_H)$.

\textbf{Reinforcement learning when the dynamics aren't known.}
Optimal control methods work well when the dynamics $p$
of the MDP are known, which is an unfortunately strong
assumption in many settings where the system can only
be sampled from.
In these settings \emph{reinforcement learning} (RL)
methods thrive and solve the MDP given access to
\emph{only} samples from the dynamics.
While summarizing all of the active RL methods is out-of-scope
for this tutorial, the core of these methods is typically on
1) \emph{policy evaluation} to estimate the \emph{value}
of a policy given only samples from the system, and
2) \emph{policy improvement} to improve the policy
using the value estimation.

\textbf{Extensions in stochastic control.}
The max-Q problem in \cref{eq:Q-opt} can be extended to the
stochastic optimization settings \cref{sec:extensions:sto}
briefly covered when the policy $\pi$
represents a \emph{distribution} over the action space $\gU$.
The most common objectives for stochastic policies are
1) the expected $Q$-value under the policy with
\begin{equation}
  \pi^\star(x) \in \argmax_{\pi\in\Pi} \gE_Q(\pi; x) \qquad \gE_\gQ(\pi; x)\defeq \E_{u\sim\pi(\cdot)} Q(x,u),
  \label{eq:Q-opt-sto-exp}
\end{equation}
or 2) the KL distance
\begin{equation}
  \pi^\star(x) \in \argmin_{\pi\in\Pi} \D_\gQ(\pi; x) \qquad \D_\gQ(\pi; x)\defeq \kl{\pi(\cdot)}{\gQ(x, \cdot)},
  \label{eq:Q-opt-sto-kl}
\end{equation}
where $\gQ(x, \cdot)\propto \exp\left\{Q(x, \cdot)/\alpha\right\}$
is a $Q$-distribution induced by the $Q$ values that is
inversely scaled by $\alpha\in\R_+$.
The policy $\pi$ is usually represented as the parameters of a distribution
and thus $\Pi$ becomes the space of these parameters.
In most cases, $\pi$ is a Gaussian $\gN(\mu, \Sigma)$
with a diagonal covariance $\Sigma$ and thus
\cref{eq:Q-opt-sto-exp,eq:Q-opt-sto-kl} can be turned into unconstrained continuous
optimization problems of the form \cref{eq:opt} by
projecting onto the Gaussian parameters.
Stochastic value gradient methods such as \citet{heess2015learning}
often amortize \cref{eq:Q-opt-sto-exp}, while
\citet{levine2013guided,haarnoja2018soft} propose methods
that build on \cref{eq:Q-opt-sto-kl}, often adding additional
softening and entropic terms to encourage the policy
and value estimate to explore more and not converge too
rapidly to a suboptimal minima.
One last note is that the smoothing that the policy performs in
\cref{eq:Q-opt-sto-exp} is nearly identical to the objective
smoothing considered in \cref{sec:smooth}, except in that setting
the variance of the smoother remains fixed.

\textbf{Connecting stochastic control back to deterministic control.}
This portion shows that taking stochastic policies to be Dirac delta distributions
in \cref{eq:Q-opt-sto-exp,eq:Q-opt-sto-kl} recovers the solutions to the
deterministic control problem in \cref{eq:Q-opt}.
Taking a larger classes of policy distributions, such as
Gaussians, can then be interpreted as smoothing the $Q$
values to avoid 1) falling into poor local optima and
2) unstable regions where only a few actions have
high value, but the rest have poor values.
For the following, let $\delta_u(\cdot)$ be Dirac delta
distribution with a parameter $u\in\R^n$ indicating the
location of the mass.

\begin{proposition}
  Let $\pi$ be a Dirac delta distribution $\delta_u(\cdot)$.
  Then the solution $\pi^\star(x)$ to the expected $Q$ problem in
  \cref{eq:Q-opt-sto-exp} is the solution to the deterministic
  max-$Q$ problem in \cref{eq:Q-opt}.
\end{proposition}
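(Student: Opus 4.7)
The plan is to show the two optimization problems are equivalent by evaluating the expectation under the Dirac delta and then identifying the remaining optimization over the delta's location parameter with the deterministic max-$Q$ problem.

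First I would parameterize the restricted policy class: a Dirac delta $\delta_u(\cdot)$ on $\gU$ is fully specified by its location $u\in\gU$, so restricting $\pi\in\Pi$ to the subclass $\{\delta_u : u\in\gU\}$ gives a bijection between admissible policies and points of $\gU$. Under this restriction, the outer optimization $\argmax_{\pi\in\Pi}$ in \cref{eq:Q-opt-sto-exp} becomes an optimization over $u\in\gU$.

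Next I would evaluate the inner expectation. By the defining property of the Dirac delta as a measure (its integral against any continuous test function returns the value of that function at the mass location), for any $x\in\gX$ we have
\begin{equation}
\gE_\gQ(\delta_u; x) = \E_{u'\sim \delta_u(\cdot)} Q(x, u') = Q(x, u),
\end{equation}
assuming $Q(x,\cdot)$ is continuous (or at least measurable and bounded enough that the delta-integral is well-defined). Substituting this identity into \cref{eq:Q-opt-sto-exp} rewrites the stochastic problem as $\argmax_{u\in\gU} Q(x, u)$, which is precisely \cref{eq:Q-opt}. Thus $\pi^\star(x) = \delta_{u^\star(x)}(\cdot)$ where $u^\star(x)$ solves the deterministic max-$Q$ problem.

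There is essentially no hard step: the whole argument rides on the sifting property of the Dirac delta. The only place a subtle issue could arise is in the ambient measure-theoretic setup — in particular, whether the expectation $\E_{u'\sim\delta_u} Q(x,u')$ is well-defined on the chosen $\sigma$-algebra on $\gU$ and whether the Dirac class is actually contained in the ambient policy family $\Pi$ used in \cref{eq:Q-opt-sto-exp}. I would address this by stating explicitly that $\Pi$ is taken to include (or be enlarged to include) all Dirac measures on $\gU$ and that $Q(x,\cdot)$ is assumed measurable, after which the equivalence follows immediately.
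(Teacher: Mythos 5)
Your proof is correct and follows essentially the same route as the paper: parameterize the Dirac policy by its location $u$, evaluate the expectation via the sifting property to get $Q(x,u)$, and identify the resulting problem with \cref{eq:Q-opt}. The only difference is your added care about measurability and the inclusion of Dirac measures in $\Pi$, which the paper glosses over.
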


\begin{proof}
  Let $\Pi=\R^n$ be the parameter space of $\pi$ and transform
  \cref{eq:Q-opt-sto-exp} to optimize over it:
  \begin{equation}
  \pi^\star(x) \in \argmax_{u\in\R^n} \E_{\tilde u\sim\delta_u(\cdot\mid x)}{Q(x, \tilde u)}.
  \label{eq:exp-dirac-opt}
  \end{equation}
  The expectation over the Dirac evaluates to
  \begin{equation}
    \E_{\tilde u\sim\delta_u(\cdot\mid x)}{Q(x, \tilde u)} = Q(x,u)
    \label{eq:exp-expansion}
  \end{equation}
  and thus \cref{eq:exp-dirac-opt}
  which is the max-$Q$ operation in \cref{eq:Q-opt}.
\end{proof}

\noindent Similarly for the for the KL problem in \cref{eq:Q-opt-sto-kl}:
\begin{proposition}
  Let $\pi$ be a Dirac delta distribution $\delta_u(\cdot)$.
  Then the solution $\pi^\star(x)$ to the KL problem in
  \cref{eq:Q-opt-sto-kl} is the solution to the deterministic
  max-$Q$ problem in \cref{eq:Q-opt}.
\end{proposition}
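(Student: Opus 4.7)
The plan is to mirror the structure of the preceding expected-value proposition, reducing the KL objective to a function of $u$ alone and then identifying its minimizer with the $\argmax$ of $Q(x,\cdot)$. First I would re-parameterize the KL problem in \cref{eq:Q-opt-sto-kl} by optimizing over the location $u\in\R^n$ of the Dirac, writing
\begin{equation}
\pi^\star(x) \in \argmin_{u\in\R^n} \kl{\delta_u(\cdot)}{\gQ(x,\cdot)}.
\end{equation}
Next I would expand the KL using the definition $\gQ(x,\tilde u)\propto \exp\{Q(x,\tilde u)/\alpha\}$ and the fact that expectations under $\delta_u$ collapse to evaluation at $u$, giving
\begin{equation}
\kl{\delta_u}{\gQ(x,\cdot)} = \E_{\tilde u\sim \delta_u}\!\left[\log \delta_u(\tilde u) - \log \gQ(x,\tilde u)\right] = -\HH[\delta_u] - \frac{Q(x,u)}{\alpha} + \log Z(x),
\end{equation}
where $Z(x)$ is the $u$-independent normalizer of $\gQ(x,\cdot)$.

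The only step requiring care is the entropy term $\HH[\delta_u]$ for a Dirac, which is degenerate in the strict measure-theoretic sense. I would handle this exactly as in the smoothing discussion of \cref{sec:smooth} and the $\alpha\to 0$ deterministic limit of \cref{eq:Q-opt-sto-kl}: treat $\delta_u$ as the weak limit of a Gaussian $\gN(u, \sigma^2 I)$ as $\sigma\to 0$, whose differential entropy $\tfrac{n}{2}\log(2\pi e \sigma^2)$ depends on $\sigma$ but \emph{not} on $u$. Hence $\HH[\delta_u]$ contributes a constant (possibly divergent) term in $u$ that can be discarded from the $\argmin$.

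Dropping all terms independent of $u$ then reduces the minimization to
\begin{equation}
\pi^\star(x) \in \argmin_{u\in\R^n} -\frac{Q(x,u)}{\alpha} \;=\; \argmax_{u\in\R^n} Q(x,u),
\end{equation}
which is precisely the deterministic max-$Q$ problem \cref{eq:Q-opt}. The main obstacle is purely technical: justifying that the degenerate self-entropy of $\delta_u$ is independent of $u$. I expect the cleanest justification is the Gaussian-limit argument above, which also aligns with the smoothing interpretation the surrounding text emphasizes; alternatively one can restrict $\pi$ to an absolutely continuous parametric family containing $\delta_u$ as a limit point and pass to the limit after showing the $u$-dependence reduces to $-Q(x,u)/\alpha$.
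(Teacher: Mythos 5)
Your proposal is correct and follows essentially the same route as the paper's proof: re-parameterize the KL problem over the Dirac location $u\in\R^n$, expand the KL into the $u$-independent self-entropy of $\delta_u$ minus $\log\gQ(x,u)$, and discard the constants (including the normalizer $\log\gZ_x$) to recover $\argmax_{u} Q(x,u)$. The only differences are that you justify the $u$-independence of the degenerate entropy term via a Gaussian limit whereas the paper simply absorbs it into a constant $C$, and you retain the $1/\alpha$ scaling from the definition of $\gQ$ that the paper's expansion omits — neither of which changes the argument, since both terms are irrelevant to the $\argmin$ over $u$.
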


\begin{proof}
  Let $\Pi=\R^n$ be the parameter space of $\pi$ and transform
  \cref{eq:Q-opt-sto-kl} to optimize over it:
  \begin{equation}
  \pi^\star(x) \in \argmin_{u\in\R^n} \kl{\delta_u(\cdot)}{\gQ(x, \cdot)}.
  \label{kl-dirac-opt}
  \end{equation}
  Expanding the KL distance then gives the density of $\gQ$ at the mass:
  \begin{equation}
    \begin{aligned}
      \kl{\delta_u(\cdot)}{\gQ(x, \cdot)}
      & = \E_{\tilde u\sim \delta_u(\cdot)} \left[ \log{\delta_u(\tilde u)} - \log \gQ(x,\tilde u)\right] \\
      & = -\log \gQ(x,u)+C \\
      & = -\log \frac{1}{\gZ_x}\exp\left\{Q(x,u)\right\} + C\\
      & = -Q(x, u) + \log \gZ_x + C \\
    \end{aligned}
    \label{eq:kl-expansion}
  \end{equation}
  where $C$ is a constant that does not depend on $u$,
  $\gQ(x,u)$ is the density at $u$, and
  $\gZ_x$ is the normalization constant for $\gQ(x,\cdot)$
  that does not depend on $u$.
  Putting \cref{eq:kl-expansion} back into
  \cref{kl-dirac-opt} and removing the constants that
  do not depend on $u$ gives
  \begin{equation}
  \pi^\star(x)\in \argmin_{u\in\R^n} -Q(x,u),
  \end{equation}
  which is the max-$Q$ operation in \cref{eq:Q-opt}.
\end{proof}

\subsection{Behavioral cloning and imitation learning}
This section starts in the setting where regression-based amortization
is done to predict the solution of a controller that
solves \cref{eq:pistar}.
These settings assume access to a controller, or samples from it,
that uses traditional methods, and \emph{not} learning,
to solve \cref{eq:pistar} with the true or approximated dynamics.
These solutions are typically available as samples
from a policy $\pi^\star(x)$ that provides the solution
to the max-Q problem in \cref{eq:Q-opt} for regression-based amortization.
In some settings these methods also use the
optimal finite-horizon sequence $u^\star_{1:H}(x)$
from a solution to \cref{eq:mpc}.

Imitation learning methods such as behavioral cloning can be
seen as a regression-based amortization
that seek to distill, or clone,
the expert's behavior into a learned model $\pi_\theta(x)$
that predicts the expert's action given the state $x$
\citep[Chapter 3]{osa2018algorithmic}.
Deterministic BC methods often regress onto the expert's
state-action pairs $(x, \pi^\star(x))$
sampled some distribution of states $p(x)$ with
\begin{equation}
  \argmin_\theta \E_{x\sim p(x)} \|\pi^\star(x)-\pi_\theta(x)\|_2^2,
  \label{eq:bc}
\end{equation}
where, for example, the model $\pi_\theta$ could
be a neural network.
Thus BC in this setting performs regression-based amortization
$\gA_{\rm BC}\defeq (-Q, \gU, \gX, p(x), \pi_\theta, \gL_{\rm reg})$.
Extensions from this setting could include when
1) the model $\pi_\theta$ is a sequence model that
predicts the entire sequence $u^\star_{1:H}$, and
2) the MDP or policy is stochastic and
\cref{eq:bc} turns into an amortized maximum likelihood
problem rather than a regression problem.

\textbf{Warning.}
One crucial difference between behavioral cloning
and all of the other applications considered here
is that behavioral cloning does not assume knowledge
of the original objective or cost used by the expert.
This section assumes that an optimal objective
exists for the purposes of seeing it as regression-based
amortization.
Settings such as inverse control and RL
explicitly recover the expert's objective, but are
beyond the scope of our amortization focus.

\subsection{The deterministic policy gradient}
The policy learning of many model-free actor-critic
reinforcement learning algorithms on continuous spaces
can be seen as objective-based amortization of the
max-$Q$ operation in \cref{eq:Q-opt}.
This includes the policy improvement step for
deterministic policies as pioneered by the
deterministic policy gradient (DPG) by \citet{silver2014dpg},
deep deterministic policy gradient (DDPG) by \citet{lillicrap2015continuous},
and twin delayed DDPG (TD3) by \citet{fujimoto2018td3}.
All of these methods interweave
1) \emph{policy evaluation} to estimate the
state-action value $Q^{\pi_\theta}$ of the current policy
$\pi_\theta$, and
2) \emph{policy improvement} to find the policy
that best-optimizes the max-Q operation with
\begin{equation}
  \argmax_\theta \E_{x\sim p(x)} Q(x, \pi_\theta(x)).
  \label{eq:dpg-loss}
\end{equation}

\textbf{Summary.}
The DPG family of methods perform objective-based
amortization of the max-$Q$ optimization problem with
\begin{equation}
\gA_{\rm DPG}\defeq (-Q, \gU, \gX, p(x), \pi_\theta, \gL_{\rm obj}).
\end{equation}

\subsection{The stochastic value gradient and soft actor-critic}
This amortization viewpoint can be extended to \emph{stochastic} systems
and policies that use the \emph{stochastic value gradient} (SVG)
and covers a range of model-free to model-based method depending on
how the value is estimated
\citep{byravan2019imagined,hafner2019dream,byravan2021evaluating,amos2021model}.
As observed in
\citet{haarnoja2018soft,amos2021model},
the policy update step in the soft actor-critic
can also be seen as a model-free
stochastic value gradient with the value estimate
regularized or ``softened'' with an entropy term.
These methods learn policies $\pi_\theta$ that
amortize the solution to a stochastic optimization
problem, such as \cref{eq:Q-opt-sto-exp,eq:Q-opt-sto-kl},
over some distribution of states $p(x)$, such as the
stationary state distribution or an approximation of it
with a replay buffer.
Taking the expected value under the policy gives
the policy loss
\begin{equation}
  \argmax_\theta \gL_{\rm SVG,\gE}(\pi_\theta) \qquad \gL_{\rm SVG,\gE}(\pi_\theta)\defeq \E_{x\sim p(x)} \E_{u\sim\pi_\theta(\cdot|x)} Q(x, u),
  \label{eq:svg-loss-exp}
\end{equation}
and taking the minimum KL distance gives the policy loss
\begin{equation}
  \argmin_\theta \gL_{\rm SVG,KL}(\pi_\theta) \qquad \gL_{\rm SVG,KL}(\pi_\theta)\defeq \E_{x\sim p(x)} \kl{\pi_\theta(\cdot\mid x)}{\gQ(x, \cdot)},
  \label{eq:svg-loss-kl}
\end{equation}
which softens the policy and value estimates with an entropy regularization
as in \citet{haarnoja2018soft,amos2021model}.
\Cref{eq:svg-loss-kl} can be seen as an entropy-regularized value
estimate by expanding the KL
\begin{equation}
  \begin{aligned}
  & \nabla\E_{x\sim p(x)} \kl{\pi_\theta(\cdot\mid x)}{\gQ(x, \cdot)} = \\
  & \hspace{0.5in} \nabla\E_{x\sim p(x)} \E_{u\sim \pi_\theta(u\mid x)} \left[ \log(\pi_\theta(u\mid x)) - Q(x, u)/\alpha \right].
  \end{aligned}
\end{equation}
\citet{mohamed2020monte} discusses standard ways of optimizing
\cref{eq:svg-loss-exp,eq:svg-loss-kl}, which could be with
a likelihood ratio gradient estimator \citep{williams1992simple}
or via reparameterization.

\textbf{Summary.}
The policy update of methods based on the SVG and SAC perform
objective-based amortization of a stochastic control
optimization problem with
\begin{equation*}
\gA_{\rm SVG}\defeq (\D_\gQ\;\mathrm{or}\;-\gE_Q, \gP(\gU), \gX, p(x), \pi_\theta, \gL_{\rm KL}).
\end{equation*}

\textbf{SVG-based amortization for model-free and model-based methods.}
SVG-based policy optimization provides a spectrum of model-free
to model-based algorithms depending on if the value estimate is
approximated in a model-free or model-based way, \eg with rollouts
of the true or approximate dynamics.
\citet{heess2015learning} explored this spectrum and proposed
a few key instantiations.
\citet{byravan2019imagined,amos2021model} use learned
models for the value approximation.
The variation in \citet{hafner2019dream} amortizes a model-based
approximation using the \emph{sum} of the value estimates
of a model rollout.
\citet{henaff2019model} explore uncertainty-based regularization to
amortized policies learned with value gradients.
\citet{xie2020latent} consider hierarchical amortized optimization
that combine a higher-level planner based on CEM with a
lower-level fully-amortized policy learned
with stochastic value gradients.
\citet{byravan2021evaluating} perform a large-scale evaluation of
amortization methods for control and study algorithmic variations
and generalization capabilities, and consider methods based on
the stochastic value gradient and behavioral cloning.

\subsection{PILCO by \citet{deisenroth2011pilco}}
PILCO is an early example that uses gradient-based
amortization for control.
They assume that only samples from the dynamics model are
available, fit a Gaussian process to them, and then use that
to estimate the finite-horizon $Q$-value of a
deterministic RBF policy $\pi_\theta$.
The parameters $\theta$ are optimized by taking gradient
steps to find the policy resulting in the maximum value
over some distribution of states $p(x)$, \ie
\begin{equation}
  \argmax_\theta \E_{x\sim p(x)} Q(x, \pi_\theta).
\end{equation}

\textbf{Summary.}
$\gA_{\rm PILCO}\defeq (-Q, \gU, \gX, p(x), \pi_\theta, \gL_{\rm obj})$

\subsection{Guided policy search (GPS)}
The GPS family of methods
\citep{levine2013guided,levine2014learning,levine2016end,montgomery2016guided}
fits an approximate dynamics model to data and then
amortizes the solutions of a controller that solves
the MPC problem in \cref{eq:mpc} within a local region of
the data to improve the amortization model.
Given samples of $\pi^\star$ from a controller that solves
\cref{eq:Q-opt-sto-kl},
GPS methods typically optimize the KL divergence between
the controller and these samples with
\begin{equation}
  \argmin_\theta \E_{x\sim p(x)} \kl{\pi_\theta(\cdot\mid x)}{\pi^\star(\cdot\mid x)}
  \label{eq:gps}
\end{equation}

\textbf{Summary.}
$\gA_{\rm GPS}\defeq (\D_\gQ, \gP(\gU), \gX, p(x), \pi_\theta, \gL_{\rm KL})$

Related methods such as \citet{sacks2022learning} study learning to
optimize for imitating other controllers and are capable of
working in cases when derivative information isn't available.

\subsection{POPLIN by \citet{wang2019exploring}}
POPLIN explores behavioral cloning methods based on regression
and generative modeling and observes that the
parameter space of the amortized model is a reasonable
space to solve new control optimization problems over.
The methods discussed here

\textbf{Distillation and amortization.}
POPLIN first trains a fully-amortized
model on a dataset of trajectories with behavioral cloning
or generative modeling.
This section only consider the BC variant trained with
$\gA_{\rm BC}$, which provides an \emph{optimal}
fully-amortized model $\pi_{\theta^\star}$.

\textbf{Control.}
Next they explore ways of using the learned policy
$\pi_{\theta^\star}$ to solve the model predictive
control optimization problem in \cref{eq:mpc}.
The \emph{POPLIN-A} variant (``A'' stands for ``action'')
uses $\pi_{\theta^\star}$ to predict an initial
control sequence $\hat u^0_{1:H}$ that is then passed as
an input to a controller based on the cross-entropy method
over the action space that uses a learned model on the trajectories.
The \emph{POPLIN-P} variant (``P'' stands for ``parameter'')
suggests that the \emph{parameter space} of the fully-amortized
model has learned useful information about the structure
of the optimal action sequences.
As an alternative to solving the MPC problem in \cref{eq:mpc}
over the action space,
POPLIN-P proposes to use CEM to find a
perturbation $\omega$ to the optimal parameters $\theta^\star$
that maximizes the value from a state $x$ with
\begin{equation}
  \omega^\star(x) \in \argmax_\omega Q(x, u;
    \pi_{\theta^\star+\omega}).
  \label{eq:POPLIN-P}
\end{equation}
Thus the action produced by
$\pi_{\theta^\star+\omega^\star}(x)$
is a solution the control problem in \cref{eq:mpc}
obtained by adapting the policy's parameters to the
state $x$.

\textbf{Summary.} POPLIN is an extension of
of behavioral cloning that amortizes control optimization
problems with
\begin{equation}
\gA_{\rm POPLIN}\defeq (-Q, \gU, \gX, p(x), \pi_\theta, \gL_{\rm reg}).
\end{equation}
The initial phase is fully-amortized behavioral cloning.
The second fine-tuning phase can be seen as semi-amortization
that learns only the initialization $\theta$ and
finds $\omega$ with CEM, with a regression-based loss $\gL_{\rm reg}$
that \emph{only} has knowledge of the initial model and does not include
the adaptation.

\subsection{The differentiable cross-entropy method by \citet{amos2019dcem}}

\begin{figure*}[t]
  \centering
  \includegraphics[width=\textwidth]{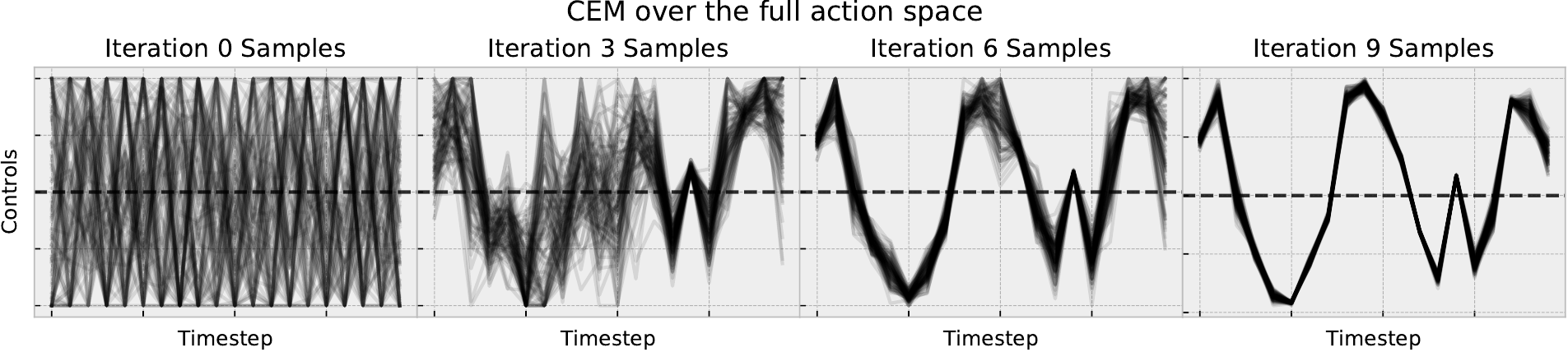} \\[4mm]
  \includegraphics[width=\textwidth]{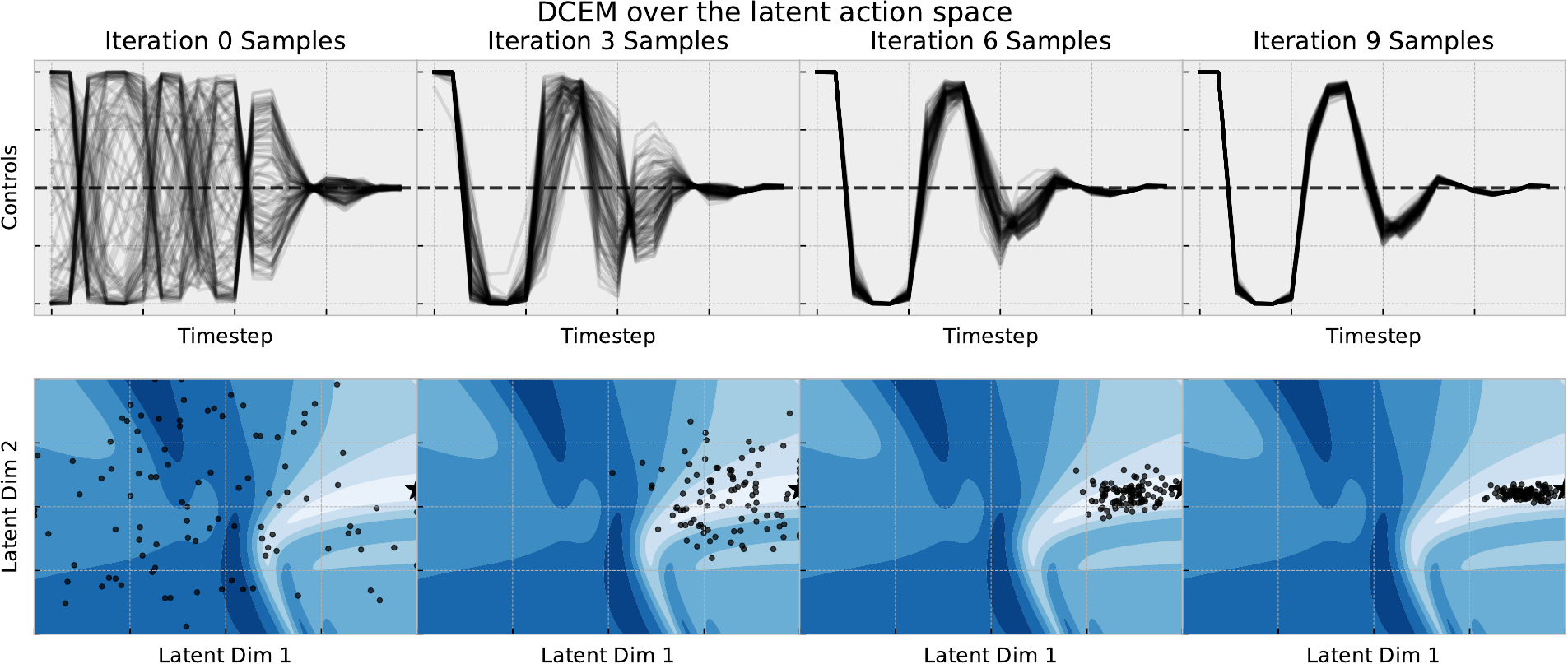}
  \caption{
    The differentiable cross-entropy method (DCEM)
    \citep{amos2019dcem} can be used
    to create semi-amortized controllers that learn a
    latent space $\gZ$ over control sequences.
    This visualization taken from the DCEM paper shows the samples that
    CEM and DCEM generate to solve the cartpole task starting from the
    same initial system state.
    The plots starting at the top-left show that CEM initially starts with
    no temporal knowledge over the control space whereas
    the latent space learned through DCEM generates a more
    feasible distribution over control sequences
    in each iteration that make them smooth and cyclic.
    The contours on the bottom show the controller's cost
    surface $C(z; x)$ for an initial state $x$ where
    the lighter colors show regions with lower costs.
  }
  \label{fig:dcem}
\end{figure*}

Differentiable control \citep{amos2018dmpc} is a budding area
of work with the goal of integrating controllers into
end-to-end modeling pipelines to overcome problems
such as objective mismatch \citep{lambert2020objective}.
The differentiable cross-entropy method (DCEM) was created
towards the goal of doing this with controllers
based on the cross-entropy method.
Otherwise, as in \citet{wang2019exploring}, CEM needs to be done as
a secondary step \emph{after} learning and the learning process is
not aware of the final policy that running CEM will induce.
The key step to differentiate through CEM is to make the
top-$k$ operation smooth and differentiable by using the
differentiable top-$k$ operation proposed in
\citet{amos2019limited} called the limited multi-label projection layer.

\citet{amos2019dcem} considers a semi-amortized learning setting
that learns a latent domain for control, which can be seen as
a similarly-motivated alternative to the parameter-space
control done in POPLIN.
The key piece of \emph{latent control}
is to learn a \emph{decoder} $\varphi_\theta:\gZ\rightarrow \gU^H$
that maps from a low-dimensional latent space $\gZ$ to the
$H$-step control space that solves \cref{eq:mpc}.
Learning a latent space is useful if there are many redundancies
and possibly bad local minima on the original control space
$\gU^H$ that the latent space can get rid of.
Given an initial state $x_1$, the optimal latent representation
can be obtained by solving the control optimization problem
over $\gZ$ with
\begin{equation}
  \hat z_\theta(x_1)\in \argmin_{z\in\gZ} C_\theta(z; x_1)\;
  \label{eq:dcem-latent}
\end{equation}
where $C_\theta(z; x_1)$ is the expected cost of rolling out the
control sequence $u_{1:H}=\varphi(z)$ from the initial state
$x_1$, for example on deterministic systems $C$
could be the sum of negated rewards
\begin{equation}
  C(z; x) \defeq -\sum_{t=1}^H r(x_t, x_t)\;
  \subjectto\;x_{t+1}=p(x_t, u_t)\; {\rm and}\; x_{1:H}=\varphi_\theta(z)
  \label{eq:latent-cost}
\end{equation}
Solving \cref{eq:dcem-latent} with DCEM enables the optimal
solution $\hat z_\theta(x)$ to be differentiated with respect
to the parameters $\theta$ of the decoder $\varphi_\theta$.
The final predicted control sequence can be
obtained with the decoder
$\hat u_{1:H}(x; \theta)\defeq \varphi_\theta(\hat z_\theta(x))$
and the decoder can be learned by regressing
onto ground-truth control sequences $u^\star(x)$ with
\begin{equation}
  \argmin_\theta \gL_{\rm DCEM}(\hat u_{1:H}(\cdot; \theta))
\label{eq:dcem-obj}
\end{equation}
where the loss is given by
\begin{equation}
  \gL_{\rm DCEM}(\hat u_{1:H}(\cdot; \theta))\defeq \E_{x \sim p(x)} \|u^\star_{1:H}(x) - \hat u_{1:H}(x; \theta)\|_2^2.
  \label{eq:dcem-loss}
\end{equation}
\Cref{fig:dcem} visualizes an example on the cartpole task
where this is able to learn a latent space that captures the
cyclic and smoothness structure of the optimal control
sequence space.

\textbf{Overview.} Learning a latent domain with the differentiable
cross-entropy method is a semi-amortization method with
\begin{equation}
\gA_{\rm DCEM}\defeq (-Q, \gU, \gX, p(x), \pi_\theta, \gL_{\rm reg}),
\end{equation}
where the decoder $\varphi_\theta$ shows up from
the policy $\pi_\theta$ solving the latent
optimization problem with DCEM.

\subsection{Iterative amortized policy optimization (IAPO) by \citet{marino2020iterative}}
IAPO takes a probabilistic view and
starts with the observation that DPG and SVG methods
are amortized optimization problems with fully-amortized
models with an objective-based loss.
They then suggest to replace the model with an iterative
semi-amortized model where the policy $\pi_\theta$ internally
takes gradient steps on the actions of the underlying
control optimization problem, and explore this
semi-amortized policy in model-free and model-based
reinforcement learning settings.
Thus in the deterministic setting IAPO performs semi-amortized
optimization
$\gA_{\rm IAPO}\defeq (-Q, \gU, \gX, p(x), \pi_\theta, \gL_{\rm obj})$.

\textbf{Warning.}
There’s an interplay between the accuracy and quality of the policy
optimizer and the value estimator.
Because the value estimator is used to create it's own target estimates,
a better policy optimizer and controller can
exploit optimistic inaccuracies in the value network.
In other words, a seemingly better policy optimizing
an inaccurate value estimate may result in a worse
policy on the real system.
This issue also arises when fully-amortized policies over-optimize
the value estimate too early on in training, but is
exacerbated with semi-amortized and iterative policies.

\subsection{Learning the value function}
\label{sec:Q-learning}
The $Q$-value function \cref{eq:Q} for finding an MDP
policy is often unknown and needs to be estimated from
data along with the policy $\pi$ that is amortizing it,
\eg in \cref{eq:Q-opt}.
Actor-critic methods are a common reinforcement learning approach
that jointly learn a policy $\pi$ (the actor) and $Q$-value
estimate (the critic) \citep{konda1999actor,sutton2018reinforcement}.
The policy amortizes the current $Q$ estimate, \eg by using
an approach previously discussed in this section, and the
$Q$ function is fit to data sampled from the system.
One way of learning the $Q$ function is to replace the value estimate $V^\pi$
in \cref{eq:Q} with the $Q$-value estimate to yield the relationship
\begin{equation}
  Q^\pi(x, u)\defeq r(x, u) + \E_{x'\sim p(\cdot|x,u),u'\sim\pi(x')}\left[Q^\pi(x', u')\right],
  \label{eq:Q-bellman}
\end{equation}
which is referred to as the \emph{Bellman equation}
\citep{bellman1966dynamic,sutton2018reinforcement}.
\Cref{eq:Q-bellman} is an equality that should hold over all
states and actions in the system and a value estimate can be
parameterized and learned to satisfy the relationship.
While the best way of learning the $Q$ estimate is an
open research topic \citep{watkins1992q,baird1995residual,ernst2005tree,maillard2010finite,scherrer2010should,geist2017bellman,le2019batch,fujimoto2022should},
a common way is to optimize residual of \cref{eq:Q-bellman} with
\begin{equation}
  \argmin_\phi \E_{(x,u)\sim\gD} \left|Q_\phi ^\pi(x, u) - \left(r(x, u) + \E_{x'\sim p(\cdot|x,u),u'\sim\pi(x')}Q^\pi_{\bar\phi}(x', u')\right)\right|^2,
  \label{eq:Q-bellman-estimation}
\end{equation}
where $\bar\phi$ is a detached version of the
rolling mean of the parameters.

\chapter{Implementation and software examples}
\label{sec:implementation}

\begin{table}[t]
  \centering
  \caption{Dimensions for the settings considered in this section}
  \label{tab:sizes}
\resizebox{\textwidth}{!}{
\begin{tabular}{lll}\toprule
  Setting & Context dimension $|\gX|$ & Solution dimension $|\gY|$ \\ \midrule
  VAE on MNIST (\ref{sec:impl:vaes}) & $784$ {\color{gray} (=$28\cdot 28$, MNIST digits)} & $20$ {\color{gray}(parameterizing a 10D Gaussian)} \\
  Model-free control (\ref{sec:impl:model-free}) & $45$ {\color{gray}(humanoid states)} & $17$ {\color{gray} (action dimension)} \\
  Model-based control (\ref{sec:impl:model-based}) & $45$ {\color{gray}(humanoid states)} & $51$ {\color{gray} (=$17\cdot 3$, short action sequence)} \\
  Sphere (\ref{sec:impl:sphere}) & $16$ {\color{gray}($c$-convex function parameterizations)} & $3$ {\color{gray} (sphere)} \\ \bottomrule
\end{tabular}}
\end{table}

Turning now to the implementation details, this section
looks at how to develop and analyze amortization software.
The standard and easiest approach in most settings is to use
automatic differentiation software such as
\citet{maclaurin2015autograd,al2016theano,abadi2016tensorflow,bezanson2017julia,agrawal2019tensorflow,paszke2019pytorch,bradbury2020jax}
to parameterize and learn the amortization model.
There are many open source implementations and re-implementations
of the methods in \cref{sec:apps} that provide a concrete
starting point to start building on them.
This section looks closer at three specific implementations:
\cref{sec:impl:eval} evaluates the amortization components
behind existing implementations of variational autoencoders
\cref{sec:apps:avi} and control \cref{sec:apps:ctrl} and
\cref{sec:impl:sphere} implements and trains an amortization model
to optimize functions defined on a sphere.
\Cref{tab:sizes} summarizes the concrete dimensions of the amortization
problems considered here and \cref{sec:impl:software}
concludes with other useful software references.
The source code behind this section is available at
\url{https://github.com/facebookresearch/amortized-optimization-tutorial}.

\section{Amortization in the wild: a deeper look}
\label{sec:impl:eval}

This section shows code examples of how existing implementations
using amortized optimization define and optimize their models
for variational autoencoders (\cref{sec:impl:vaes})
and control and policy learning (\cref{sec:impl:model-free,sec:impl:model-based}).
The amortization component in these systems is often a part
of a larger system to achieve a larger task:
VAEs also reconstruct the source data
after amortizing the ELBO computation in \cref{eq:vae-full}
and policy learning methods also estimate the
$Q$-value function in \cref{sec:Q-learning}.
This section scopes to the amortization components to show
how they are implemented.
I have also added evaluation code to the pre-trained amortization models
from existing repositories and show that the amortized approximation
often obtains a solution up to \textbf{25000 times} faster than
solving the optimization problems from scratch on an
NVIDIA Quadro GP100 GPU.

\subsection{The variational autoencoder (VAE)}
\label{sec:impl:vaes}

This section looks at the code behind
standard VAE \citep{kingma2013auto} that follows the
amortized optimization setup described in \cref{sec:apps:vae}.
While there are many implementations for training and
reproducing a VAE, this section will use the implementation at
\url{https://github.com/YannDubs/disentangling-vae},
which builds on the code behind
\citet{dupont2018learning} at
\url{https://github.com/Schlumberger/joint-vae}.
While the repository is focused on disentangled representations
and extensions of the original VAE formulation, this
section only highlights the parts corresponding to the
original VAE formulation.
The code uses standard PyTorch in a minimal way that
allow us to easily look at the amortization components.

\textbf{Training the VAE.}
\Cref{lst:vae} paraphrases the relevant snippets of code to
implement the main amortization problem in \cref{eq:vae-amor}
for image data where the likelihood is given by a Bernoulli.
\Cref{lst:vae.encoder} defines an encoder $\hat \lambda_\theta$,
to predicts a solution to the ELBO implemented in \cref{lst:vae.elbo},
which is optimized in a loop over the training data (images)
in \cref{lst:vae.loop}.
The \path{README} in the repository contains instructions
for running the training from scratch.
The repository contains the binary of a model trained
on the MNIST dataset \citep{lecun1998mnist}, which
the next portion evaluates.

\begin{figure}[t]
  \centering
  \vspace{-4mm}
  \includegraphics[width=0.49\textwidth]{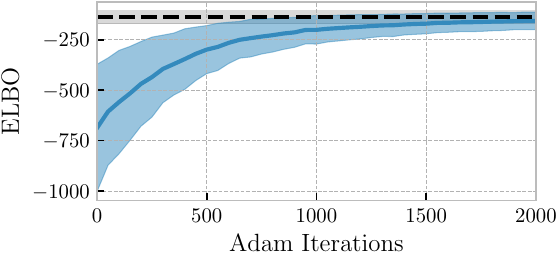}
  \hfill
  \includegraphics[width=0.49\textwidth]{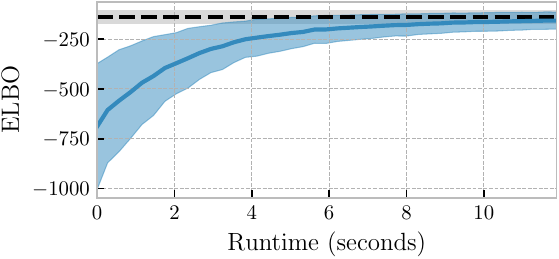} \\
  \cblock{0}{0}{0} Amortized encoder $\hat\lambda_\theta(x)$ --- runtime: 0.4ms
  \caption{
    Runtime comparison between Adam and an amortized encoder $\hat\lambda_\theta$
    to solve \cref{eq:elbo-opt} for a VAE on MNIST.
    This uses a batch of 1024 samples and was
    run on an unloaded NVIDIA Quadro GP100 GPU.
    The values are normalized so that $\lambda(x)=0$ takes a value of -1 and
    the optimal $\lambda^\star$ takes a value of 0.
    The amortized policy is approximately
    \textbf{25000} times faster than solving the
    problem from scratch.
  }
  \label{fig:vae-performance}
\end{figure}

\begin{figure}
  \centering
\begin{subfigure}[b]{\textwidth}
\begin{lstlisting}
class Encoder(nn.Module): # From disvae.models.encoders
    def forward(self, x): # x is the amortization context: the original data
        mu_logvar = self.convnet(x)
        mu, logvar = mu_logvar.view(-1, self.latent_dim, 2).unbind(-1) # Split
        return (mu, logvar) # = latent_dist or \lambda
\end{lstlisting}
\caption{Forward definition for the encoder $\hat \lambda_\theta(x)$.
  \path{self.convnet} uses the architecture
  from \citet{burgess2018understanding}.}
\label{lst:vae.encoder}
\end{subfigure}
\begin{subfigure}[b]{\textwidth}
\begin{lstlisting}
# From disvae.models.losses.BetaHLoss with a Bernoulli likelihood
def estimate_elbo(data, latent_dist):
    mean, logvar = latent_dist

    reconstructed_batch = sample_and_decode(latent_dist)
    log_likelihood = -F.binary_cross_entropy(
        reconstructed_batch, x, reduce=False).sum(dim=[1,2,3])

    # Closed-form distance to the prior
    latent_kl = 0.5 * (-1 - logvar + mean.pow(2) + logvar.exp())
    kl_to_prior = latent_kl.sum(dim=[-1])

    loss = log_likelihood - kl_to_prior
    return loss.mean()
\end{lstlisting}
\caption{Definition of the $\ELBO$ in \cref{eq:elbo}}
\label{lst:vae.elbo}
\end{subfigure}

\begin{subfigure}[b]{\textwidth}
\begin{lstlisting}
model = Encoder()
for batch in iter(data_loader):
    latent_dist = model(batch)
    loss = -estimate_elbo(batch, latent_dist)
    self.optimizer.zero_grad()
    loss.backward()
    self.optimizer.step()
\end{lstlisting}
\caption{Main VAE training loop for the encoder}
\label{lst:vae.loop}
\end{subfigure}

\caption{Paraphrased PyTorch code examples of the key
  amortization components of a VAE from
  \url{https://github.com/YannDubs/disentangling-vae}.}
\label{lst:vae}
\end{figure}

\textbf{Evaluating the VAE.}
This section looks at how well the amortized encoder
$\hat\lambda$ approximates the optimal
encoder $\lambda^\star$ given by explicitly solving
\cref{eq:elbo-opt}, which is
referred to the \emph{amortization gap} \citep{cremer2018inference}.
\cref{eq:elbo-opt} can be solved with a gradient-based optimizer
such as SGD or Adam \citep{kingma2014adam}.
\Cref{lst:evaluation} shows the key parts of the PyTorch code
for making this comparison, which can be run on the pre-trained
MNIST VAE with
\path{code/evaluate_amortization_speed_function_vae.py}.

\Cref{fig:vae-performance} shows that the amortized prediction from
the VAE's encoder predicts the solution to the ELBO \textbf{25000}
times faster (!) than running 2k iterations of Adam on
a batch of 1024 samples.
This is significant as \emph{every training iteration} of
the VAE requires solving \cref{eq:elbo-opt}, and a large model
may need millions of training iterations to converge.
Amortizing the solution makes the difference between the training
code running in a few hours instead of a few months if
the problem was solved from scratch to the same level of optimality.
Knowing only the ELBO values is not sufficient to gauge
the quality of approximate variational distributions.
To help understand the quality of the approximate solutions,
\cref{fig:vae-samples} plots out the decoded samples
alongside the original data.

\begin{figure}[t]
  \centering
  \resizebox{\textwidth}{!}{
  \hspace*{-4mm}
  \begin{tikzpicture}
    \node[align=left,anchor=north west] {\includegraphics[width=430pt]{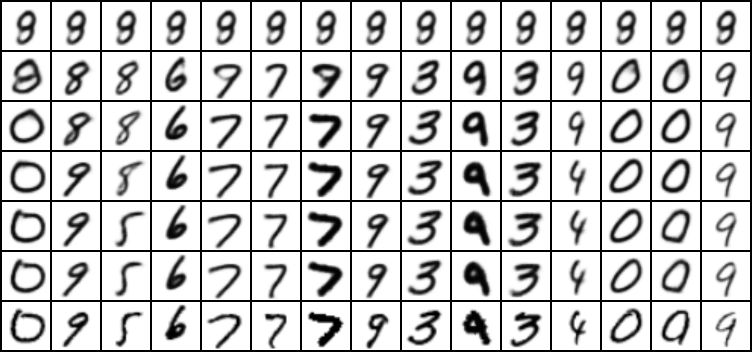}};
    \node[align=right,anchor=east] at (2mm,-6mm) (0) {0};
    \node[align=right,below=10.3mm of 0.east,anchor=east] (250) {${\rm Adam}_{250}$};
    \node[align=right,below=10.3mm of 250.east,anchor=east] (500) {${\rm Adam}_{500}$};
    \node[align=right,below=10.3mm of 500.east,anchor=east] (1000) {${\rm Adam}_{1000}$};
    \node[align=right,below=10.3mm of 1000.east,anchor=east] (2000) {${\rm Adam}_{2000}$};
    \node[align=right,below=10.3mm of 2000.east,anchor=east] (amor) {$\hat\lambda_\theta(x)$};
    \node[align=right,below=9.5mm of amor.east,anchor=east] (data) {Data};
  \end{tikzpicture}}
\caption{Decoded reconstructions of the variational distribution optimizing for the ELBO.
  ${\rm Adam}_n$ corresponds to the distribution from running Adam for $n$
  iterations, $\hat \lambda_\theta$ is the amortized approximation,
  and the ground-truth data, \ie the context, is shown in the bottom row.
}

  \label{fig:vae-samples}
\end{figure}

\begin{figure}[t]
\begin{lstlisting}
# amortization_model: maps contexts to a solution
# amortization_objective: maps an iterate and contexts to the objective

adam_lr, num_iterations = ...
contexts = sample_contexts()

# Predict the solutions with the amortization model
predicted_solutions = amortization_model(contexts)
amortized_objectives = amortization_objective(
    predicted_solutions, contexts
)

# Use Adam (or another torch optimizer) to solve for the solutions
iterates = torch.nn.Parameter(torch.zeros_like(predicted_solutions))
opt = torch.optim.Adam([iterates], lr=adam_lr)

for i in range(num_iterations):
    objectives = amortization_objective(iterates, contexts)
    opt.zero_grad()
    objective.backward()
    opt.step()
\end{lstlisting}
\caption{Evaluation code for comparing the amortized prediction $\hat y$
  to the true solution $y^\star$ solving \cref{eq:opt} with a
  gradient-based optimizer.
  The full instrumented version of this code is available in
  the repository associated with this tutorial at
  \texttt{\detokenize{code/evaluate_amortization_speed_function.py}}.
}
\label{lst:evaluation}
\end{figure}

\subsection{Control with a model-free value estimate}
\label{sec:impl:model-free}

\begin{figure}[t]
  \centering
  \includegraphics[width=0.49\textwidth]{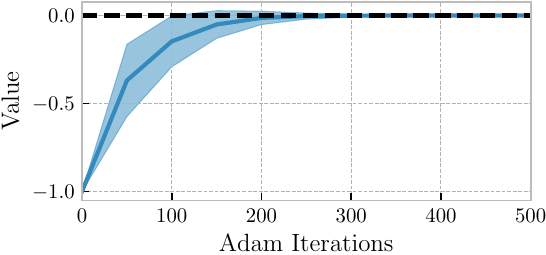}
  \hfill
  \includegraphics[width=0.49\textwidth]{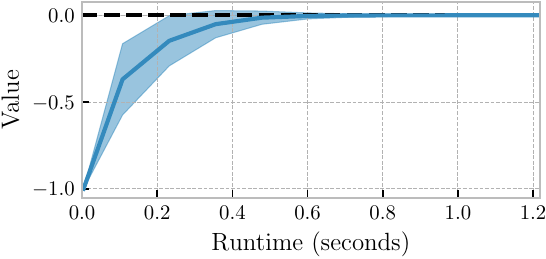} \\[5mm]
  \cblock{0}{0}{0} Policy $\pi(x)$ --- runtime: 0.65ms
  \caption{
    Runtime comparison between Adam and a learned policy $\pi_\theta$
    to solve \cref{eq:Q-opt} on the humanoid MDP.
    This was evaluated as a batch on an expert trajectory with
    1000 states and was run on an unloaded NVIDIA Quadro GP100 GPU.
    The values are normalized so that $\pi(x)=0$ takes a value of -1 and
    the optimal $\pi^\star$ takes a value of 0.
    The amortized policy is approximately
    1000 times faster than solving the
    problem from scratch.
  }
  \label{fig:model-free-performance}
\end{figure}

This section dives into the training and evaluation code for learning a
deterministic model-free policy $\pi_\theta: \gX\rightarrow\gY$ to amortize a model-free
value estimate $Q$ for controlling the humanoid MDP from
\citet{brockman2016openai}
visualized in \cref{fig:overview}.
This MDP has $|\gX|=45$ states (angular positions and velocities
describing the state of the system)
and $|\gY|=17$ actions (torques to apply to the joints).
A model-free policy $\pi$ maps the state to the optimal actions
that maximize the value on the system.
Given a known action-conditional value estimate $Q(x,u)$,
the optimal policy $\pi^\star$ solves the optimization problem in
\cref{eq:Q-opt} that the learned policy $\pi$ tries to match,
\eg using policy gradient in \cref{eq:dpg-loss}.

The codebase behind \citet{amos2021model} at
\url{https://github.com/facebookresearch/svg}
contains trained model-free policy and value estimates
on the humanoid in addition to model-based components
the next section will use.
The full training code there involves parameterizing
a stochastic policy and estimating many additional
model-based components, but the basic training loop
for amortizing a deterministic policy from the solution
there can be distilled into a form similar to
\cref{lst:vae}.

This section mostly focuses on evaluating the performance
of the trained model-free policy in comparison
to maximizing the model-free value estimate
\cref{eq:Q-opt} from scratch for every state encountered.
An exhaustive evaluation of a solver for \cref{eq:Q-opt}
would need to ensure that the solution is not overly
adapted to a bad part of the $Q$ estimate space ---
because $Q$ is also a neural network susceptible to
adversarial examples, it is very likely that directly
optimizing \cref{eq:Q-opt} may result in a deceptively
good policy when looking at the $Q$ estimate that does not
work well on the real system.
For simplicity, this section ignores these
issues and normalizes the values to $[-1,0]$ where
$-1$ will correspond to the value from taking a zero
action and $0$ will correspond to the value
from taking the expert's action.
(This is valid in this example because the zero
action and expert action never coincide.)

\Cref{fig:model-free-performance} shows that the
amortized policy is approximately 1000 times faster
than solving the problem from scratch.
The $Q$ values presented there are normalized and clamped
so that the expert policy has a value of zero and
the zero action has a value of -1.
This example can be run with
\path{code/evaluate_amortization_speed_function_control.py},
which shares the evaluation code also used for the VAE
in \cref{lst:evaluation}.

\subsection{Control with a model-based value estimate}
\label{sec:impl:model-based}

Extending the results of \cref{sec:impl:model-free},
this section compares the trained humanoid policy
from \url{https://github.com/facebookresearch/svg}
to solving a short-horizon ($H=3$) model-based control
optimization problem defined in \cref{eq:mpc}.
The optimal action sequence solving \cref{eq:mpc}
is $u^\star_{1:H}$ can be approximated by interleaving
a model-free policy $\pi_\theta$ with the dynamics $f$.
While standard model predictive control method are
often ideal for solving for $u^\star_{1:H}$ from scratch,
using Adam as a gradient-based shooting method is
a reasonable baseline in this short-horizon setting.

\Cref{fig:model-based-performance} shows that the
amortized policy is approximately 700 times faster
than solving the problem from scratch.
This model-based setting has the same issues with
the approximation errors in the models and the
model-based value estimate is again
normalized and clamped so that the expert
policy has a value of zero and the zero action has a value of -1.
The source code behind this example is also available in
\path{code/evaluate_amortization_speed_function_control.py}.

\begin{figure}[t]
  \centering
  \includegraphics[width=0.49\textwidth]{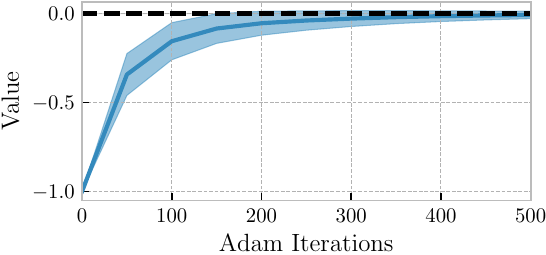}
  \hfill
  \includegraphics[width=0.49\textwidth]{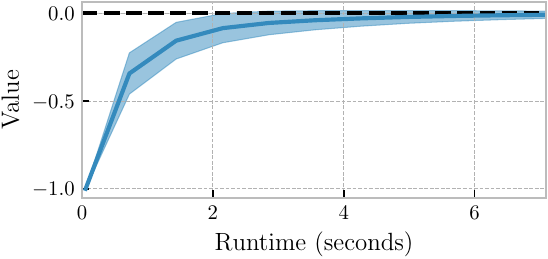} \\[5mm]
  \cblock{0}{0}{0} Policy $\pi(x)$ --- runtime: 5.8ms
  \caption{Runtime comparison between Adam and a learned policy $\pi_\theta$
    to solve a short-horizon ($H=3$) model-based control
    problem (\cref{eq:mpc}) on the humanoid MDP.
    This was evaluated as a batch on an expert trajectory with
    1000 states and was run on an unloaded NVIDIA Quadro GP100 GPU.
    The amortized policy is approximately
    700 times faster than solving the
    problem from scratch.
  }
  \label{fig:model-based-performance}
\end{figure}

\section{Training an amortization model on a sphere}
\label{sec:impl:sphere}

This section contains a new demonstration that applies
the insights from amortized optimization to learn to solve
optimization problems over spheres of the form
\begin{equation}
  y^\star(x) \in \argmin_{y\in\gS^2} f(y; x),
  \label{eq:sphere-opt-con}
\end{equation}
where $\gS^2$ is the surface of the \emph{unit 2-sphere}
embedded in $\R^3$ as $\gS^2\defeq \{y\in\R^3 \mid \|y\|_2=1\}$
and $x$ is some parameterization of the function
$f: \gS^2\times\gX\rightarrow \R$.
\Cref{eq:sphere-opt-con} is relevant to physical and
geographical settings seeking the extreme values of a
function defined on the Earth or other spaces that can
be approximated with a sphere.
The full source code behind this experiment is available
in \path{code/train-sphere.py}.

\textbf{Amortization objective.}
\Cref{eq:sphere-opt-con} first needs to be transformed
from a constrained optimization problem into an unconstrained
one of the form \cref{eq:opt}.
In this setting, one way of doing this
is by using a projection:
\begin{equation}
  y^\star(x) \in \argmin_{y\in\R^3} f(\pi_{\gS^2}(y); x),
  \label{eq:sphere-opt-proj}
\end{equation}
where $\pi_{\gS^2}: \R^3\rightarrow \gS^2$ is the
Euclidean projection onto $\gS^2$, \ie,
\begin{equation}
  \begin{aligned}
    \pi_{\gS^2}(x)\defeq& \argmin_{y\in\gS^2} \|y-x\|_2 \\
     =& \;x/\|x\|_2.
  \end{aligned}
  \label{eq:pi}
\end{equation}

\textbf{$c$-convex functions on the sphere.}
A synthetic class of optimization problems defined
on the sphere using the $c$-convex functions from
\citet{cohen2021riemannian} can be instantiated with:
\begin{equation}
  f(y; x) = {\textstyle \min_{\gamma}} \left\{\frac{1}{2} d(x,z_i)+\alpha_i\right\}_{i=1}^m
  \label{eq:rcpm}
\end{equation}
where $m$ components define the context
$x=\{z_i\} \cup \{\alpha_i\}$
with $z_i\in\gS^2$ and $\alpha_i\in\R$,
$d(x,y)\defeq \arccos(x^\top y)$ is the
Riemannian distance on the sphere in the
ambient Euclidean space, and
$\min_\gamma(a_1,\ldots,a_m)\defeq -\gamma\log\sum_{i=1}^m\exp(-a_i/\gamma)$
is a soft minimization operator
as proposed in \citet{cuturi2017soft}.
The context distribution $p(x)$ is sampled
with $z_i\sim \gU(\gS^2)$, \ie uniformly from the sphere,
and $\alpha_i\sim\gN(0,\beta)$
with variance $\beta\in\R_+$.

\textbf{Amortization model.}
The model $\hat y_\theta: \gX\rightarrow\R$
is a fully-connected MLP.
The predictions to \cref{eq:sphere-opt-con}
on the sphere can again be obtained by composing
the output with the projection
$\pi_{\gS^2}\circ \hat y_\theta$.

\textbf{Optimizing the gradient-based loss.}
Finally, it is reasonable to optimize the
gradient-based loss $\gL_{\rm obj}$ because
the objective and model are tractable and
easily differentiable.
\Cref{fig:sphere} shows the model's predictions
starting with the untrained model and finishing
with the trained model, showing that this setup
indeed enables us to predict the solutions to
\cref{eq:sphere-opt-con} with a single neural network
$\hat y_\theta(x)$ trained with the gradient-based loss.

\textbf{Summary.}
$\gA_{\rm sphere}\defeq (f\circ \pi_{\gS^2}, \R^3, \gX, p(x), \hat y_\theta, \gL_{\rm obj})$

\begin{figure}
  \includegraphics[width=0.25\textwidth]{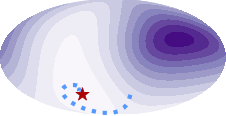}
  \hspace{-2.7mm}
  \includegraphics[width=0.25\textwidth]{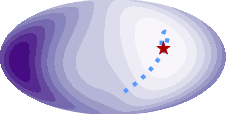}
  \hspace{-2.7mm}
  \includegraphics[width=0.25\textwidth]{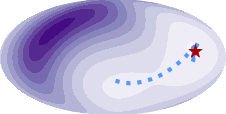}
  \hspace{-2.7mm}
  \includegraphics[width=0.25\textwidth]{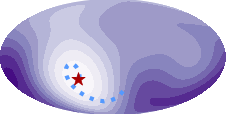} \\[-.8mm]
  \includegraphics[width=0.25\textwidth]{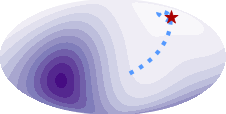}
  \hspace{-2.7mm}
  \includegraphics[width=0.25\textwidth]{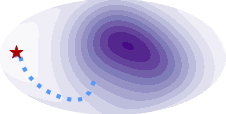}
  \hspace{-2.7mm}
  \includegraphics[width=0.25\textwidth]{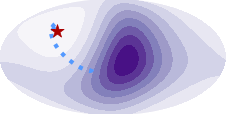}
  \hspace{-2.7mm}
  \includegraphics[width=0.25\textwidth]{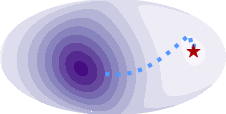} \\[-6mm]
  \begin{center}
    \cblock{73}{19}{134} $f(y; x)$ contours \;
    \cblock{170}{0}{0} Optimal $y^\star(x)$ \;
    \cblock{84}{153}{255} Predictions $\hat y_\theta(x)$ throughout training
  \end{center}
  \vspace{-3mm}
  \caption{Visualization of the predictions of an amortized
    optimization model predicting the solutions
    to optimization problems on the sphere.}
  \label{fig:sphere}
\end{figure}

\section{Other useful software packages}
\label{sec:impl:software}

Implementing semi-amortized models are usually more challenging
than fully-amortized models. Learning an optimization-based
model that internally solves an optimization problem is
not as widespread as learning a feedforward neural network.
While most autodiff packages provide standalone features to implement
unrolled gradient-based optimization, the following specialized
packages provide crucial features that further enable the
exploration of semi-amortized models:
\begin{itemize}
\item \href{https://github.com/cvxgrp/cvxpylayers}{cvxpylayers}
  \citep{agrawal2019differentiable}
  allows an optimization problem to be expressed in the
  high-level language \verb!CVXPY! \citep{diamond2016cvxpy}
  and exported to PyTorch, JAX, and TensorFlow
  as a differentiable optimization layers.
\item \href{https://github.com/google/jaxopt}{jaxopt}
  \citep{blondel2021efficient}
  is a differentiable optimization library for JAX
  and implements many optimization settings and fixed-point
  computations along with their implicit derivatives.
\item \href{https://github.com/facebookresearch/higher}{higher}
  \citep{grefenstette2019generalized}
  is a PyTorch library that adds differentiable higher-order
  optimization support with
  1) monkey-patched functional \verb!torch.nn! modules,
  and 2) differentiable versions of \verb!torch.optim!
  optimizers such as Adam and SGD.
  This enables arbitrary torch modules and optimizers
  to be unrolled and used as a semi-amortized model.
\item \href{https://github.com/metaopt/TorchOpt}{TorchOpt}
  provides a functional and differentiable optimizer in PyTorch
  and has higher performance than \verb!higher! in some cases.
\item \href{https://github.com/pytorch/functorch}{functorch}
  \citep{functorch2021} is a PyTorch library providing
  composable function transforms for batching and
  derivative operations, and for creating functional
  versions of PyTorch modules that can be used in
  optimization algorithms.
  All of these operations may arise in the implementation
  of an amortized optimization method and can become computational
  bottlenecks if not efficiently implemented.
\item \href{https://github.com/jump-dev/DiffOpt.jl}{DiffOpt.jl}
  provides differentiable optimization in Julia's JuMP
  \citep{DunningHuchetteLubin2017}.
\item \href{https://github.com/tristandeleu/pytorch-meta}{Torchmeta}
  \citep{deleu2019torchmeta} and
  \href{http://learn2learn.net}{learn2learn}
  \citep{arnold2020learn2learn}
  are PyTorch libraries and collection of meta-learning
  algorithms that also focus on making data-loading
  and task definitions easy.
\item \href{https://github.com/prolearner/hypertorch}{hypertorch}
  \citep{grazzi2020iteration}
  is a PyTorch package for computing hypergradients with a
  large focus on providing computationally efficient approximations
  to them.
\end{itemize}

\chapter{Discussion}
\label{sec:discussion}
Many of the specialized methods discuss tradeoffs and
limitations within the context of their application,
and more generally papers such as
\citet{chen2021learning,metz2021gradients}
provide even deeper probes into general paradigms for
learning to optimize.
This section emphasizes a few additional discussion points
around amortized optimization.

\section{Surpassing the convergence rates of
  classical methods}
\label{sec:convergence}
Theoretical and empirical optimization research often focuses
on discovering algorithms with theoretically strong convergence
rates in general or worst-case scenarios.
Many of the algorithms with the best convergence
rates are used as the state-of-the-art algorithms in practice,
such as momentum and acceleration methods.
Amortized optimization methods can surpass the results
provided by classical optimization methods because they
are capable of tuning the initialization and updates
to the best-case scenario within the distribution of
contexts the amortization model is trained on.
For example, the fully amortized models for amortized variational
inference and model-free actor-critic methods for RL
presented in \cref{sec:impl:eval} solve
the optimization problems \emph{in constant time} with just
a single prediction of the solution from the context without
even looking at the objective!
Further theoretical characterizations of this are provided
in \citet{khodak2022learning} and related literature on
algorithms with predictions.

\section{Generalization and convergence guarantees}
\label{sec:generalization}
Despite having powerful successes of amortized optimization in
some settings, the field struggles to bring strong success
in other domains.
Despite having the capacity of surpassing the convergence rates of
other algorithms, oftentimes in practice amortized optimization
methods can deeply struggle to generalize and converge to
reasonable solutions.
In some deployments this inaccuracy may be acceptable if there
is a quick way of checking the quality of the amortized model,
\eg the residuals for fixed-point and convex problems.
If that is the case, then poorly-solved instances can be flagged
and re-solved with a standard solver for the problem that
may incur more computational time for that instance.
\citet{sambharya2022l2ws} presents generalization bounds for learned
warm-starts based on Rademacher complexity,
and \citet{sambharya2023l2ws,sucker2024generalization} investigate PAC-Bayes generalization bounds.
\citet{banert2021accelerated,premont2022simple} add provable
convergence guarantees to semi-amortized models by guarding
the update and ensuring the learned optimizer does not
does not deviate too much from a known convergent algorithm.
A practical takeaway is that some models are more likely
to result in convergent and stable semi-amortized models
than others.
For example, the semi-amortized model
parameterized with gradient descent (which
has some mild converge guarantees) in \citet{finn2017model}
is often more stable than the semi-amortized model parameterized
by a sequential model (without many convergence guarantees)
in \citep{ravi2016optimization}.
Other modeling and architecture tricks such as layer
normalization \citep{ba2016layernorm} help improve the
stability of amortized optimization models.
Additionally, \citet{fahy2024greedya} investigates learning preconditioners
and prove that their parameterization of the preconditioning space
always results in a convergent optimizer.

\section{Measuring performance}
Quantifying the performance of amortization models can be even more
challenging than the choice between using a regression- or
objective-based loss and is often tied to
problem-specific metrics that are important.
For example, even if a method is able to attain low objective values
in a few iterations, the computation may take \emph{longer} than
a specialized algorithm or another amortization model that can reach
the same level of accuracy, thus not making it useful for the
original goal of speeding up solves to \cref{eq:opt}.

\section{Successes and limitations of amortized optimization}
While amortized optimization has standout applications in
variational inference, reinforcement learning, and meta-learning,
it struggles to bring value in other settings.
Often, learning the amortized model is computationally more
expensive than solving the original optimization problems and
brings instabilities into a higher-level learning or optimization
process deployed on top of potentially inaccurate solutions
from the amortized model.
This section summarizes principles behind successful applications
of amortized optimization and characterize limitations that
may arise.

\subsection*{Characteristics of successful applications}
\begin{itemize}
\item \textbf{Objective $f(y; x)$ is smooth over the domain $\gY$
  and has unique solutions $y^\star$.}
  With objective-based learning, non-convex objectives with
  few poor local optima are ideal.
  This behavior can be encouraged with smoothing as is
  often done for meta-learning and policy learning (\cref{sec:smooth}).
\item \textbf{A higher-level process should tolerate sub-optimal
  solutions given by $\hat y$ in the beginning of training.}
  In variational encoders, the suboptimal bound on the likelihood
  is still acceptable to optimize the density model's parameters over
  in \cref{eq:vae-full}.
  And in reinforcement learning policies,
  a suboptimal solution to the maximum value problem is still
  acceptable to deploy on the system in early phases of training,
  and may even be desirable for the exploration induced by randomly
  initialized policies.
\item \textbf{The context distribution $p(x)$ is not too big and
  well-scoped and deployed on a specialized class of sub-problems.}
  For example, instead of trying to amortize the solution to
  \emph{every} possible $\ELBO$ maximization, VAEs
  amortize the problem only over the dataset the density
  model is being trained on.
  And in reinforcement learning, the policy $\pi_\theta$ doesn't
  try to amortize the solution to \emph{every} possible control
  problem, but instead focuses only on amortizing the solutions
  to the control problems on the replay buffer of the
  specific MDP.
\item \textbf{In semi-amortized models, parameterizing the initialization
    and specialized components for the updates.}
  While semi-amortized models are a thriving research topic,
  the most successful applications of them:
  \begin{enumerate}
  \item \textbf{Parameterize and learn the initial iterate.}
    MAML \citep{finn2017model} \emph{only} parameterizes the initial
    iterate and follows it with gradient descent steps.
    \citet{bai2022neural} parameterizes
    the initial iterate and follows it with accelerated
    fixed-point iterations.
  \item \textbf{Parameterize and learn specialized components of the
    updates.} In sparse coding, LISTA \citep{gregor2010learning}
    only parameterized $\{F,G,\beta\}$ instead of the
    entire update rule.
    \citet{bai2022neural} only parameterizes $\alpha,\beta$
    after the initial iterate, and
    RLQP \citep{ichnowski2021accelerating} only parameterizing $\rho$.
  \end{enumerate}
  While using a pure sequence model to update a sequence of
  iterations is possible and theoretically satisfying as it
  gives the model the power to arbitrarily update the sequence
  of iterates, in practice this can be unstable and severely
  overfit to the training instances.
  \citet{metz2021gradients} observes, for example, that semi-amortized
  recurrent sequence models induce chaotic behaviors
  and exploding gradients.
\end{itemize}

\subsection*{Limitations and failures}
\begin{itemize}
\item \textbf{Amortized optimization does \emph{not} magically solve otherwise
  intractable optimization problems!}
  At least not without significant insights.
  In most successful settings, the original optimization problem can be
  (semi-)tractably solved for a context $x$ with classical methods,
  such as using standard black-box variational inference
  or model-predictive control methods.
  Intractabilities indeed start arising when repeatedly solving
  the optimization problem, even if a single one can be reasonably solved,
  and amortization often thrive in these settings to rapidly solve
  problems with similar structure.
\item \textbf{The combination of $p(x)$ and $y^\star(x)$ are too hard
  for a model to learn.} This could come from $p(x)$ being too
  large, \eg contexts of every optimization problem in the universe,
  or the solution $y^\star(x)$ not being smooth or predictable.
  $y^\star(x)$ may also not be unique, but this is perhaps easier
  to handle if the loss is carefully set up, \eg objective-based
  losses handle this more nicely.
\item \textbf{The domain requires accurate solutions.}
  Even though metrics that measure the solution quality of $\hat y$
  can be defined on top of \cref{eq:opt}, amortized methods
  typically cannot rival the accuracy of standard algorithms
  used to solve the optimization problems.
  In these settings, amortized optimization still has the
  potential at uncovering new foundations and algorithms
  for solving problems, but is non-trivial to
  successfully demonstrate.
  From an amortization perspective, one difficulty of safety-critical
  model-free reinforcement learning comes from needing to
  ensure the amortized policy properly optimizes a
  value estimate that (hopefully) encodes safety-critical
  properties of the state-action space.
\end{itemize}

\section{Some open problems and under-explored directions}
In most domains, introducing or significantly improving amortized
optimization is extremely valuable and will likely be well-received.
Beyond this, there are many under-explored directions and
combinations of ideas covered in this tutorial that can
be shared between the existing fields using amortized optimization,
for example:

\begin{enumerate}
\item \textbf{Overcoming local minima with objective-based losses
    and connections to stochastic policies.}
  \Cref{sec:smooth} covered the objective smoothing by
  \citet{metz2019understanding,merchant2021learn2hop}
  to overcome suboptimal local minima in the objective.
  These have striking similarities to stochastic policies
  in reinforcement learning that also overcome local
  minima, \eg in \cref{eq:Q-opt-sto-exp}.
  The stochastic policies, such as in \citet{haarnoja2018soft},
  have the desirable property of starting with a high variance
  and then focusing in on a low-variance solution with a
  penalty constraining the entropy to a fixed value.
  A similar method is employed in GECO \citep{rezende2018taming}
  that adjusts a Lagrange multiplier in the ELBO objective
  to achieve a target conditional log-likelihood.
  These tricks seem useful to generalize and apply to
  other amortization settings to overcome poor minima.
\item \textbf{Widespread and usable amortized convex solvers.}
  When using off-the-shelf optimization packages such as
  \citet{diamond2016cvxpy,o2016conic,stellato2018osqp},
  users are likely solving many similar problem instances
  that amortization can help improve.
  \citet{venkataraman2021neural,ichnowski2021accelerating}
  are active research directions that study adding
  amortization to these solvers, but they do not scale
  to the general online setting that also doesn't
  add too much learning overhead for the user.
\item \textbf{Improving the wall-clock training time
    of implicit models and differentiable optimization.}
  Optimization problems and fixed-point problems
  are being integrated into machine learning models,
  such as with differentiable optimization
  \citep{domke2012generic,gould2016differentiating,amos2017optnet,amos2019differentiable,agrawal2019differentiable,lee2019meta}
  and deep equilibrium models
  \citep{bai2019deep,bai2020multiscale}.
  In these settings, the data distribution the model
  is being trained on naturally induces a distribution over
  contexts that seem amenable to amortization.
  \citet{venkataraman2021neural,bai2022neural}
  explore amortization in these settings, but often do not
  improve the wall-clock time it takes to train these models
  from scratch.
\item \textbf{Understanding the amortization gap.}
  \citet{cremer2018inference} study the \emph{amortization gap}
  in amortized variational inference, which measures how well the
  amortization model approximates the true solution.
  This crucial concept should be analyzed in most amortized
  optimization settings to understand the accuracy of
  the amortization model.
\item \textbf{Implicit differentiation and shrinkage.}
  \citet{chen2019modular,rajeswaran2019meta} show that penalizing
  the amortization objective can significantly improve the
  computational and memory requirements to train a semi-amortized
  model for meta-learning. Many of the ideas in these settings
  can be applied in other amortization settings,
  as also observed by \citet{huszar2019imaml}.
\item \textbf{Distribution shift of $p(x)$ and out-of-distribution generalization.}
  This tutorial has assumed that $p(x)$ is fixed and remains
  the same through the entire training process.
  However, in some settings $p(x)$ may shift over time, which
  could come from 1) the data generating process naturally
  changing, or 2) a \emph{higher-level} learning process
  also influencing $p(x)$.
  Furthermore, after training on some context distribution $p(x)$,
  a deploy model is likely not going to be evaluated on the
  same distribution and should ideally be resilient
  to out-of-distribution samples.
  The out-of-distribution performance can often be measured
  and quantified and reported alongside the model.
  Even if the amortization model fails at optimizing \cref{eq:opt},
  it's detectable because the optimality conditions of
  \cref{eq:opt} or other solution quality metrics can be checked.
  If the solution quality isn't high enough, then a slower
  optimizer could potentially be used as a fallback.
\item \textbf{Amortized and semi-amortized control and reinforcement learning.}
  Applications of semi-amortization in control and reinforcement learning
  covered in \cref{sec:apps:ctrl} are budding and
  learning sample-efficient optimal controllers is
  an active research area, especially in model-based settings
  where the dynamics model is known or approximated.
  \citet{amos2019dcem} shows how amortization can learn latent
  control spaces that are aware of the structure of the
  solutions to control problems.
  \citet{marino2020iterative} study semi-amortized methods
  based on gradient descent and show that they better-amortize
  the solutions than the standard fully-amortized models.
\end{enumerate}

\section{Related work}
\subsection{Other tutorials, reviews, and discussions
  on amortized optimization}
My goal in writing this tutorial was to provide a perspective
of existing amortized optimization methods for learning
to optimize with a categorization of the
modeling (fully-amortized and semi-amortized)
and learning (gradient-based, objective-based, or RL-based)
aspects that I have found useful and have not seen
emphasized as much in the literature.
The other tutorials and reviews on
amortized optimization, learning to optimize, and
meta-learning over continuous domains
that I am aware of are excellent resources:

\begin{itemize}
\item \citet{chen2021learning} captures many other emerging areas
  of learning to optimize and discuss many other modeling paradigms
  and optimization methods for learning to optimize, such as
  plug-and-play methods \citep{venkatakrishnan2013plug,meinhardt2017learning,rick2017one,zhang2017learning}.
  They emphasize the key aspects and questions to tackle as a community,
  including model capacity, trainability, generalization, and
  interpretability.
  They propose \emph{Open-L2O} as a new benchmark for
  learning to optimize and review many other applications,
  including sparse and low-rank regression, graphical models,
  differential equations, quadratic optimization, inverse problems,
  constrained optimization, image restoration and reconstruction,
  medical and biological imaging, wireless communications,
  seismic imaging.
\item \citet{shu2017amortized} is a blog post that discusses
  fully-amortized models with gradient-based learning
  and includes applications in variational inference,
  meta-learning, image style transfer,
  and survival-based classification.
\item \citet{weng2018metalearning} is a blog post
  with an introduction and review of meta-learning methods.
  After defining the problem setup, the review discusses
  metric-based, model-based, and optimization-based approaches,
  and discusses approximations to the second-order derivatives
  that come up with MAML.
\item \citet{hospedales2020meta} is a review focused on meta-learning,
  where they categorize meta-learning components into a
  meta-representation, meta-optimizer, and meta-objective.
  The most relevant connections to amortization here are that
  the meta-representation can instantiate an
  amortized optimization problem that is solved with the
  meta-optimizer.
\item \citet{kim2020deep} is a dissertation on deep
  latent variable models for natural language
  and contextualizes and studies the use of amortization and
  semi-amortization in this setting.
\item \citet{marino2021learned} is a dissertation on learned
  feedback and feedforward information for perception and control
  and contextualizes and studies the use of amortization and
  semi-amortization in these settings.
\item \citet{monga2021algorithm} is a review on
  algorithm unrolling that starts with the unrolling
  in LISTA \citep{gregor2010learning} for amortized
  sparse coding, and then connects to other methods
  of unrolling specialized algorithms.
  While some unrolling methods have applications in
  semi-amortized models, this review also considers
  applications and use-cases beyond just
  amortized optimization.
\item \citet{banert2020data} consider theoretical foundations
  for data-driven nonsmooth optimization and show applications
  in deblurring and solving inverse problems for
  computed tomography.
\item \citet{liu2022teaching} study fully-amortized
  models based on deep sets \citep{zaheer2017deep}
  and set transformers \citep{lee2019set}.
  They consider regression- and objective-based losses
  for regression, PCA, core-set creation, and
  supply management for cyber-physical systems.
\item \citet{vanhentenryck2025optimizationlearning} presents an overview
  of learned optimization methods arising in power systems,
  for real-time risk assessment and security-constrained optimal power flow.
\end{itemize}

\subsection{Amortized optimization over discrete domains}
A significant generalization of \cref{eq:opt} is to optimization
problems that have discrete domains,
which includes combinatorial optimization
and mixed discrete-continuous optimization.
I have chosen to not include these works in this tutorial
as many methods for discrete optimization are significantly
different from the methods considered here, as learning with
derivative information often becomes impossible.
Key works in discrete and combinatorial spaces include
\citet{khalil2016learning,dai2017learning,jeong2019learning,bertsimas2019online,shao2021learning,bertsimas2021voice,cappart2021combinatorial}
and the surveys
\citep{lodi2017learning,bengio2021machine,kotary2021end}
capture a much broader view of this space.
\citet{banerjee2015efficiently} consider repeated ILP solves
and show applications in aircraft carrier deck scheduling and vehicle routing.
For architecture search, \citet{luo2018neural} learn a continuous
latent space behind the discrete architecture space.
Many reinforcement learning and control methods over discrete
spaces can also be seen as amortizing or semi-amortizing the
discrete control problems, for example:
\citet{cauligi2020learning,cauligi2021coco} use regression-based
amortization to learn mixed-integer control policies.
\citet{fickinger2021scalable} fine-tune the policy
optimizer for every encountered state.
\citet{tennenholtz2019natural,chandak2019learning,van2020q}
learn latent action spaces for high-dimensional
discrete action spaces with shared structure.

\subsection{Learning-augmented and amortized algorithms beyond optimization}
While many algorithms can be interpreted as solving an
optimization problems or fixed-point computations and
can therefore be improved with amortized optimization,
it is also fruitful to use learning to improve
algorithms that have nothing to do with optimization.
Some key starting references in this space include
data-driven algorithm design \citep{balcan2020data},
algorithms with predictions
\citep{dinitz2021faster,sakaue2022discrete,chen2022faster,khodak2022learning},
learning to prune \citep{alabi2019learning},
learning solutions to differential equations
\citep{li2020fourier,poli2020hypersolvers,karniadakis2021physics,kovachki2021universal,chen2021solving,blechschmidt2021three,marwah2021parametric,berto2021neural}
learning simulators for physics \citep{grzeszczuk1998neuroanimator,ladicky2015data,he2019learning,sanchez2020learning,wiewel2019latent,usman2021machine,vinuesa2021potential},
and learning for symbolic math
\citep{lample2019deep,charton2021linear,charton2021deep,drori2021neural,dascoli2022deep}
\citet{salimans2022progressive} progressively amortizes a
sampling process for diffusion models.
\citet{schwarzschild2021can} learn recurrent neural networks
to solve algorithmic problems for prefix sum, mazes, and chess.

\subsection{Continuation and homotopy methods}
Amortized optimization settings share a similar motivation to
continuation and homotopy methods that have been studied for
over four decades
\citep{richter1983continuation,watson1989modern,allgower2012numerical}.
These methods usually set the context space to be the
interval $\gX=[0,1]$ and simultaneously solve (without learning)
problems along this line.
This similarity indicates that problem classes typically
studied by continuation and homotopy methods could also benefit
from the shared amortization models here.

\newpage
\section*{Acknowledgments}
I would like to thank
Nil-Jana Akpinar,
Alfredo Canziani,
Samuel Cohen,
Georgina Hall,
Misha Khodak,
Boris Knyazev,
Hane Lee,
Joe Marino,
Maximilian Nickel,
Paavo Parmas,
Rajiv Sambharya,
Jens Sj\"olund,
Bartolomeo Stellato,
Alex Terenin,
Eugene Vinitsky,
Atlas Wang,
and
Arman Zharmagambetov
for insightful discussions
and feedback on this tutorial.
I am also grateful to the anonymous FnT reviewers
who gave a significant amount of helpful and
detailed feedback.

{\footnotesize\bibliography{amor}}

\end{document}